\documentclass{article}

\PassOptionsToPackage{numbers}{natbib}


\usepackage{iclr2023_conference,times}
\usepackage[font={small}]{caption}
\usepackage[font={scriptsize}]{subcaption}
\usepackage{paralist}
\usepackage{enumitem}

\usepackage{sidecap}
\usepackage{wrapfig}

\usepackage{amssymb}
\usepackage{amsmath}
\usepackage{amsthm}

\DeclareMathOperator*{\argmax}{argmax}
\DeclareMathOperator{\argmin}{argmin}
\usepackage{color}
\usepackage{xcolor}
\usepackage{bm}
\usepackage{multirow}
\definecolor{navy}{rgb}{0,0,0.0}
\usepackage[unicode=true] {hyperref}
\definecolor{mydarkblue}{rgb}{0,0.08,0.45}
\definecolor{airforceblue}{rgb}{0.36, 0.54, 0.66}
\hypersetup{
    colorlinks=true,
    linkcolor=airforceblue,
    filecolor=airforceblue,      
    urlcolor=airforceblue,
    citecolor=airforceblue,
    }

\usepackage{amsfonts} 
\DeclareMathSymbol{\shortminus}{\mathbin}{AMSa}{"39}

\usepackage{color}
\usepackage{xcolor}

\definecolor{navy}{rgb}{0,0,0.0}
\definecolor{airforceblue}{rgb}{0.36, 0.54, 0.66}
\definecolor{mydarkblue}{rgb}{0,0.08,0.45}

\usepackage[ruled]{algorithm2e}
\usepackage{arydshln}

\definecolor{blue}{rgb}{0.0, 0.0, 1.0}
\definecolor{olive}{rgb}{0.5019607843137255, 0.5019607843137255, 0.0}
\definecolor{red}{rgb}{1.0, 0.0, 0.0}
\definecolor{green}{rgb}{0.0, 0.85, 0.0}
\definecolor{maroon}{rgb}{0.5019607843137255, 0.0, 0.0}

\usepackage[utf8]{inputenc} 
\usepackage[T1]{fontenc}    
\usepackage{hyperref}       
\usepackage{url}            
\usepackage{booktabs}       
\usepackage{amsfonts}       
\usepackage{nicefrac}       
\usepackage{microtype}      
\usepackage{xcolor}         
\usepackage{graphicx}
\usepackage{bbm}

\newcommand{\bx}{\bm{x}}

\newcommand{\by}{\bm{y}}

\newcommand{\bW}{\bm{w}}

\newcommand{\R}{\mathbb{R}}

\newcommand{\E}{\mathbb{E}}

\newcommand{\bE}{\bm{e}}
\newcommand{\var}{\mathrm{var}} 
\newcommand{\F}{\mathcal{F}}
\newcommand{\Cr}{\mathbb{C}}
\newcommand{\Ec}{\mathcal{E}}

\newcommand{\hW}{{\bW}}
\newcommand{\dW}{\Delta\bW}
\newcommand{\DW}{\Delta}
\newcommand{\iX}{\mathbb{X}}
\newcommand{\SXX}{\Sigma_{\scriptscriptstyle XX}}
\newcommand{\SYX}{\Sigma_{\scriptscriptstyle YX}}

\definecolor{airforceblue}{rgb}{0.36, 0.54, 0.66}
\newcommand{\hlt}[1]{{\color{airforceblue} #1}}
\newcommand{\texp}[1]{\quad\mathrm{\hlt{(#1)}}}

\newcommand{\myparagraph}[1]{\smallskip\noindent\textbf{#1}}
\newcommand{\myparagpar}[1]{\noindent\textbf{#1}}

\newtheorem{lemma}{Lemma}

\newtheorem{defn}{Definition}

\newtheorem*{result*}{Theorem}

\title{\centering The Dynamic of Consensus in Deep Networks\\ and the Identification of Noisy Labels}

%


\author{Daniel Shwartz, Uri Stern \& Daphna Weinshall\\
School of Computer Science and Engineering\\
Hebrew University of Jerusalem\\
Jerusalem, Israel \\
\texttt{\{Daniel.Shwartz1,Uri.Stern,daphna\}@mail.huji.ac.il} \\
}

\begin{document}

\maketitle

\begin{abstract}

Deep neural networks have incredible capacity and expressibility, and can seemingly memorize any training set. This introduces a problem when training in the presence of noisy labels, as the noisy examples cannot be distinguished from clean examples by the end of training. Recent research has dealt with this challenge by utilizing the fact that deep networks seem to memorize clean examples much earlier than noisy examples. Here we report a new empirical result: for each example, when looking at the time it has been memorized by each model in an ensemble of networks, the diversity seen in noisy examples is much larger than the clean examples. We use this observation to develop a new method for noisy labels filtration. The method is based on a statistics of the data, which captures the differences in ensemble learning dynamics between clean and noisy data. We test our method on three tasks: (i) noise amount estimation; (ii) noise filtration; (iii) supervised classification. We show that our method improves over existing baselines in all three tasks using a variety of datasets, noise models, and noise levels. Aside from its improved performance, our method has two other advantages. (i) Simplicity, which implies that no additional hyperparameters are introduced. (ii) Our method is modular: it does not work in an end-to-end fashion, and can therefore be used to clean a dataset for any other future usage.

\end{abstract}

\section{Introduction}
\label{sec:intro}


Deep neural networks dominate the state of the art in an ever increasing list of application domains, but for the most part, this incredible success relies on very large datasets of annotated examples available for training. Unfortunately, large amounts of high-quality annotated data are hard and expensive to acquire, whereas cheap alternatives (obtained by way of crowd-sourcing or automatic labeling, for example) often introduce noisy labels into the training set. By now there is much empirical evidence that neural networks can memorize almost every training set, including ones with noisy and even random labels \cite{zhang2017ICLR}, which in turn increases the generalization error of the model. As a result, the problems of identifying the existence of label noise and the separation of noisy labels from clean ones, are becoming more urgent and therefore attract increasing attention. 

Henceforth, we will call the set of examples in the training data whose labels are correct "clean data", and the set of examples whose labels are incorrect "noisy data". While all labels can be eventually learned by deep models, it has been empirically shown that most noisy datapoints are learned by deep models late, after most of the clean data has already been learned \citep{arpit2017closer}. Therefore many methods focus on the learning time of an example in order to classify it as noisy or clean, by looking at its loss \citep{pleiss2020identifying,arazo2019unsupervised} or loss per epoch \citep{li2020dividemix} in a single model. However, these methods struggle to classify correctly clean and noisy datapoints that are learned at the same time, or worse - noisy datapoints that are learned early. Additionally, many of these methods work in an end-to-end manner, and thus neither provide noise level estimation nor do they deliver separate sets of clean and noisy data for novel future usages.

\begin{wrapfigure}{R}{0.6\textwidth}
\centering
\vspace{-0.45cm}
\fbox{\includegraphics[width=0.565\textwidth]{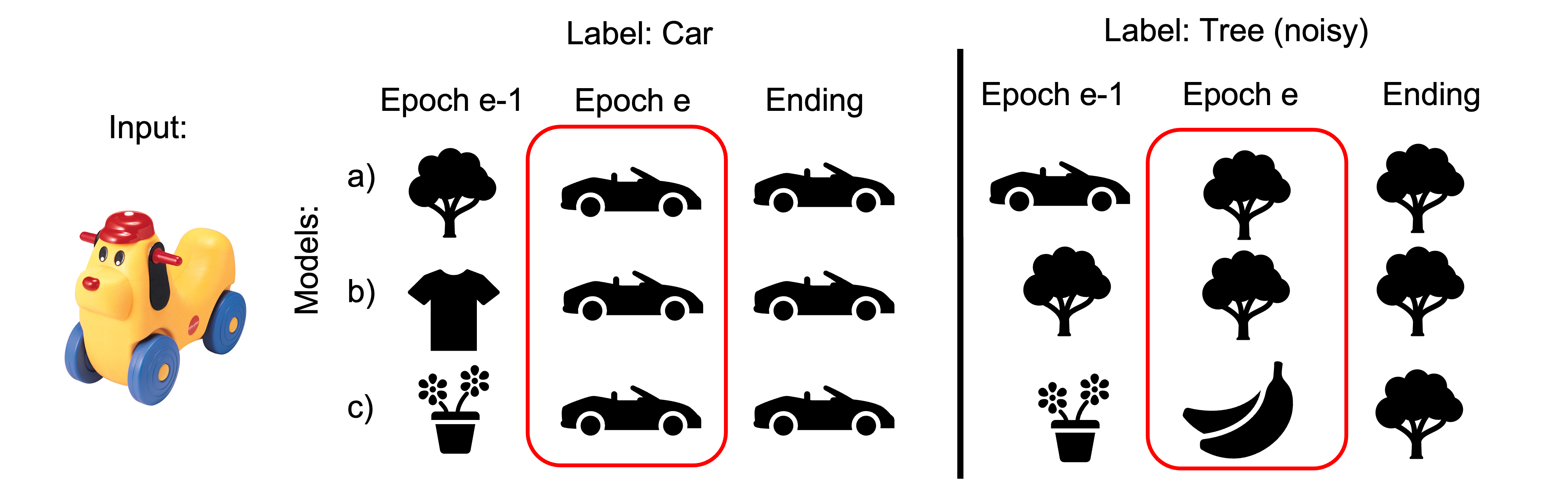}
\vspace{-0.5cm}}
\caption{With noisy labels models show higher disagreement. The noisy examples are not only learned at a later stage, but each model learns the example at its own different time.}
\vspace{-0.25cm}
\label{fig:1}
\end{wrapfigure}

Our first contribution is a new empirical results regarding the learning dynamics of an ensemble of deep networks, showing that the dynamics is different when training with clean data vs. noisy data. The dynamics of clean data has been studied in \citep{hacohen2020let,pliushch2021deep}, where it is reported that different deep models learn examples in the same order and pace. This means that when training a few models and comparing their predictions, a binary occurrence (approximately) is seen at each epoch $e$: either all the networks correctly predict the example's label, or none of them does. This further implies that for the most part, the distribution of predictions across points is bimodal. Additionally, a variety of studies showed that the bias and variance of deep networks decrease as the networks complexity grow \citep{nakkiran2021deep,neal2018modern}, providing additional evidence that different deep networks learn data at the same time simultaneously. 

In Section~\ref{sec:agg-overfit} we describe a new empirical result: when training an ensemble of deep models with noisy data, and \emph{in contrast to what happens when using clean data, different models learn different datapoints at different times} (see Fig.~\ref{fig:1}). This empirical finding tells us that in an ensemble of networks, the learning dynamics of clean data and noisy data can be distinguished. When training such an ensemble with a mixture of clean and noisy data, the emerging dynamics reflects this observation, as well as the tendency of clean data to be learned faster as previously observed. 

In our second contribution, we use this result to develop a new algorithm for noise level estimation and noise filtration, which we call \emph{DisagreeNet} (see Section~\ref{sec:approach}). Importantly, unlike most alternative methods, our algorithm is simple (it does not introduce any new hyperparameters), parallelizable, easy to integrate with any supervised or semi-supervised learning method and any loss function, and does not rely on prior knowledge of the noise amount. When used for noise filtration, our empirical study (see Section~\ref{sec:empirical}) shows the superiority of \emph{DisagreeNet} as compared to the state of the art, using different datasets, different noise models and different noise levels. When used for supervised classification by way of pre-processing the training set prior to training a deep model, it provides a significant boost in performance, more so than alternative methods. 

\myparagpar{Relation to prior art}

Work on the dynamics of learning in deep models has received increased attention in recent years \citep[e.g.,][]{nguyen2020wide,hacohen2020let,baldock2021deep}. Our work adds a new observation to this body of knowledge, which is seemingly unique to an ensemble of deep models (as against an ensemble of other commonly used classifiers). Thus, while there exist other methods that use ensembles to handle label noise  \citep[e.g.,][]{SABZEVARI20182374,Feng2020Adaptive,Ying21Destect,Moura18Ensemble}, for the most part they cannot take advantage of this characteristic of deep models, and as a result are forced to use additional knowledge, typically the availability of a clean validation set and/or prior knowledge of the noise amount. 

Work on deep learning with noisy labels (see \cite{song2022learning} for a recent survey) can be coarsely divided to two categories: general methods that use a modified loss or network's architecture, and methods that focus on noise identification. The first group includes methods that aim to estimate the underlying noise transition matrix \citep{goldberger2016training,patrini2017making}, employ a noise-robust loss \citep{ghosh2017robust,zhang2018generalized,wang2019symmetric,xu2019l_dmi}, or achieve robustness to noise by way of regularization \citep{tanno2019learning,jenni2018deep}. Methods in the second group, which is more inline with our approach, focus more directly on noise identification. Some methods assume that clean examples are usually learned faster than noisy examples \citep[e.g.][]{liu2020early}. Others \citep{arazo2019unsupervised,li2020dividemix} generate soft labels by interpolating the given labels and the model's predictions during training. Yet other methods \citep{jiang2018mentornet,han2018co,malach2017decoupling,yu2019does,lec}, like our own, inspect an ensemble of networks, usually in order to transfer information between networks and thus avoid agreement bias. 

Notably, we also analyze the behavior of ensembles in order to identify the noisy examples, resembling \citep{pleiss2020identifying,nguyen2019self,lec}. But unlike these methods, which track the loss of the networks, we track the dynamics of the agreement between multiple networks over epochs. We then show that this statistics is more effective, and achieves superior results. Additionally (and not less importantly), unlike these works, we do not assume prior knowledge of the noise amount or the presence of a clean validation set, and do not introduce new hyper-parameters in our algorithm. 

Recently, the emphasis has somewhat shifted to the use of semi-supervised learning and contrastive learning \citep{li2020dividemix,liu2020early,MOIT, JoCoR,JoSRC,zheltonozhskii2022contrast, Sel-CL,UNICON}. Semi-supervised learning is an effective paradigm for the prediction of missing labels. This paradigm is especially useful when the identification of noisy points cannot be done reliably, in which case it is advantageous to remove labels whose likelihood to be true is not negligible. The effectiveness of semi-supervised learning in providing reliable pseudo-labels for unlabeled points will compensate for the loss of clean labels. 

However, semi-supervised learning is not universally practical as it often relies on the extraction of effective representations based on unsupervised learning tasks, which typically introduces implicit priors (e.g., that contrastive loss is appropriate). In contrast, our goal is to reliably identify noisy points, to be subsequently removed. Thus, our method can be easily incorporated into any SOTA method which uses supervised or semi-supervised learning (with or without contrastive learning), and may provide benefit even when semi-supervised learning is not viable.

\section{Inter-Network Agreement: Definition and Scores}

Measuring the similarity between deep models is not a trivial challenge, as modern deep neural networks are complex functions defined by a huge number of parameters, which are invariant to transformations hidden in the model's architecture. Here we measure the similarity between deep models in an ensemble by measuring inter-model prediction agreement at each datapoint. Accordingly, in Section~\ref{sec:per-epoch} we describe scores that are based on the state of the networks at each epoch $e$, while in Section~\ref{sec:cum-scores} we describe cumulative scores that integrate these states through many epochs. Practically (see Section~\ref{sec:approach}), our proposed method relies on the cumulative scores, which are shown empirically to provide more accurate results in the noise filtration task. These scores promise added robustness, as it is no longer necessary to identify the epoch at which the score is to be evaluated.

\vspace{-.5em}
\subsection{Preliminaries}
\myparagpar{Notations} 
Let $f^e:\mathbb{R}^{d}\to {[0,1]}^{|C|}$ denote a deep model, trained with Stochastic Gradient Descent (SGD) for $e$ epochs on training set $\iX = \left\{(\bx_i,y_i)\right \}_{i=1}^M$, where $\bx_i\in\mathbb{R}^{d}$ denotes a single example and $y_i\in [C]$ its corresponding label. Let $\F^e(\iX)=\{f_1^e,...,f_{N}^e\}$ denote an ensemble of $N$ such models, where each model $f_{i\in [N]}^e$ is initialized and trained independently on $\iX$. 

\myparagpar{Noise model} We analyze the training dynamics of an ensemble of models in the presence of label noise. Label noise is different from data noise (like image distortion or additive Gaussian noise). Here it is assumed that after the training set $\iX = \left\{(\bx_i,l_i)\right \}_{i=1}^M$ is sampled, the labels $\left\{l_i\right \}$ are corrupted by some noise function $g:[C]\to[C]$, and the training set becomes $\iX = \left\{(\bx_i,y_i)\right \}_{i=1}^M,~y_i=g(l_i)$. The two most common models of label noise are termed \emph{symmetric noise} and \emph{asymmetric noise} \citep{patrini2017making}. In both cases it is assumed that some fixed percentage of the labels are corrupted by $g(l)$. With symmetric noise, $g(l)$ assigns any new label from the set $[C]\setminus\{l\}$ with equal probability. With asymmetric noise, $g(l)$ is the deterministic permutation function (see App.~\ref{app:tech} for details). Note that the asymmetric noise model is considered much harder than the symmetric noise model. 

\vspace{-.5em}
\subsection{Per-Epoch Agreement Score}
\label{sec:per-epoch}

Following \cite{hacohen2020let}, we define the \textit{True Positive Agreement} (TPA) score of ensemble $\F^e(\iX)$ at each datapoint $(\bx,y)$, where \fbox{$TPA(\bx,y;\F^e(\iX)) = \frac{1}{N}\sum_{i=1}^{N}\mathbbm{1}_{[f_{i}^{e}(\bx) = y]}$}.
The TPA score measures the average accuracy of the models in the ensemble, when seeing $\bx$, after each model has been trained for exactly $e$ epochs on $\iX$. Note that $TPA$ measures the average accuracy of multiple models on one example, as opposed to the generalization error that measures the average error of one model on multiple examples. 
\vspace{-.5em}
\subsection{Cumulative Scores}
\label{sec:cum-scores}

When inspecting the dynamics of the TPA score on clean data, we see that at the beginning the distribution of $\{TPA(\bx_i,y_i)\}$ is concentrated around 0, and then quickly shifts to 1 as training proceeds (see side panels in Fig.~\ref{fig:global}). This implies that empirically, data is learned in a specific order by all models in the ensemble. To measure this phenomenon we use the \textit{Ensemble Learning Pace} (ELP) score defined below, which essentially integrates the TPA score over a set of epochs $\Ec$:
\begin{eqnarray}
\label{eq:ELP}
ELP(\bx,y) = \frac{1}{\vert\Ec\vert}\sum_{e\in\Ec}TPA(\bx,y;\F^e(\iX))
\vspace{-1.0em}
\end{eqnarray}
$ELP(\bx,y)$ captures both the time of learning by a single model, and its consistency across models. For example, if all the models learned the example early, the score would be high. It would be significantly lower if some of them learned it later than others (see pseudo-code in App.~\ref{app:ELP-pseudo}).

In our study we evaluated two additional cumulative scores of inter-model agreement:
\begin{enumerate}[leftmargin=0.65cm]
    \item Cumulative loss:
\vspace{-1em}
\begin{equation*}
CumLoss(\bx,y) = \frac{1}{N\vert\Ec\vert}\sum_{i,{e\in\Ec}}CE(f_{i}^{e}(\bx),y)   
\end{equation*}
Above $CE$ denotes the cross entropy function. This score is very similar to ELP, engaging the average of the cross-entropy loss instead of the accuracy indicator $\mathbbm{1}_{[f_{i}^{e}(\bx) = y]}$.
\item Area under the margin:
following \citep{pleiss2020identifying}, the MeanMargin score is defined as follows
\begin{equation*}
MeanMargin(\bx,y)=\frac{1}{N\vert\Ec\vert}\sum_{i,{e\in\Ec}}[f^{e}_{i}(\bx)]_{y_i}-\argmax_{j\neq y_{i}}[f^{e}_{i}(\bx)]_j    
\end{equation*}
The MeanMargin score is the mean of the 'margin', the difference between the value of the ground-truth logit (before softmax) and the value of the otherwise maximal logit.
\end{enumerate}

\section{The Dynamics of Agreement: New Empirical Observation}
\label{sec:agg-overfit}

In this section we analyze, both theoretically and empirically, how measures of inter-network agreement may indicate the detrimental phenomenon of 'overfit'. \emph{Overfit} is a condition that can occur during the training of deep neural networks. It is characterized by the co-occurring decrease of \emph{train error or loss} and the increase of \emph{test error or loss}. Recall that train loss is the quantity that is being continuously minimized during the training of deep models, while the test error is the quantity linked to generalization error. When these quantities change in opposite directions, training harms the final performance and thus early stopping is recommended. 

We begin by showing in Section~\ref{sec:theory} that in an ensemble of linear regression models, overfit and the agreement between models are negatively correlated. When this is the case, an epoch in which the agreement between networks reaches its maximal value is likely to indicate the beginning of overfit. 

Our next goal is to examine the relevance of this result to deep learning in practice. Yet inexplicably, at least as far as image datasets are concerned, overfit rarely occurs in practice when deep learning is used for image recognition. However, when label noise is introduced, significant overfit occurs. Capitalizing on this observation, we report in Section~\ref{sec:overfit-empical} that when overfit occurs in the independent training of an ensemble of deep networks, the agreement between the networks starts to decrease.

The approach we describe  in Section~\ref{sec:approach} is motivated by these results:
Since it has been observed that noisy data are memorized later than clean data, we hypothesize that overfit occurs when the memorization of noisy labels becomes dominant. This suggests that measuring the dynamics of agreement between networks, which is correlated with overfit as shown below,  can be effectively used for the identification of label noise.

\subsection{Overfit and Agreement: Theoretical Result}
\label{sec:theory}

Since deep learning models are not amenable to a rigorous theoretical analysis, and in order to gain computational insight into such general phenomena as overfit, simpler models are sometimes analyzed \citep[e.g.][]{weinshall2020theory}.  
Accordingly, in App.~\ref{sec:overfit-app} we formally analyze the relation between overfit and inter-model agreement in an ensemble of linear regression models. In this framework, it can be shown that the two phenomena are negatively correlated, namely, increase in overfit implies decrease in inter-model agreement. Thus, we prove (under some assumptions) the following result:
\begin{result*}
Assume an ensemble of models obtained by solving linear regression with gradient descent and random initialization. If overfit \emph{increases} at time $t$ in all the models in the ensemble, then the agreement between the models in the ensemble at time $t$ \emph{decreases}.
\end{result*}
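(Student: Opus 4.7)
The plan is to leverage the exact closed-form trajectory of gradient flow on the squared loss. Write the training targets as $y = Xw^{\star} + \epsilon$, let $H = X^{\top}X$ with eigendecomposition $H = \sum_{k}\lambda_{k}u_{k}u_{k}^{\top}$, and let $\hat{w} = H^{-1}X^{\top}y$ denote the OLS minimizer. Each ensemble member then obeys
\[
w_{t}^{(i)} - w^{\star} = e^{-\eta tH}\bigl(w_{0}^{(i)} - w^{\star}\bigr) + \bigl(I - e^{-\eta tH}\bigr)H^{-1}X^{\top}\epsilon,
\]
which becomes completely diagonal in the $H$-eigenbasis: each coordinate $k$ shrinks toward the OLS value $\hat{w}^{k}$ at a single rate $\mu_{k}(t) = e^{-\eta t\lambda_{k}}$. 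This makes both the per-model test loss $L^{(i)}_{\mathrm{test}}(t) = \tfrac{1}{2}(w_{t}^{(i)} - w^{\star})^{\top}\Sigma(w_{t}^{(i)} - w^{\star})$ and the inter-model quantities fully explicit functions of $t$ for each mode.

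Next I would differentiate the two objects of interest: the per-model overfit indicator $\dot{L}^{(i)}_{\mathrm{test}}(t)$, and a natural ensemble agreement score, which I would take as the negative pairwise disagreement $A(t) = -\tfrac{2}{N(N-1)}\sum_{i<j}\|w_{t}^{(i)} - w_{t}^{(j)}\|_{\Sigma}^{2}$. In the eigenbasis, both derivatives become sums over $k$ of the form $\mu_{k}(t)^{2}\cdot c_{k}^{(i)}$ with signs determined by the projections onto $u_{k}$ of the initial gap $(w_{0}^{(i)} - w^{\star})$ and of the noise image $H^{-1}X^{\top}\epsilon$. The central algebraic claim is that whenever $\dot{L}^{(i)}_{\mathrm{test}}(t) > 0$ holds for every $i$, in each active mode $k$ the noise-driven drift must dominate the initialization-shrinkage term, and moreover must do so with a sign that agrees across $i$. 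Those coherent signs then force the pairwise contributions in $-\dot{A}(t)$ to add constructively across pairs, yielding $\dot{A}(t) < 0$.

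The main obstacle is precisely this last coherence argument. In the plain full-rank gradient-flow setting with symmetric zero-mean initialization, $\|w_{t}^{(i)} - w_{t}^{(j)}\|^{2}$ is monotonically non-increasing on its own, so a naive weight-space distance would contradict the theorem. The argument must therefore lean on a subtler definition of either \emph{overfit} (most naturally, the time derivative of the expected test loss of an ensemble member, which folds the ensemble-mean bias together with the inter-model variance) or of \emph{agreement} (for instance, output-space alignment on held-out inputs rather than weight-space proximity). My expectation is that the proof picks these notions compatibly and then runs a mode-by-mode sign comparison to show that when overfit grows simultaneously for every model, the coordinates contributing positively to the overfit derivative are exactly those in which disagreement is still growing. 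Setting up the definitions so that this sign analysis actually goes through is where I expect most of the effort to go.
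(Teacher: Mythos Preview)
Your proposal does not match the paper's argument, and the route you sketch has a real gap that you yourself identify but do not close.

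The paper does \emph{not} use gradient flow, an eigendecomposition of $X^\top X$, or any mode-by-mode sign analysis. Its proof is a short first-order expansion in the learning rate $\mu$ together with two asymptotic assumptions you never invoke. Concretely: (i) a preliminary lemma shows that, to first order in $\mu$, a single model overfits at step $s$ iff its training gradient $\DW(i,i)$ and its test cross-gradient $\DW(i,t)=\bE(i,t)X(t)^\top$ are negatively correlated, i.e.\ $\DW(i,i)\cdot\DW(i,t)<0$; (ii) the change in disagreement $DisAg(s)-DisAg(s-1)$ is expanded to first order in $\mu$ and split as $\mu(\Cr'-\Cr'')$, where $\Cr''=\frac{2}{Q}\sum_i\DW(i,i)\cdot\DW(i,t)$ collects the ``diagonal'' pairings and $\Cr'$ collects the cross pairings $\DW(i,i)\cdot\DW(j,t)$; (iii) the cross term $\Cr'$ factors through the ensemble mean $\frac{1}{Q}\sum_i\DW(i,i)$, which is declared $\approx 0$ using the assumptions that $s$ is large enough for $\E[\hW^s]$ to have converged to the OLS solution and $Q$ is large enough for the empirical mean to match $\E[\hW]$. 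With $\Cr'$ dropped, the overfit condition on every model makes each summand of $\Cr''$ negative, hence $\Cr\approx -\mu\Cr''>0$.

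Your eigenbasis computation, by contrast, leads you straight to the exact identity that $w_t^{(i)}-w_t^{(j)}=e^{-\eta tH}(w_0^{(i)}-w_0^{(j)})$, from which any quadratic-form disagreement is (at least when $H$ and the test covariance commute) monotone non-increasing. You correctly flag this as an obstacle, but your proposed fix---``pick subtler definitions so that a mode-by-mode sign comparison goes through''---is not what the paper does and is left entirely unspecified. In the paper's framework the monotonicity you observe lives precisely in the combination $\Cr''-\Cr'$; the paper's move is not to redefine agreement or overfit, but to use the large-$s$/large-$Q$ approximation to discard $\Cr'$ and keep only $\Cr''$, whose sign is then fixed by the per-model overfit condition. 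Without that lemma and that splitting, your coherence argument has no mechanism to produce $\dot A(t)<0$, and the proposal as written does not prove the statement.
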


\subsection{Measuring the Agreement between Models}
\label{sec:BI-def}

In order to obtain a score that captures the level of disagreement between networks, we inspect more closely the distribution of $TPA(\bx,y;\F^e(\iX))$, defined in Section~\ref{sec:per-epoch}, over a sample of datapoints, and analyze its dynamics as training proceeds. First, note that if all of the models in ensemble $\F^e(\iX)$ give identical predictions at each point, the TPA score would be either 0 (when all the networks predict a false label) or 1 (when all the networks predict the correct label). In this case, the TPA distribution is perfectly bimodal, with only two peaks at 0 and 1. If the predictions of the models at each point are independent with mean accuracy $p$, then it can be readily shown that TPA is approximately the binomial random variable with a unimodal distribution around $p$. 

Empirically, \citep{hacohen2020let} showed that in ensembles of deep models trained on ‘real’ datasets as we use here, the TPA distribution is highly bimodal. Since commonly used measures of bimodality, such as the Pearson bimodality score, are ill-fitted for the discrete TPA distribution, we measure bimodality with the following \textit{Bimodal Index} score:
\begin{eqnarray}
\label{eq:BI}
BI(e) = \sqrt{\frac{1}{M}\sum_{i=1}^M\mathbbm{1}_{[TPA(\bx_i,y_i;\F^e(\iX))=N]}} + \sqrt{\frac{1}{M}\sum_{i=1}^M\mathbbm{1}_{[TPA(\bx_i,y_i;\F^e(\iX))=0]}}
\end{eqnarray}
$BI(e)$ measures how many examples are either correctly or incorrectly classified by \emph{all} the models in the ensemble, rewarding distributions where points are (roughly) equally divided between 0 and 1. Here we use this score to measure the agreement between networks at epoch $e$.

\begin{figure}[ht]
\begin{center}
\begin{subfigure}{0.96\textwidth}
  \centering{
\valign{#\cr
\hbox{\begin{subfigure}{0.12\textwidth}
  \centering{\tiny\color{blue}{1}} \\
  \includegraphics[width=\textwidth]{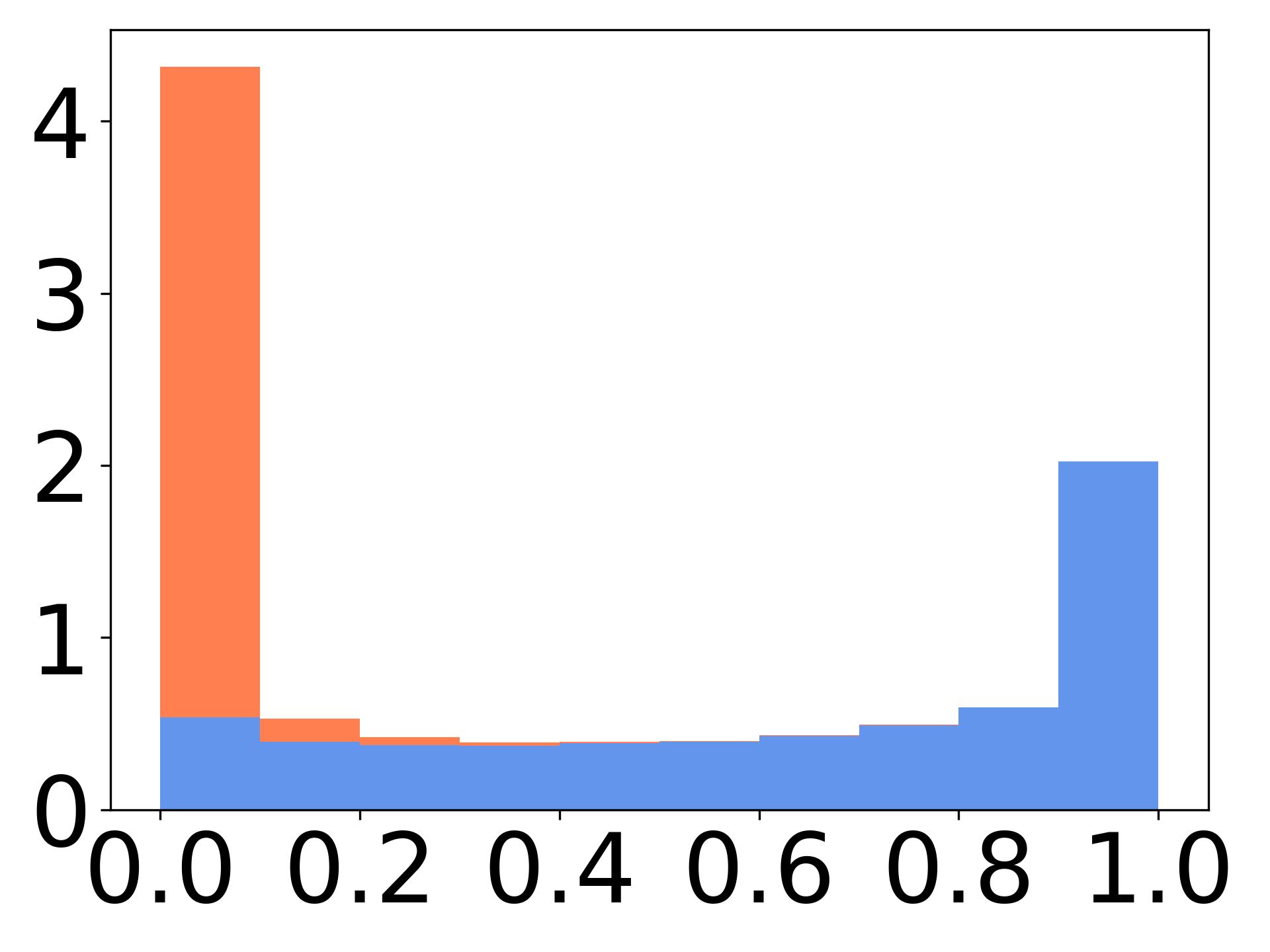}
  \end{subfigure}}
\hbox{\begin{subfigure}{0.12\textwidth}
  \centering{\tiny\color{olive}{2} max BI}
  \includegraphics[width=\textwidth]{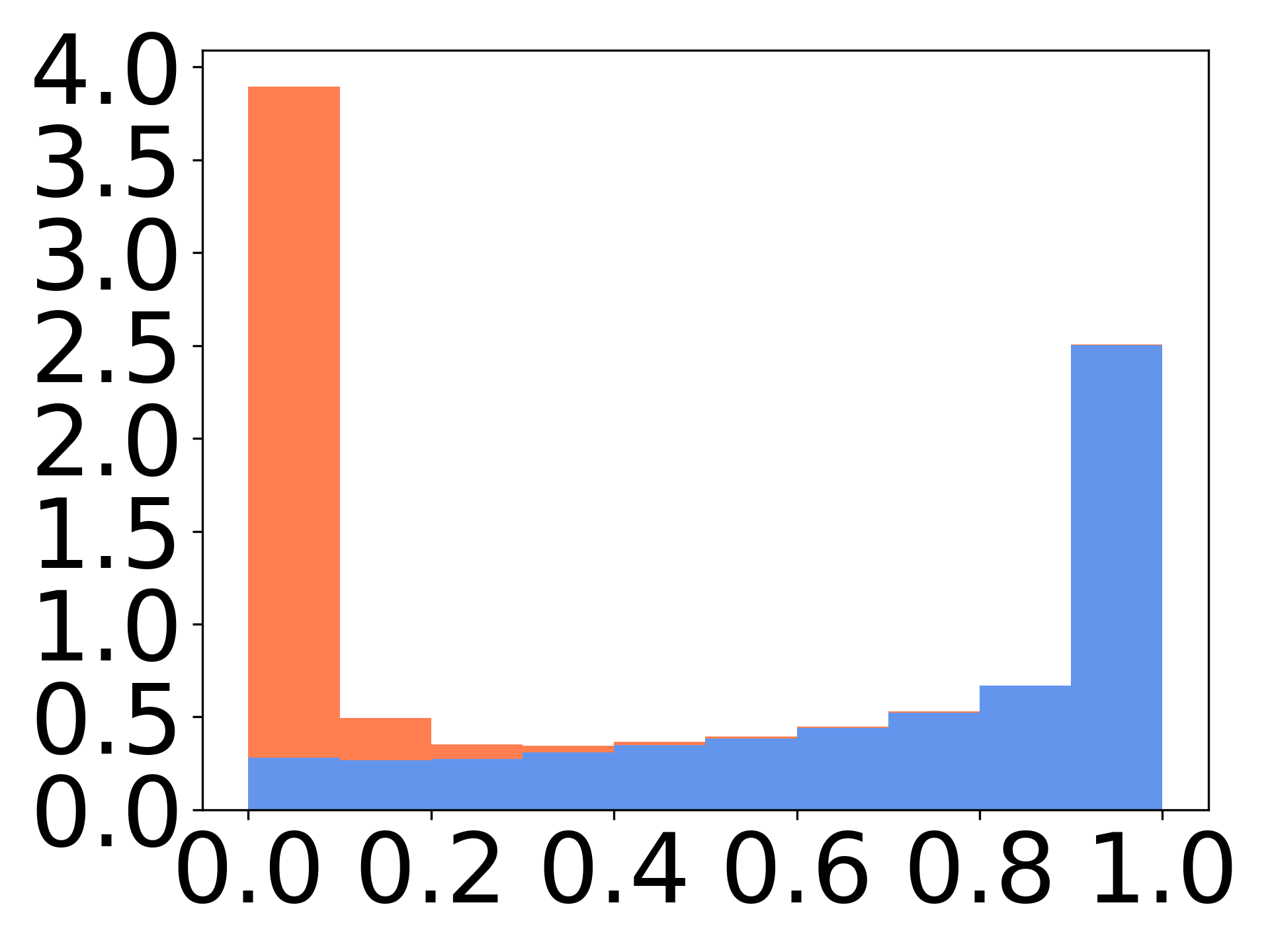}
  \end{subfigure}}
\hbox{\begin{subfigure}{0.12\textwidth}
  \centering{\tiny\color{red}{3}}
  \includegraphics[width=\textwidth]{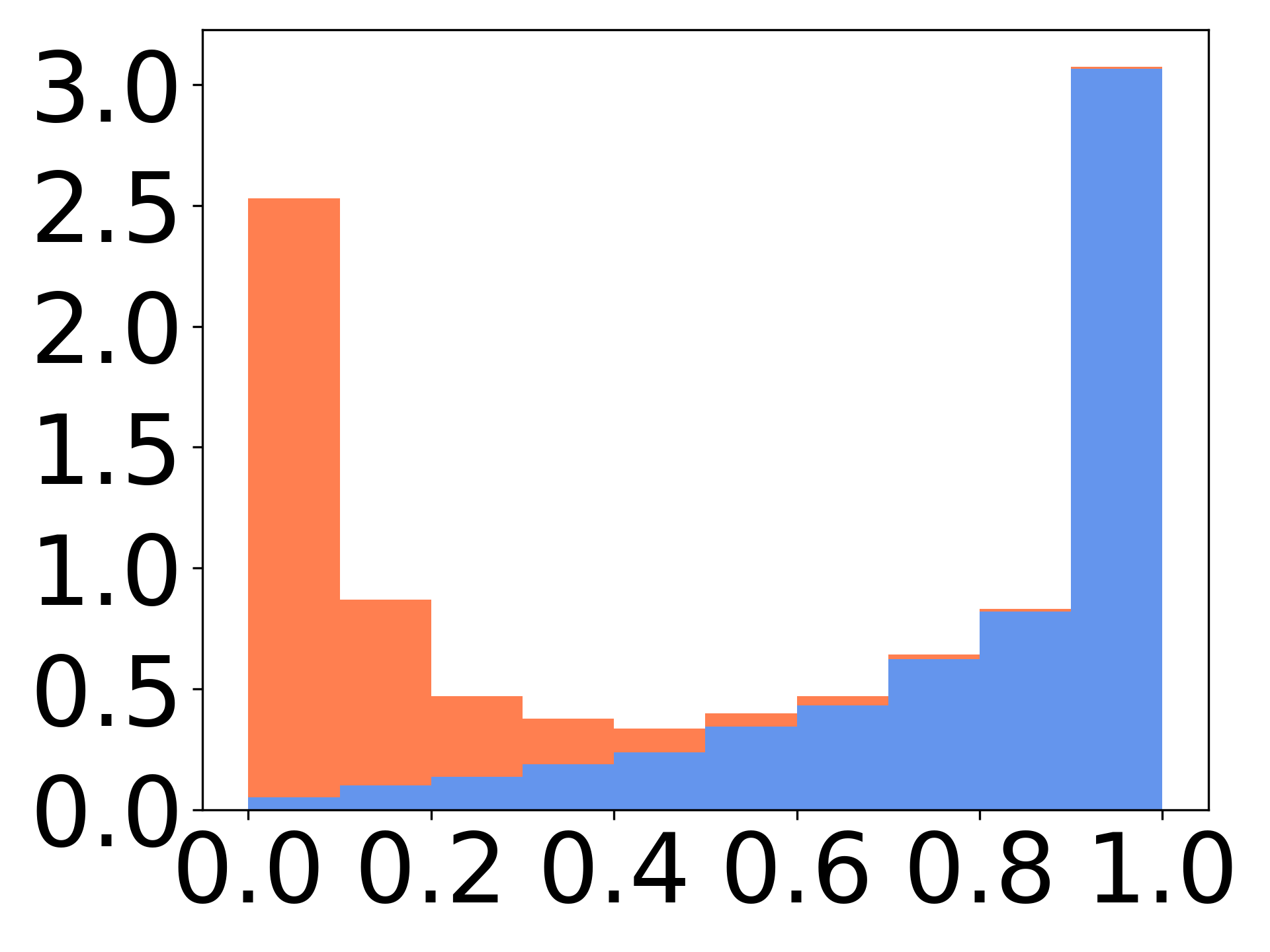}
  \end{subfigure}}
\cr
\hbox{\begin{subfigure}[b]{0.45\textwidth}
  \includegraphics[width=\textwidth]{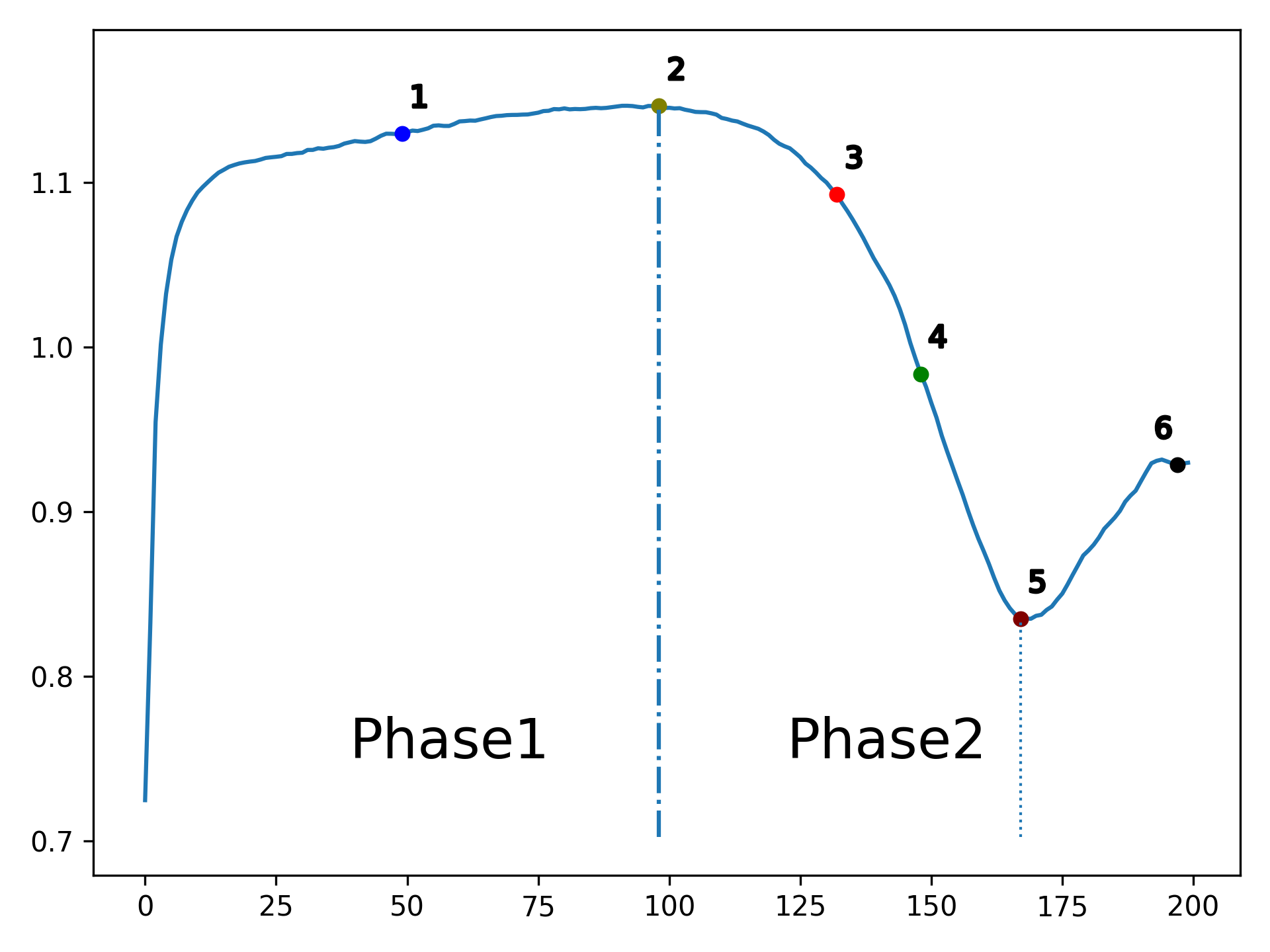}
  \vspace{-0.6cm}
  \caption{BI ($Y$-axis) vs. epochs ($X$-axis)}  \label{fig:global}
  \end{subfigure}}
\cr
\hbox{\begin{subfigure}{0.12\textwidth}
  \centering{\tiny\color{green}{4}}
  \includegraphics[width=\textwidth]{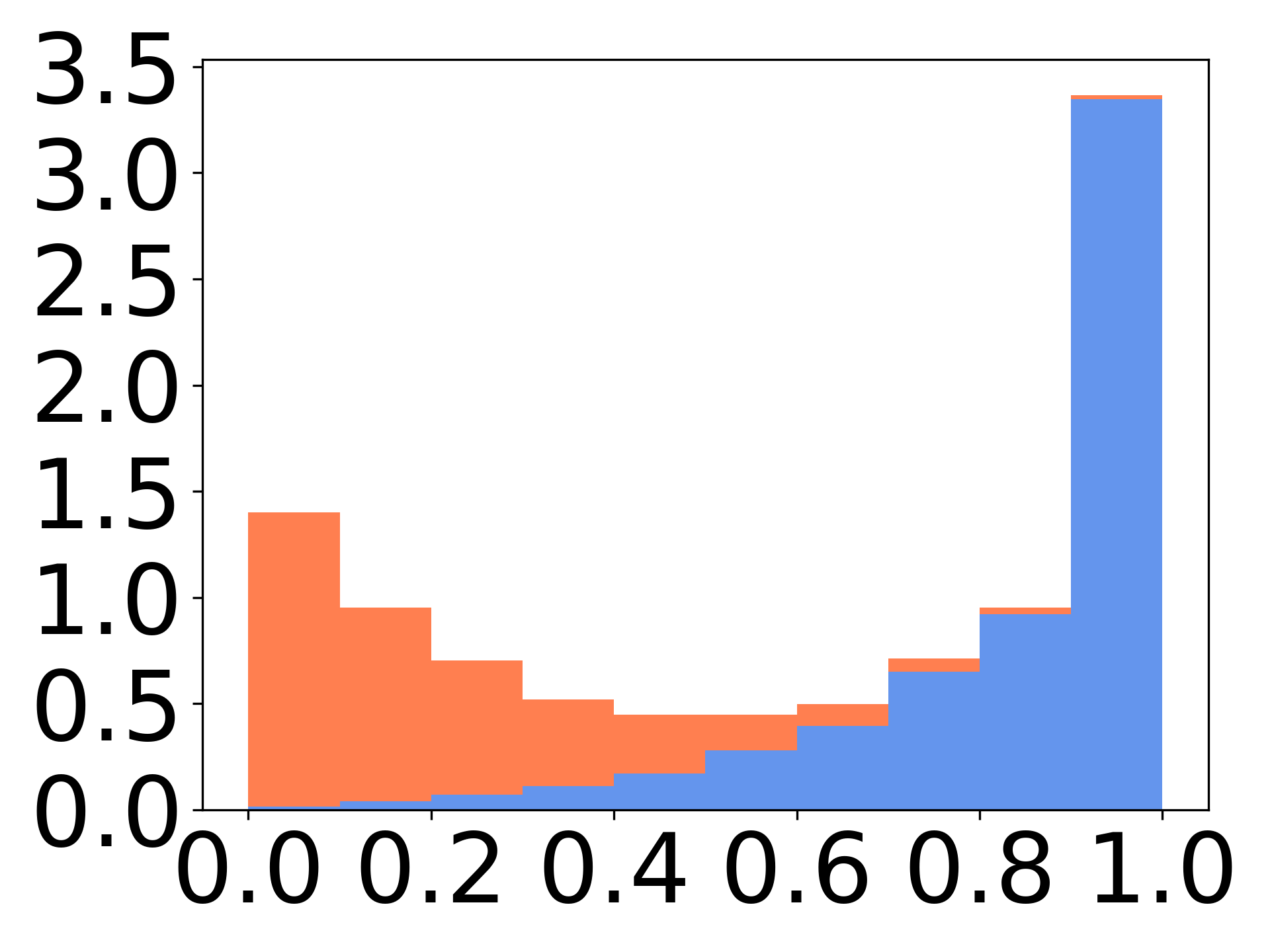}
  \end{subfigure}}
\hbox{\begin{subfigure}{0.12\textwidth}
  \centering{\tiny\color{maroon}{5}}
  \includegraphics[width=\textwidth]{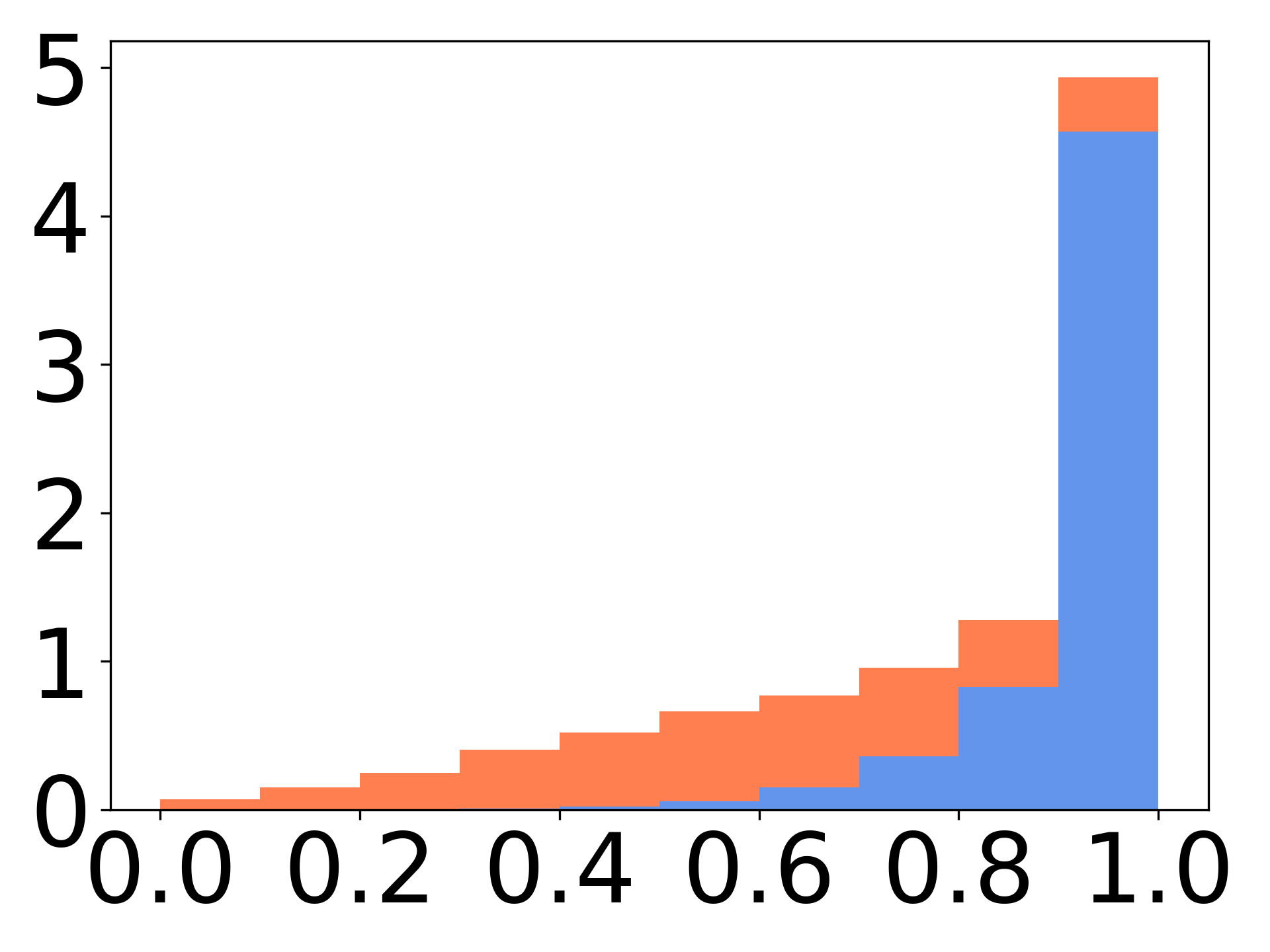}
  \end{subfigure}}
\hbox{\begin{subfigure}{0.12\textwidth}
  \centering{\tiny{6}}
  \includegraphics[width=\textwidth]{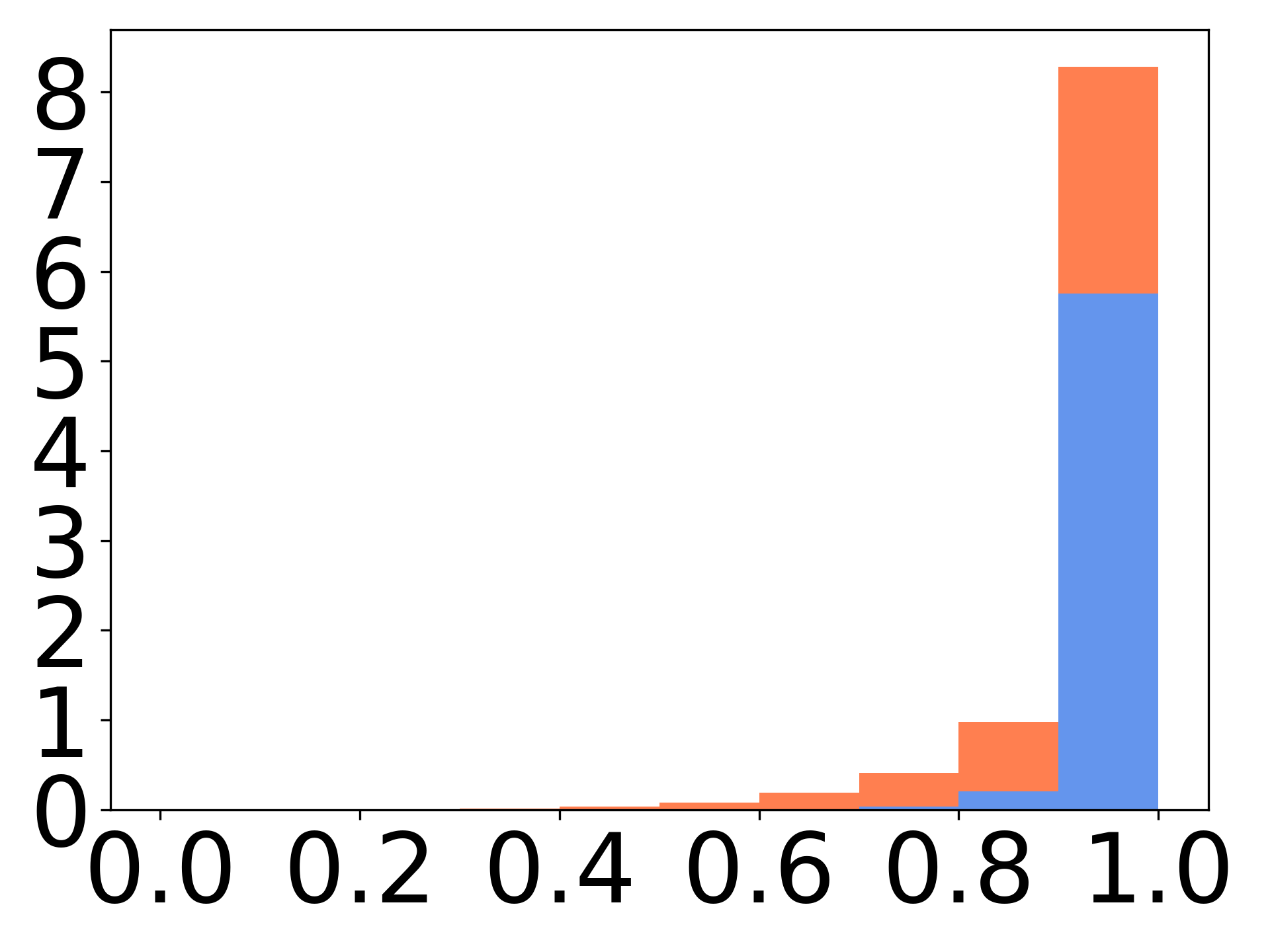}
  \end{subfigure}}
\cr
\hspace{-0.65\textwidth}
\begin{subfigure}{0.25\textwidth}
  \includegraphics[width=\textwidth]{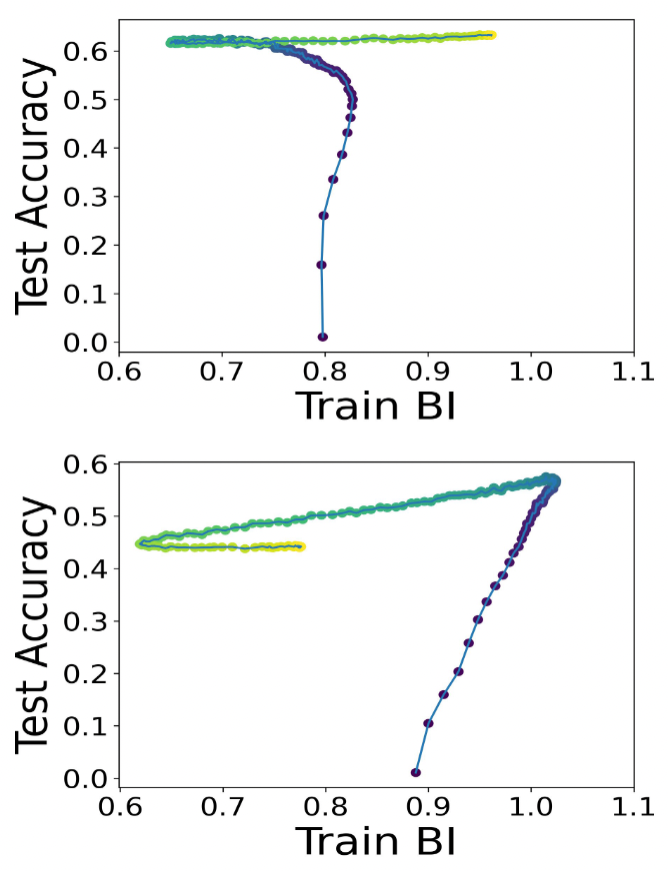}
  \vspace{-0.45cm}
  \caption{}
  \label{subfig:biaccsym}
  \end{subfigure}
\cr
}
}
\end{subfigure}
\vspace{-1em}
\caption{(a) Main panel: bimodality in an ensemble of 10 DenseNet networks, trained to classify Cifar10 with 20\% symmetric noise. Side panels: TPA distribution in 6 epochs (blue - clean examples, orange - noisy ones). (b) Scatter plots of test accuracy vs train bimodality, measured by $BI(e)$ as defined in (\ref{eq:BI}), where changes in color from blue to yellow correspond with advancing epochs.}
\end{center}
\end{figure}

If we were to draw the \emph{Bimodality Index (BI)} of the TPA score as a function of the epochs (Fig.~\ref{fig:global}), we often see two distinct phases. Initially (phase 1), BI is monotonically increasing, namely, both test accuracy and agreement are on the rise. We call it the ‘learning’ phase. Empirically, in this phase most of the clean examples are being learned (or memorized), as can also be seen in the left side panels of Fig.~\ref{fig:global} 
\citep[cf.][]{li2015convergent}. At some point BI may start to decrease, followed by another possible ascent. This is phase 2, in which empirically the memorization of noisy examples dominates the learning (see the right side panels of Fig.~\ref{fig:global}). This fall and rise is explained by another set of empirical observations, that noisy labels are \textbf{not} being learned in the same order by an ensemble of networks (see App.~\ref{app:noisy-IB}), which therefore predicts a decline in BI when noisy labels are being learned.

\subsection{Overfit and Agreement: Empirical Evidence}
\label{sec:overfit-empical}

Earlier work, investigating the dynamics of learning in deep networks, suggests that examples with noisy labels are learned later \citep{KruegerBJAKMBFC17,zhang2017ICLR,arpit2017closer,arora2019fine}. Since the learning of noisy labels is unlikely to improve the model's test accuracy, we hypothesize that this may be correlated with the occurrence (or increase) of \emph{overfit}. The theoretical result in Section~\ref{sec:theory} suggests that this may be correlated with a decrease in the agreement between networks. Our goal now is to test this prediction empirically.

We next outline empirical evidence that this is indeed the case in actual deep models. In order to boost the strength of overfit, we adopt the scenario of recognition with label noise, where the occurrence of overfit is abundant. When overfit indeed occurs, our experiments show that if the test accuracy drops, then the disagreement score BI also decreases (see example in Fig.~\ref{subfig:biaccsym}-bottom). This observation is confirmed with various noise models and different datasets. When overfit does not occur, the prediction is no longer observed (see example in Fig.~\ref{subfig:biaccsym}-top).

These results suggest that a consistent drop in the $BI$ index of some training set $\iX$ can be used to estimate the occurrence of overfit, and possibly even the beginning of noisy label memorization. 

\section{Dealing with Noisy Labels: Proposed Approach}
\label{sec:approach}

When dealing with noisy labels, there are essentially three intertwined problems that may require separate treatment:
\setlist{nolistsep}
\begin{enumerate}[leftmargin=0.65cm,noitemsep]
\item	\textbf{Noise level estimation}: estimate the number of noisy examples.
\item	\textbf{Noise filtration}: flag points whose label is to be removed.
\item	\textbf{Classifier construction}: train a model without the examples that are flagged as noisy.
\end{enumerate}

\subsection{DisagreeNet, for Noise Level Estimation and Noise Filtration}
\label{sec:noise-est-fli}

Guided by Section~\ref{sec:agg-overfit}, we propose a method to estimate the noise level in a training set denoted \emph{DisagreeNet}, which is further used to filter out the noisy examples (see pseudo-code below in Alg.~\ref{alg:cap}):
\begin{enumerate}[leftmargin=0.65cm]
\item Compute the ELP score from (\ref{eq:ELP}) at each training example.
\item Fit a two component BMM to the ELP distribution 
(see Fig.~\ref{fig:bmm-fit}).
\item Use the intersection between the 2 components of the BMM fit to divide the data to two groups.
\item Call the group with lower ELP 'noisy data'.
\item Estimate noise level by counting the number of datapoints in the noisy group. 
\end{enumerate}

\begin{algorithm}
\SetKw{Init}{initialization}
\SetKwComment{Comment}{/* }{ */}
\caption{\emph{DisagreeNet}} \label{alg:cap}
\KwIn{ELP\_arr, specifying the ELP score of each point in training set $\iX$}
\KwOut{Noise level estimate, and the list of indices of noisy points}
$\{G_{low-ELP},G_{high-ELP}\} \gets$ divide the data to two groups using \textbf{fit\_BMM}(ELP\_arr)\;
noise\_indices $\gets$ indices of ELP\_arr assigned to $G_{low-ELP}$\;
noise\_estim  $\gets \frac{|G_{low-ELP}|}{|\mathrm{ELP\_arr}|}$\;
\Return noise\_estim, noise\_indices
\end{algorithm}

\begin{figure}[htbp]
\centering
\begin{subfigure}[tb]{0.3\textwidth}
  \includegraphics[width=\linewidth]{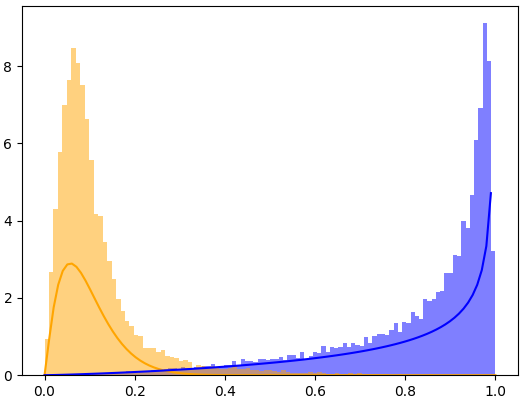}
  \centering
  \scriptsize{Cifar10, 40\% Symm. noise}
  \end{subfigure}
\hspace{.2cm}
  \begin{subfigure}[tb]{0.3\textwidth}
  \includegraphics[width=\linewidth]{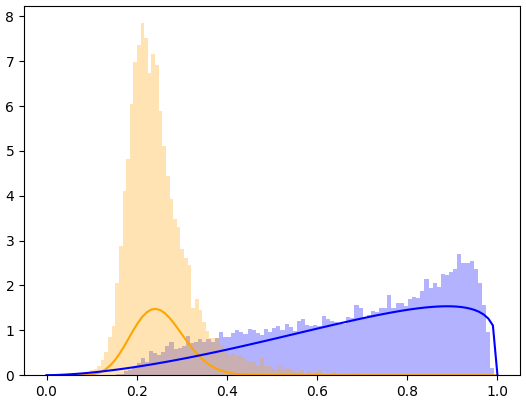}
  \centering
  \scriptsize{Cifar100, 20\% Asymm. noise}
  \end{subfigure}
\hspace{.2cm}
  \begin{subfigure}[tb]{0.3\textwidth}
  \includegraphics[width=\linewidth]{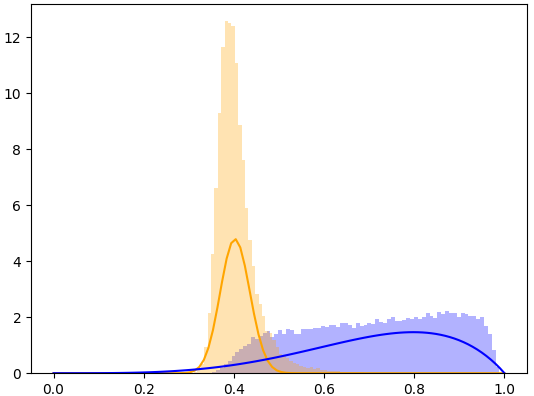}
  \centering
  \scriptsize{TinyImagenet, 40\% Symm. noise}
  \end{subfigure}
      \caption[Estimation]{ELP distribution, shown separately for the clean data in blue and the noisy data in orange. Superimposed, in blue and orange lines, is the bi-modal BMM fit to the ELP total (not separated) distribution}
      \label{fig:bmm-fit}
\end{figure}

As part of our ablation study, we evaluated the two alternative scores defined in Section~\ref{sec:cum-scores}: CumLoss and MeanMargin, where Step 2 of \emph{DisagreeNet} is executed using one of them instead of the ELP score. Results are shown in Section~\ref{sec:ablation}, revealing the superiority of the ELP score in  noise filtration. 

\subsection{Classifier construction}

Aiming to achieve modular handling of noisy labels, we propose the following two-step approach:  
\begin{enumerate}[leftmargin=0.65cm]
\item Run \emph{DisagreeNet}.
\item Run SOTA supervised learning method using the filtered data.
\end{enumerate}
In step 2 it is possible to invoke semi-supervised SOTA methods, using the noisy group as unsupervised data. However, given that semi-supervised learning typically involves additional assumptions (or prior knowledge) as well as high computational complexity (that restricts its applicability to smaller datasets), as discussed in Section~\ref{sec:intro}, we do not consider this scenario here. 

\vspace{-.5em}
\section{Empirical Evaluation}
\label{sec:empirical}
\vspace{-0.5em}

We evaluate our method in the following scenarios and tasks:
\begin{enumerate}[leftmargin=0.65cm]
    \item Noise identification (Section~\ref{sec:noise-identification}), with two complementary sub-tasks: (i) estimate the noise level in the given dataset; (ii) identify the noisy examples. 
    \item Supervised classification (Section~\ref{sec:supervised}), after the removal of the noisy examples.
\end{enumerate}

\vspace{-.4em}
\subsection{Dataset and Baselines (\small{details are deferred to App.~\ref{app:tech})}}
\label{sec:prelem}
\vspace{-.4em}

\myparagpar{Datasets} We evaluate our method on a few standard image classification datasets, including Cifar10 and Cifar100  \citep{krizhevsky2009learning}, Tiny imagenet \citep{le2015tiny}, subsets of Imagenet \citep{deng2009imagenet}, Clothing1M \citep{clothing} and Animal10N \citep{animal10n}, see App.~\ref{app:tech} for details. These datasets were used in earlier work to evaluate the success of noise estimation \citep{pleiss2020identifying,arazo2019unsupervised,li2020dividemix,liu2020early}. 

\myparagpar{Baselines and comparable methods} We report results with the following supervised learning methods for learning from noisy data: 
\textbf{\emph{DY-BMM}} and \textbf{\emph{DY-GMM}} \citep{arazo2019unsupervised},  \textbf{\emph{INCV}} \citep{chen2019understanding}, \textbf{\emph{AUM}} \citep{pleiss2020identifying}, \textbf {\emph{Bootstrap}} \citep{reed2014training}, \textbf{\emph{D2L}} \citep{ma2018dimensionality},  \textbf{\emph{MentorNet}} \citep{jiang2018mentornet}. We also report the results of two absolute baselines: \begin{inparaenum}[(i)] \item \textbf{\emph{Oracle}}, which trains the model on the clean dataset; \item \textbf{\emph{Random}}, which trains the model after the removal of a random fraction of the whole data, equivalent to the noise level. \end{inparaenum}

\myparagpar{Other methods} The following methods use additional prior information, such as a clean validation set or known level of noise: \textbf{\emph{Co-teaching}}  \citep{han2018co}, \textbf{\emph{O2U}} \citep{o2u}, \textbf{\emph{LEC}} \citep{lec} and \textbf{\emph{SELFIE}} \citep{song2019selfie}. Direct comparison does injustice to the previous group of methods, and is therefore deferred to App. ~\ref{app:diff_assump}. Another group of methods is excluded from the comparison because they invoke semi-supervised or contrastive learning  \citep[e.g.,][]{MOIT, li2020dividemix, UNICON, Sel-CL, JoCoR,JoSRC}, which is a different learning paradigm (see discussion of prior art in Section~\ref{sec:intro}).

\begin{figure}[b!]
    \centering
    \begin{subfigure}[b]{.95\textwidth}
        \includegraphics[width=.8\textwidth]{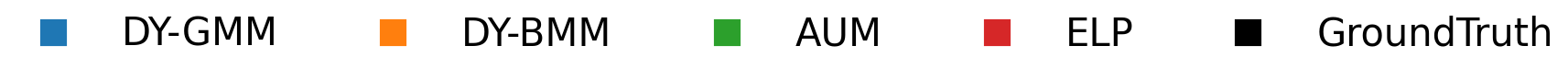}
    \end{subfigure}
    \begin{subfigure}[b]{0.245\textwidth}
        \includegraphics[width=\textwidth]{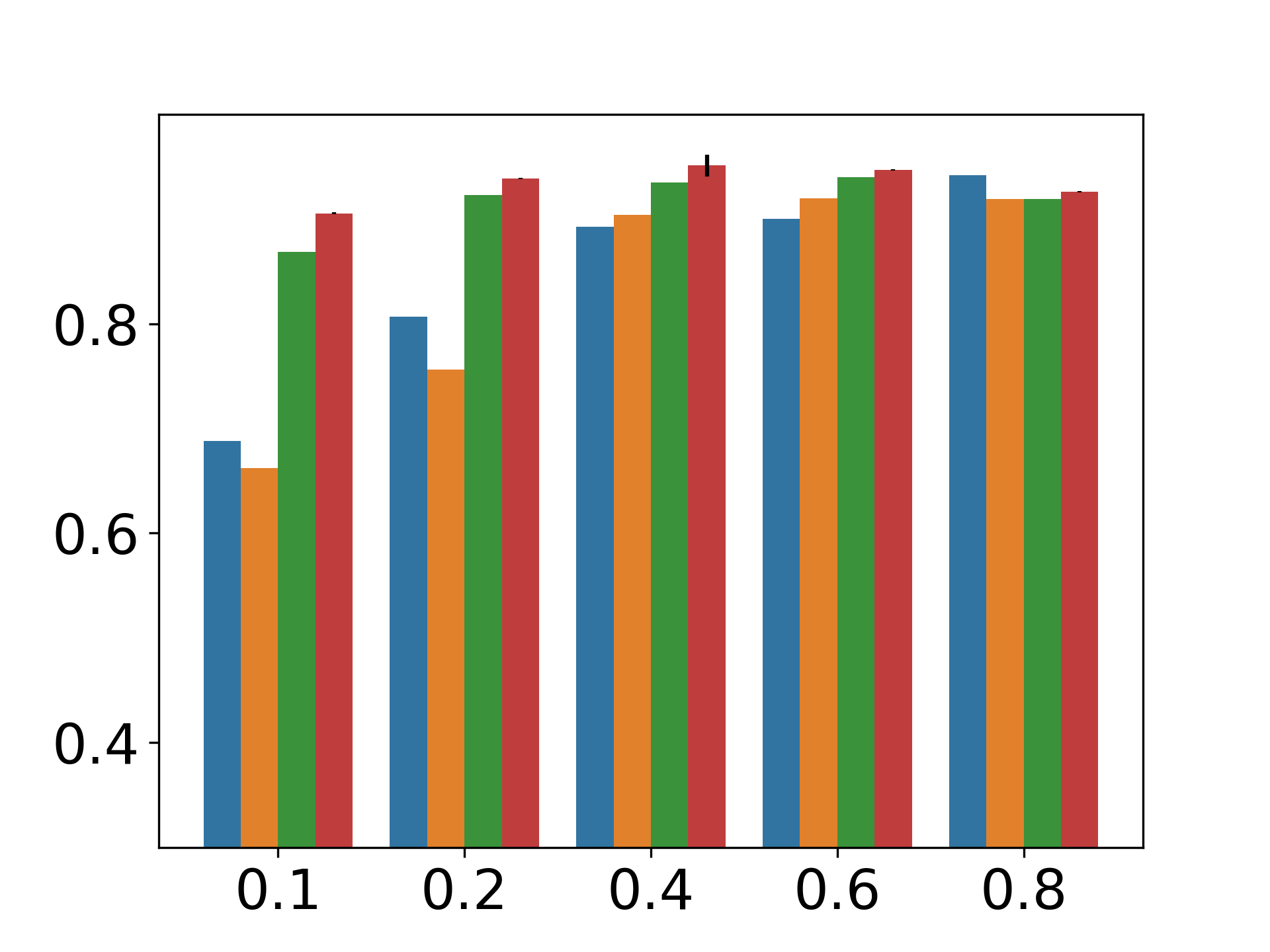}
    \vspace{-.5cm}
        \caption{Cifar100 sym}
        \label{cifar_sym}
    \end{subfigure}
    \begin{subfigure}[b]{0.245\textwidth}
        \includegraphics[width=\textwidth]{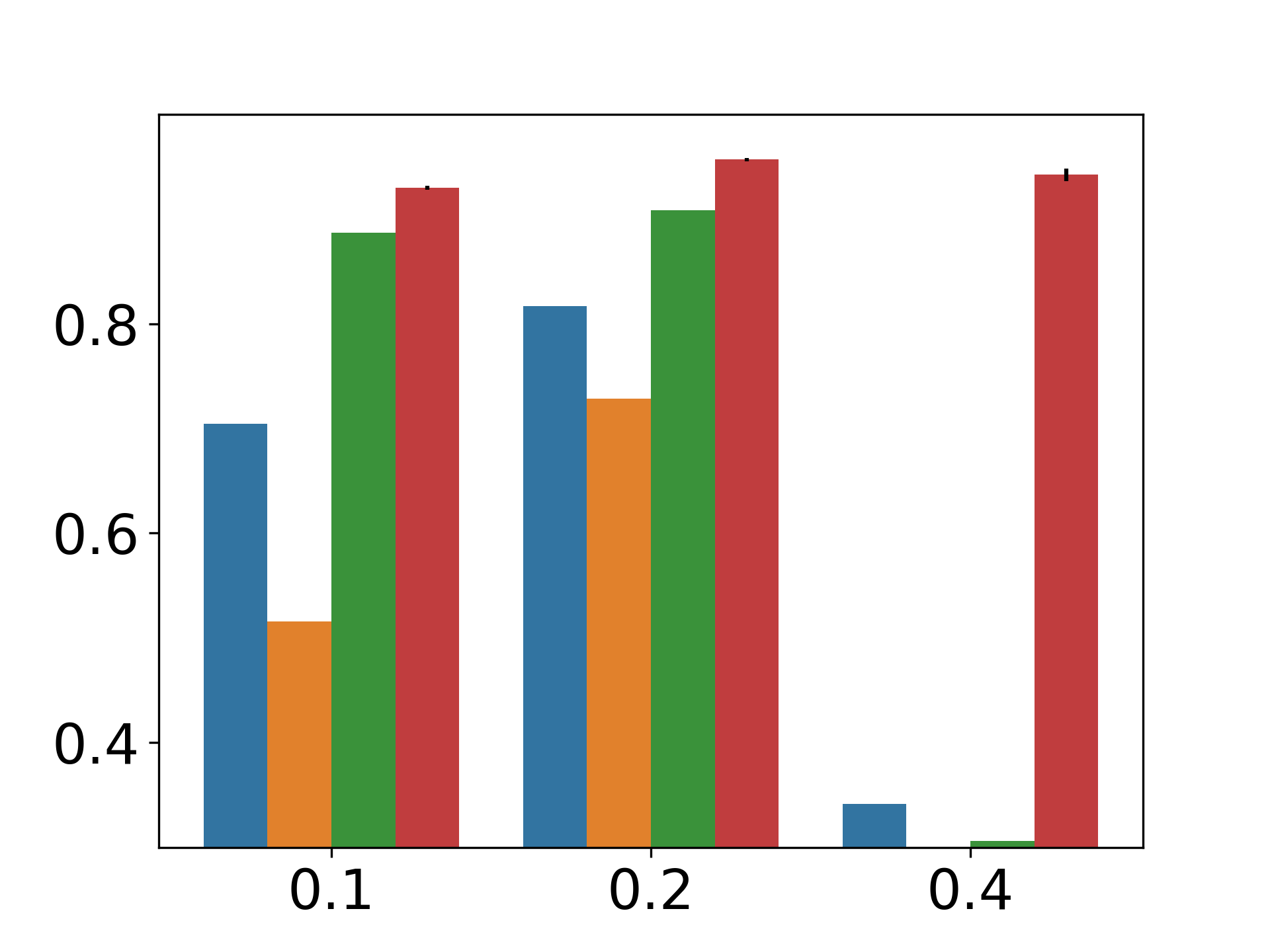}
    \vspace{-.5cm}
        \caption{Cifar100 asym}
        \label{cifar_asym}
    \end{subfigure}
    \begin{subfigure}[b]{0.245\textwidth}
        \includegraphics[width=\textwidth]{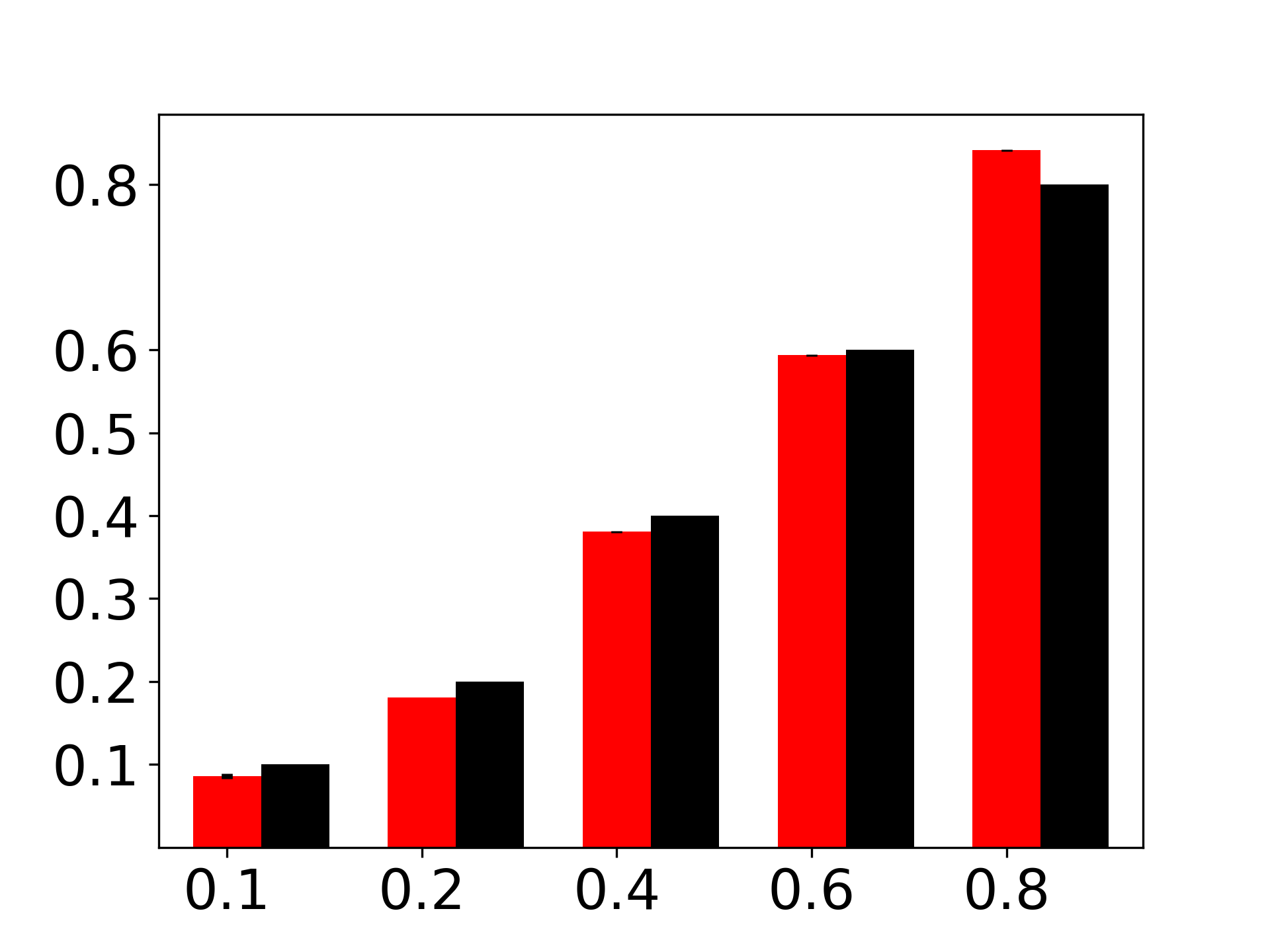}
    \vspace{-.5cm}
        \caption{Cifar100 sym}
        \label{cifar_sym_2}
    \end{subfigure}
    \begin{subfigure}[b]{0.245\textwidth}
        \includegraphics[width=\textwidth]{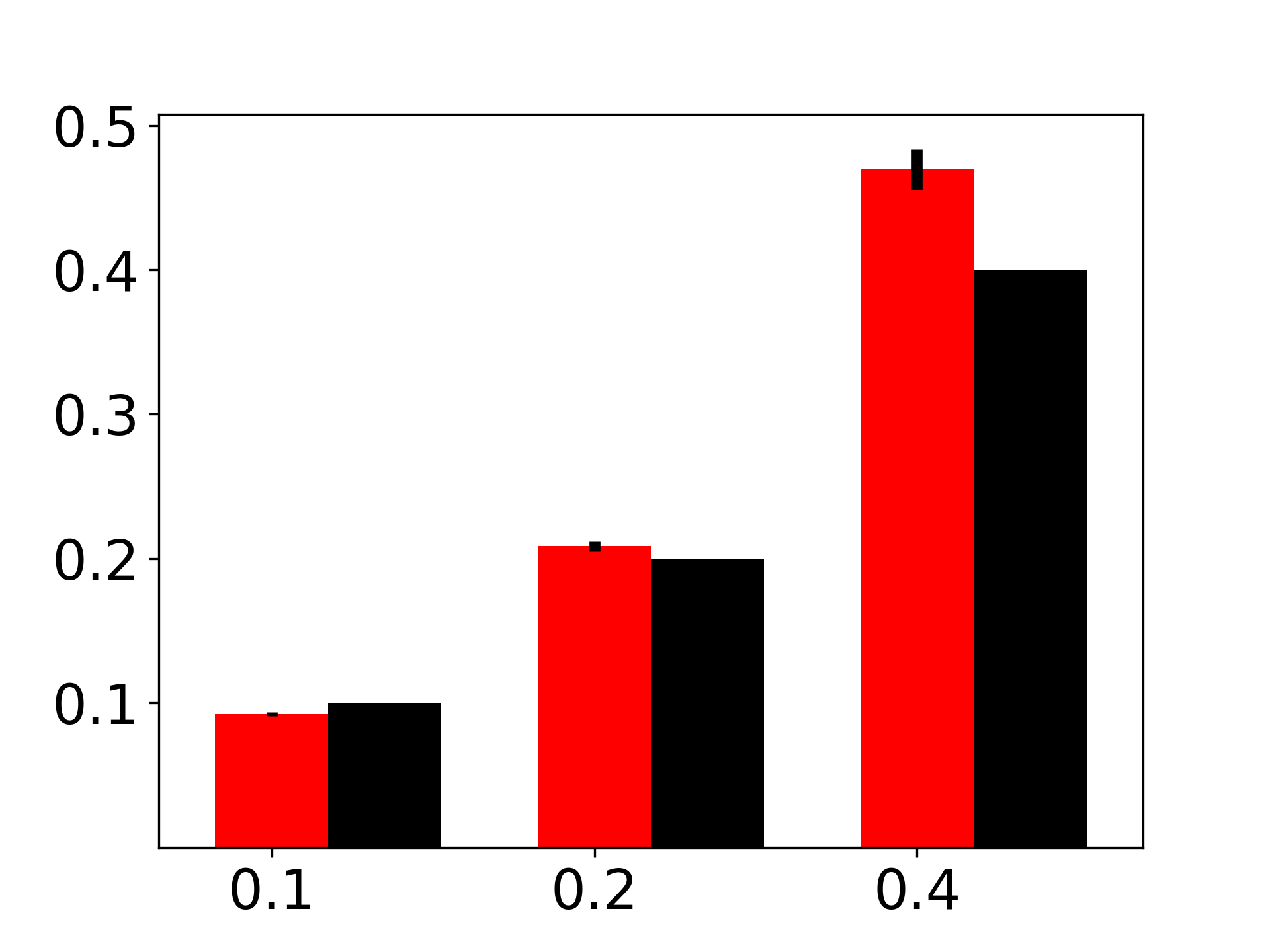}
    \vspace{-.5cm}
        \caption{Cifar100 asym}
        \label{cifar_asym_2}
    \end{subfigure}

    \caption[Estimation]{(a)-(b) \textbf{Noise identification}: F1 score for noisy label identification task, using different noise levels ($X$-axis), with asymmetric (a) and asymmetric (b) noise models. Results reflect  3 repetitions involving an ensemble of 10 Densenets each. (c)-(d) \textbf{Noise level estimation}: different noise levels are evaluated ($X$-axis), with asymmetric (c) and asymmetric (d) noise models (the 3 comparison baselines did not report this estimate).}
     \vspace{-1.0em}
\end{figure}

\myparagraph{Implementation details} We used DenseNet \citep{iandola2014densenet}, ResNet-18 and ResNet50 \citep{he2016deep}  when training on CIFAR-10/100 and Tiny imagenet , and ResNet50 for Clothing1M and Animal10N.

\vspace{-.4em}
\subsection{Results: Noise Identification}
\label{sec:noise-identification}
\vspace{-.4em}

The performance of \emph{DisagreeNet} is evaluated in two tasks: \begin{inparaenum}[(i)] \item The detection of noisy examples, shown in Fig.~\ref{cifar_sym}-\ref{cifar_asym} (see also Figs.~\ref{fig:NEcifar10} and \ref{fig:precisionrecallfig} in App.~\ref{sec: additinal}), where \emph{DisagreeNet} is seen to outperform the three baselines - \emph{AUM}, \emph{DY-GMM} and \emph{DY-BMM}. \item Noise level estimation, shown in Fig.~\ref{cifar_sym_2}-\ref{cifar_asym_2}\end{inparaenum}, showing good noise level estimation especially in the case of symmetric noise. We also compare \emph{DisagreeNet} to MeanMargin and CumLoss, see Fig.~\ref{fig:score-comparisons-no-ext}.

\vspace{-.4em}
\subsection{Result: Supervised Classifications}
\label{sec:supervised}
\vspace{-.4em}

\begin{table}[thb]
  \caption{Test accuracy (\%), average and standard error, in the best epoch of retraining after filtration. 
  Results of benchmark methods (see Section~\ref{sec:prelem}) are taken from \citep{pleiss2020identifying}. 
 The top and middle tables show CIFAR-10, CIFAR-100 and Tiny Imagenet, with simulated noise. The bottom table shows three `real noise' datasets, and includes in addition results of noise level estimation (when applicable). The presumed noise level for these datasets is indicated in the top line following \citep{huang2019o2u, song2019selfie}.}

\vspace{-.5em}

  \label{table:noise-supervised}
\footnotesize
  \begin{tabular}{l| c|c|c||c | c|c}
    \toprule
    \multicolumn{1}{ c |}{Method/\textbf{Dataset}} & \multicolumn{3}{ c ||}{\textbf{CIFAR-10 sym}} & \multicolumn{3}{ c }{\textbf{CIFAR-100 sym}}   \\ 
    \hline
    \multicolumn{1}{ l |}{Noise level}    & 20\% & 40\% & 60\%  &  20\% & 40\% & 60\% \\
    \hline
    random   &$87.18 \pm 0.6$ & $81.59 \pm 0.4$&  $64.35 \pm 0.4$& $65.49 \pm 0.4$& $49.1 \pm 0.2$& $28.7 \pm 0.5$  \\
    
    \emph{Bootstrap}   & $77.6 \pm 0.2$& $62.6 \pm 0.4$& $48.0 \pm 0.2$&  $51.4 \pm 0.2$& $41.1 \pm 0.2$& $29.7 \pm 0.2$ \\
    
    \emph{MentorNet}   & $86.7 \pm 0.1$& $81.9 \pm 0.2$& -- & $64.2 \pm 0.3$& $57.5 \pm 0.2$& -- \\

    \emph{D2L} & $87.7 \pm 0.2$& $84.4 \pm 0.3$& $72.7 \pm 0.6$& $54.0 \pm 1.0$& $29.7 \pm 1.8$& -- \\
    \emph{INCV}  & $89.5 \pm 0.1$& $86.8 \pm 0.1$& $81.1 \pm 0.3$ & $58.6 \pm 0.5$& $55.4 \pm 0.2$& $43.7 \pm 0.3$\\
    \emph{AUM}  & $90.2 \pm 0.0$& $87.5 \pm 0.1$& $82.1 \pm 0.0$ & $65.5 \pm 0.2$& $61.3 \pm 0.1$& $53.0 \pm 0.5$\\[0.5ex]
    \emph{DisagreeNet}+SL & $\mathbf{93.1 \pm 0.2}$  &  $\mathbf{91.1 \pm 0.1}$ & $\mathbf{83.9 \pm 0.08}$  &  $\mathbf{77.3 \pm 0.2}$  & $\mathbf{71.8 \pm 0.3}$   & $\mathbf{64.7 \pm 0.3}$ \\
 
    \hline
    oracle   & $95.1 \pm 0.2$ & $94.1 \pm 0.2$  & $92.4 \pm 0.1$  & $78.2 \pm 0.3$  & $75.4 \pm 0.1$ & $70.3 \pm 0.2$ \\
    \bottomrule
  \end{tabular}
  \vspace{-.5em}
  \begin{tabular}{l| c|c || c|c || c|c}
    \multicolumn{1}{ c }{} & \multicolumn{2}{ c }{} & \multicolumn{2}{ c }{} & \multicolumn{2}{ c}{}  \\ 
    \toprule
    \multicolumn{1}{ c |}{Method/\textbf{Dataset}} & \multicolumn{2}{ c ||}{\textbf{CIFAR-10 constant asym}} & \multicolumn{2}{ c ||}{\textbf{CIFAR-100 constant asym}}  & \multicolumn{2}{ c }{\textbf{Tiny Imagenet sym}}  \\ 
    \hline
    \multicolumn{1}{ l |}{Noise level}    & 20\% & 40\% &  20\% & 40\%  &  20\% & 40\% \\
    \hline
    \emph{random}   & $89.5 \pm 0.2$& $79.3 \pm 0.4$& $65.2 \pm 0.1$& $44.64 \pm 0.2$ & $49.8 \pm 0.4$ & $29.9 \pm 0.3$\\
    \emph{Bootstrap}  & $76.2 \pm 0.2$& $55.0 \pm 0.6$& $53.4 \pm 0.3$& $38.7 \pm 0.3$ & - & -\\
    \emph{D2L}  &  $88.6 \pm 0.2$& $76.4 \pm 1.5$& $43.6 \pm 0.7$& $16.9 \pm 1.2$ & - & - \\
    \emph{DY-BMM}  & $77.9 \pm 0.1$& $59.4 \pm 0.6$& $53.2 \pm 0.0$& $37.9 \pm 0.0$ & $41.8 \pm 0.1$& $36.3 \pm 0.2$ \\
    \emph{INCV}  &  $88.3 \pm 0.1$& $79.8 \pm 0.4$& $56.8 \pm 0.1$& $44.4 \pm 0.7$ & $45.2 \pm 0.1$& $42.6 \pm 0.1$\\
    \emph{AUM}  & $89.7 \pm 0.1$& $58.7 \pm 0.2$& $59.7 \pm 0.2$& $40.2 \pm 0.1$ & $48.9 \pm 0.2$& $44.7 \pm 0.1$\\[0.5ex]
    \emph{DisagreeNet}+SL   & $\textbf{94.4} \pm \textbf{0.1}$  & $\textbf{91.9} \pm \textbf{0.0}$  & $\textbf{73.9} \pm \textbf{0.5}$ & $\textbf{61.3} \pm \textbf{0.2}$  & $\textbf{64.5} \pm \textbf{0.1}$& $\textbf{58.5} \pm \textbf{0.2}$\\
    \hline
    oracle   &  $95.2 \pm 0.0$ & $94.3 \pm 0.0$  & $78.1 \pm 0.1$ & $75 \pm 0.1$  & $65.4 \pm 0.0$& $60.8 \pm 0.2$  \\
    
    \bottomrule
  \end{tabular}
  \vspace{-.5em}
  \begin{tabular}{l| c|c || c|c|}
    \multicolumn{1}{ c }{} &  \multicolumn{2}{ c }{} & \multicolumn{2}{ c}{}  \\ 
    \toprule
    \multicolumn{1}{ c |}{Method/\textbf{Dataset}} & \multicolumn{2}{ c ||}{\textbf{animal10N, 8\% noise}} & \multicolumn{2}{ c |}{\textbf{Clothing1M, 38\% noise}}  \\ 
    \hline
    \multicolumn{1}{ l |}{Noise level}    & noise est & test accuracy &  noise est & test accuracy \\
    \hline
    \emph{Cross-Entropy}   & - & $84.1 \pm 0.3$& - & $69$\\
    \emph{AUM}  & -  & - & 10.7& 70.4  \\[0.5ex]
    \emph{DisagreeNet}+SL   & 7.8  & $85.1 \pm 0.1$ & 17& $70.8$  \\

    \bottomrule
  \end{tabular}
\end{table}

\emph{DisagreeNet} is used to remove noisy examples, after which we train a deep model from scratch using the remaining examples only. We report our main results using the Densenet architecture, and report results with other architectures in the ablation study. Table~\ref{table:noise-supervised} summarizes the results for simulated symmetric and asymmetric noise on 5 datasets, and 3 repetitions. It also shows results on 2 real datasets, which are assumed (in previous work) to contain significant levels of 'real' label noise. Additional results are reported in App.~\ref{app:diff_assump}, including methods that require additional prior knowledge.

Not surprisingly, dealing with datasets that are presumed to include inherent label noise proved more difficult, and quite different, than dealing with synthetic noise. As claimed in \citep{MOIT}, non-malicious label noise does less damage to networks' generalization than random label noise: on Clothing1M, for example, hardly any overfit is seen during training, even though the data is believed to contain more than 35\% noise. Still, here too, \emph{DisagreeNet} achieves improved accuracy without access to a clean validation set or known noise level  (see Table \ref{table:noise-supervised}). In App.~\ref{app:diff_assump}, Table ~\ref{Table:diff_assump} we compare \emph{Disagreenet} to methods that \emph{do} use such prior knowledge. Surprisingly, we see that \emph{DisagreeNet} still achieves better results even  without using any additional prior knowledge. 

\vspace{-.5em}
\subsection{Ablation Study}
\label{sec:ablation}
\vspace{-0.2em}

\paragraph{How many networks are needed?} We report in Table.~\ref{Table:ablation_main} the F1 score for noisy label identification, using \emph{DisagreeNet} with varying numbers of networks. 
The main boost in performance provided by the use of additional networks is seen when using \emph{DisagreeNet} on hard noise scenarios, such as the asymmetric noise, or with small amounts of noise.

\begin{table}[th]
    \caption{F1 score of DisagreeNet, using different numbers of models.  
    }
    \label{Table:ablation_main}
\vspace{-1em}
\scriptsize
  \centering
  \begin{tabular}{c| c|  c|c|c|c|c|c}
     &  & \multicolumn{6}{ c }{}  \\ 
    \toprule
    \multicolumn{1}{ c |}{\textbf{Dataset}} & \multicolumn{1}{ c |}{\textbf{Noise}} & \multicolumn{6}{ c }{\textbf{size of ensemble (number of networks)}}  \\ 
    
    \midrule
    Method &  & 1 & 2 & 3 & 4 & 7 & 10  \\
    \hline
    
    \multirow{3}{*}{Cifar10 sym} 
    &$10\%$ &$0.605 \pm 0.01$ &$0.77 \pm 0.0$ &$0.862 \pm 0.0$ &$0.906 \pm 0.0$ &$0.941 \pm 0.0$ &$0.936 \pm 0.0$ \\
    &$20\%$ &$0.861 \pm 0.0$ &$0.939 \pm 0.0$ &$0.95 \pm 0.0$ &$0.949 \pm 0.0$ &$0.943 \pm 0.0$ &$0.941 \pm 0.0$ \\
    &$40\%$ &$0.954 \pm 0.0$ &$0.953 \pm 0.0$ &$0.952 \pm 0.0$ &$0.952 \pm 0.0$ &$0.951 \pm 0.0$ &$0.951 \pm 0.0$ \\
    \hline
    \multirow{3}{*}{Cifar100 sym} 
    &$10\%$ &$0.225 \pm 0.05$ &$0.855 \pm 0.0$ &$0.855 \pm 0.01$ &$0.854 \pm 0.0$ &$0.860 \pm 0.01$ &$0.864 \pm 0.01$ \\
    &$20\%$ &$0.89 \pm 0.0$ &$0.895 \pm 0.0$ &$0.896 \pm 0.0$ &$0.897 \pm 0.0$ &$0.901 \pm 0.0$ &$0.899 \pm 0.0$ \\
    &$40\%$ &$0.89 \pm 0.0$ &$0.917 \pm 0.0$ &$0.921 \pm 0.0$ &$0.924 \pm 0.0$ &$0.924 \pm 0.0$ &$0.927 \pm 0.0$ \\

    \hline
    \multirow{3}{*}{Cifar10 asym} 
    &$10\%$ &$0.355 \pm 0.0$ &$0.469 \pm 0.01$ &$0.568 \pm 0.01$ &$0.631 \pm 0.01$ &$0.748 \pm 0.0$ &$0.814 \pm 0.0$ \\
    &$20\%$ &$0.553 \pm 0.0$ &$0.642 \pm 0.01$ &$0.703 \pm 0.01$ &$0.734 \pm 0.0$ &$0.799 \pm 0.01$ &$0.829 \pm 0.01$ \\
    &$40\%$ &$0.739 \pm 0.0$ &$0.795 \pm 0.0$ &$0.816 \pm 0.0$ &$0.826 \pm 0.0$ &$0.824 \pm 0.0$ &$0.812 \pm 0.0$ \\
    \hline
    \multirow{3}{*}{Cifar100 asym} 
    &$10\%$ &$0.703 \pm 0.0$ &$0.708 \pm 0.0$ &$0.712 \pm 0.0$ &$0.716 \pm 0.0$ &$0.718 \pm 0.0$ &$0.717 \pm 0.0$ \\
    &$20\%$ &$0.727 \pm 0.0$ &$0.732 \pm 0.0$ &$0.732 \pm 0.0$ &$0.735 \pm 0.0$ &$0.736 \pm 0.0$ &$0.737 \pm 0.0$ \\
    &$40\%$ &$0.594 \pm 0.0$ &$0.606 \pm 0.0$ &$0.614 \pm 0.0$ &$0.614 \pm 0.0$ &$0.618 \pm 0.0$ &$0.62 \pm 0.0$ \\

    \hline
    \bottomrule
    \end{tabular}
    \end{table}

\paragraph{Additional ablation results}
Results in App.~\ref{app:ablationstudy}, Table~\ref{Table:ablation_secondry} indicate robustness to architecture, scheduler, and usage of augmentation, although the standard training procedures achieve the best results. Additionally, we see robustness to changing the backbone architecture of \emph{DisagreeNet}, using ResNet18 and ResNet50, see Table \ref{table:acc_resnet}. Finally, in Fig.~\ref{fig:score-comparisons-no-ext} we compare \emph{DisagreeNet} using ELP to disagreeNet using the MeanMargin and CumLoss scores, as defined in Section~\ref{sec:cum-scores}. In symmetric noise scenarios all scores perform well, while in asymmetric noise scenarios the ELP score performs much better, as can be seen in Figs.~\ref{cifar_asym:t},\ref{cifar_asym_2:t}. Additional comparisons of the 3 scores are reported in Apps.~\ref{app:ablationstudy} and \ref{app:uri}.

\begin{table}[thb]
  \caption{Final accuracy results when changing the backbone architecture.
  }

  \label{table:acc_resnet}
\vspace{-1em}
\footnotesize
  \centering
  \begin{tabular}{l| c|c|c||c | c|c}
    \toprule
    \multicolumn{1}{ c |}{Method/\textbf{Dataset}} & \multicolumn{3}{ c ||}{\textbf{CIFAR-10 sym}} & \multicolumn{3}{ c }{\textbf{CIFAR-100 sym}}   \\ 
    \hline
    \multicolumn{1}{ l |}{Noise level}    & 20\% & 40\% & 60\%  &  20\% & 40\% & 60\% \\
    \hline
       \emph{\scriptsize{DisagreeNet}+SL R18}   & ${93.3 \pm 0.1}$  &  ${91.1 \pm 0.6}$ & ${87.6 \pm 0.1}$  &  ${75.1 \pm 0.1}$  & ${71.5 \pm 0.1}$   & ${63.1 \pm 0.4}$ \\
       
     \emph{\scriptsize{DisagreeNet}+SL R50} & ${93.4 \pm 0.1}$  &  ${91.0 \pm 0.2}$ & ${87.0 \pm 0.1}$  &  ${75.7 \pm 0.3}$  & ${70.2 \pm 1.2}$   & ${61.0 \pm 0.4}$ \\

    \bottomrule
  \end{tabular}
\end{table}

\begin{figure}[htbp]
    \centering
    \begin{subfigure}[b]{.8\textwidth}
        \includegraphics[width=\textwidth]{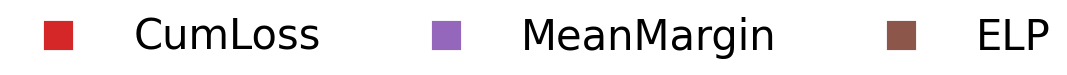}
    \vspace{-.5cm}
    \end{subfigure}
    \begin{subfigure}[b]{0.245\textwidth}
        \includegraphics[width=\textwidth]{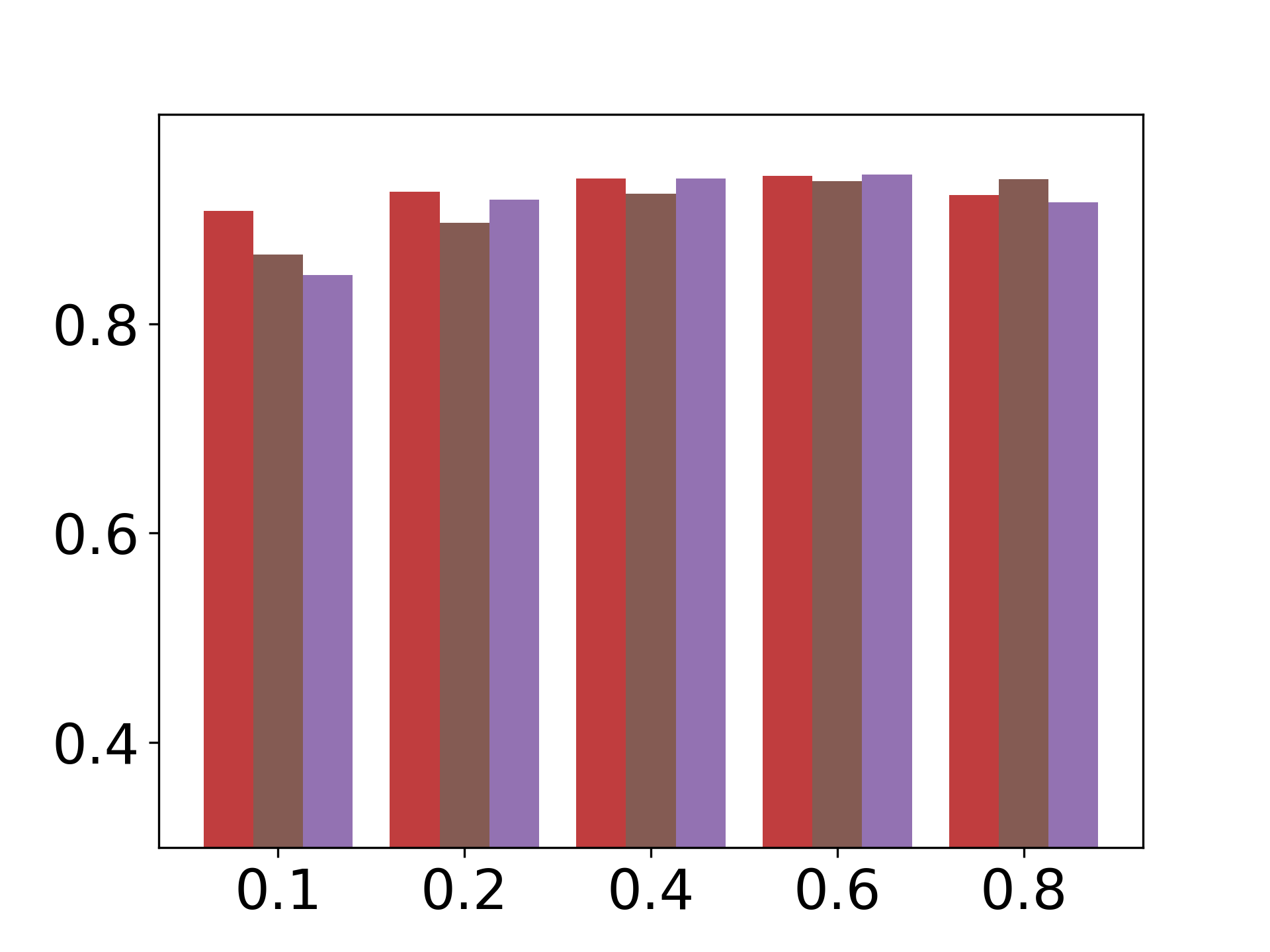}
    \vspace{-.5cm}
        \caption{Cifar100 sym}
    \end{subfigure}
    \begin{subfigure}[b]{0.245\textwidth}
        \includegraphics[width=\textwidth]{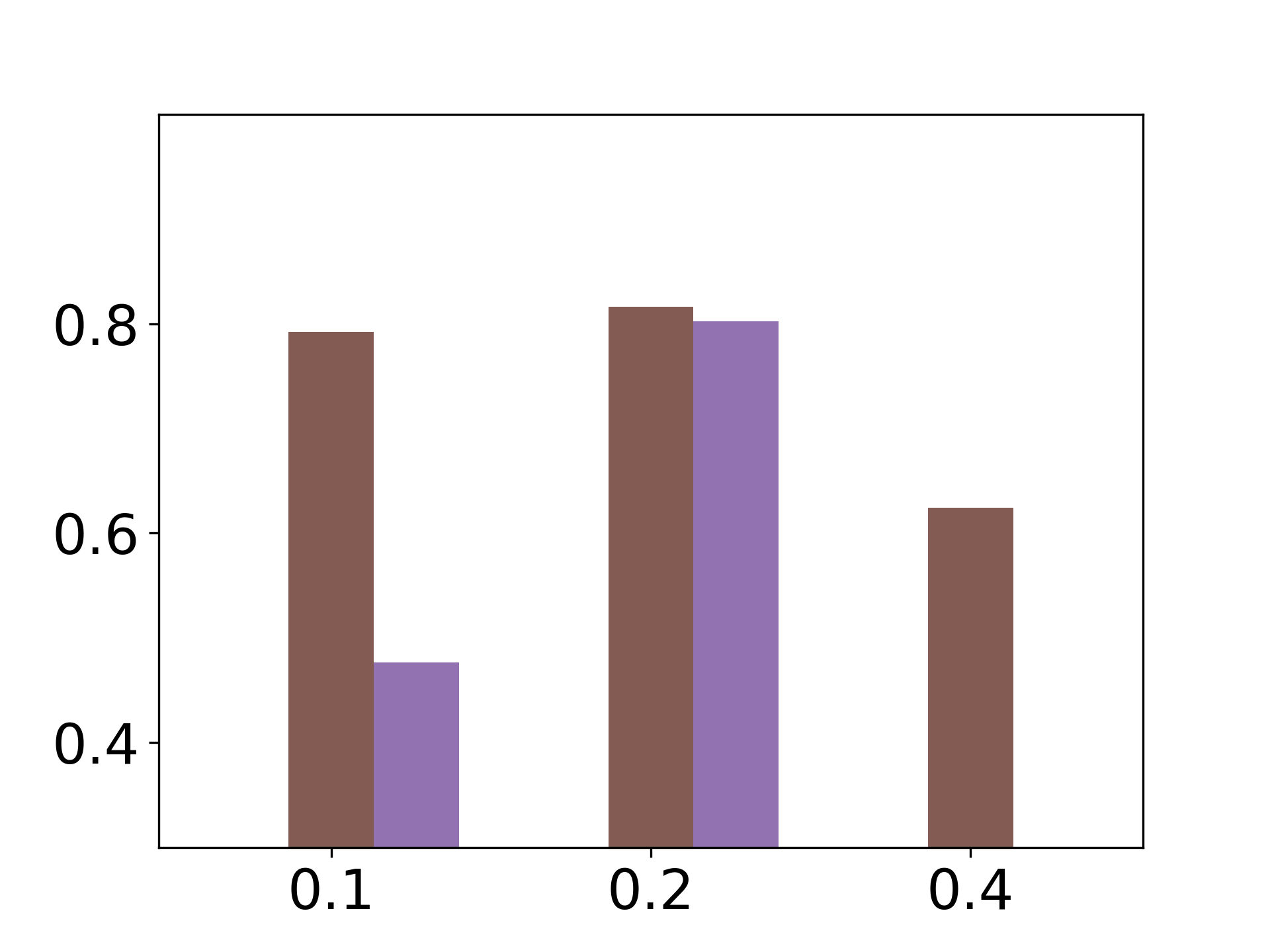}
    \vspace{-.5cm}
        \caption{Cifar100 asym}
        \label{cifar_asym:t}
    \end{subfigure}
    \begin{subfigure}[b]{0.245\textwidth}
        \includegraphics[width=\textwidth]{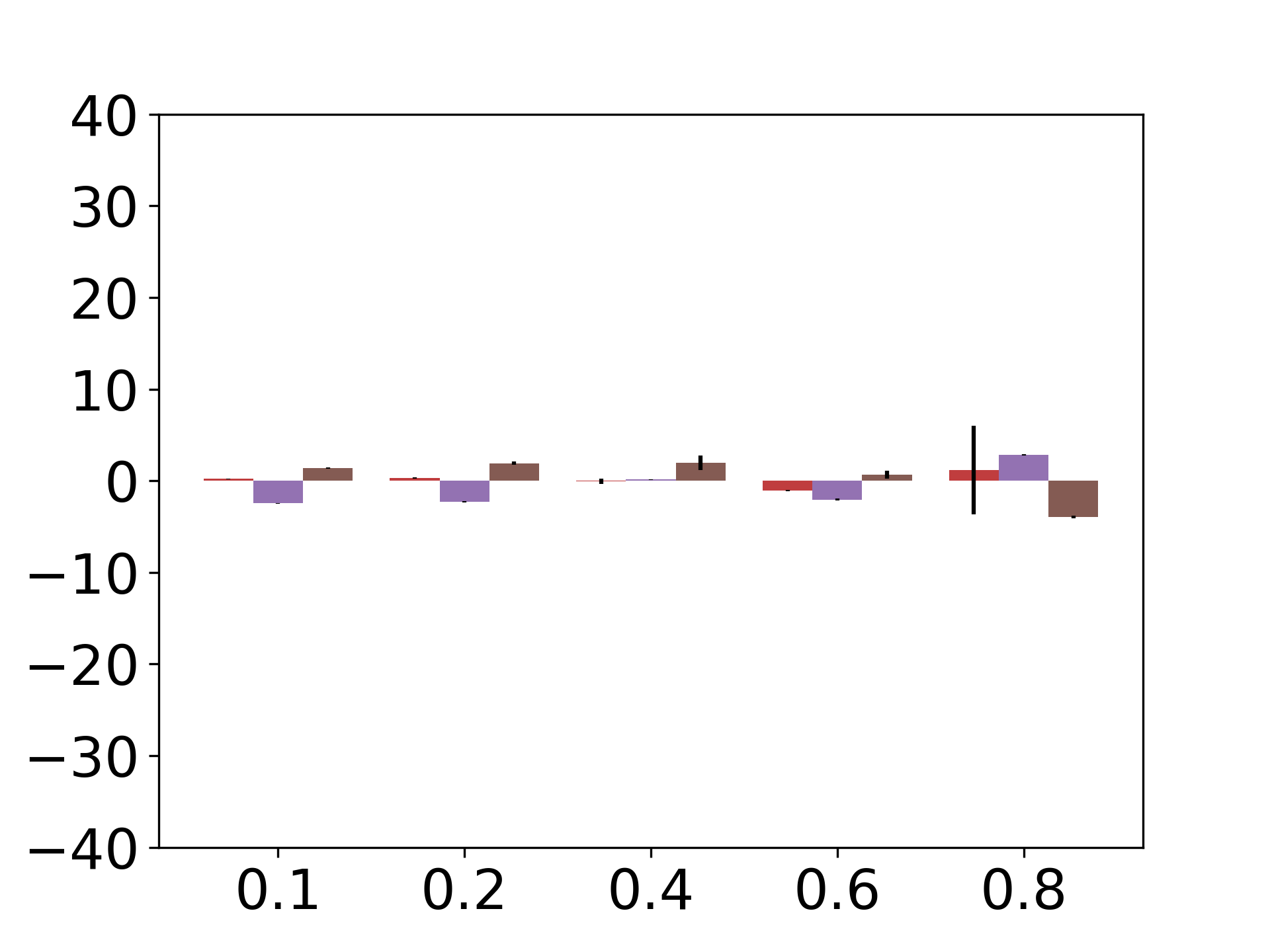}
    \vspace{-.5cm}
        \caption{Cifar100 sym}
    \end{subfigure}
    \begin{subfigure}[b]{0.245\textwidth}
        \includegraphics[width=\textwidth]{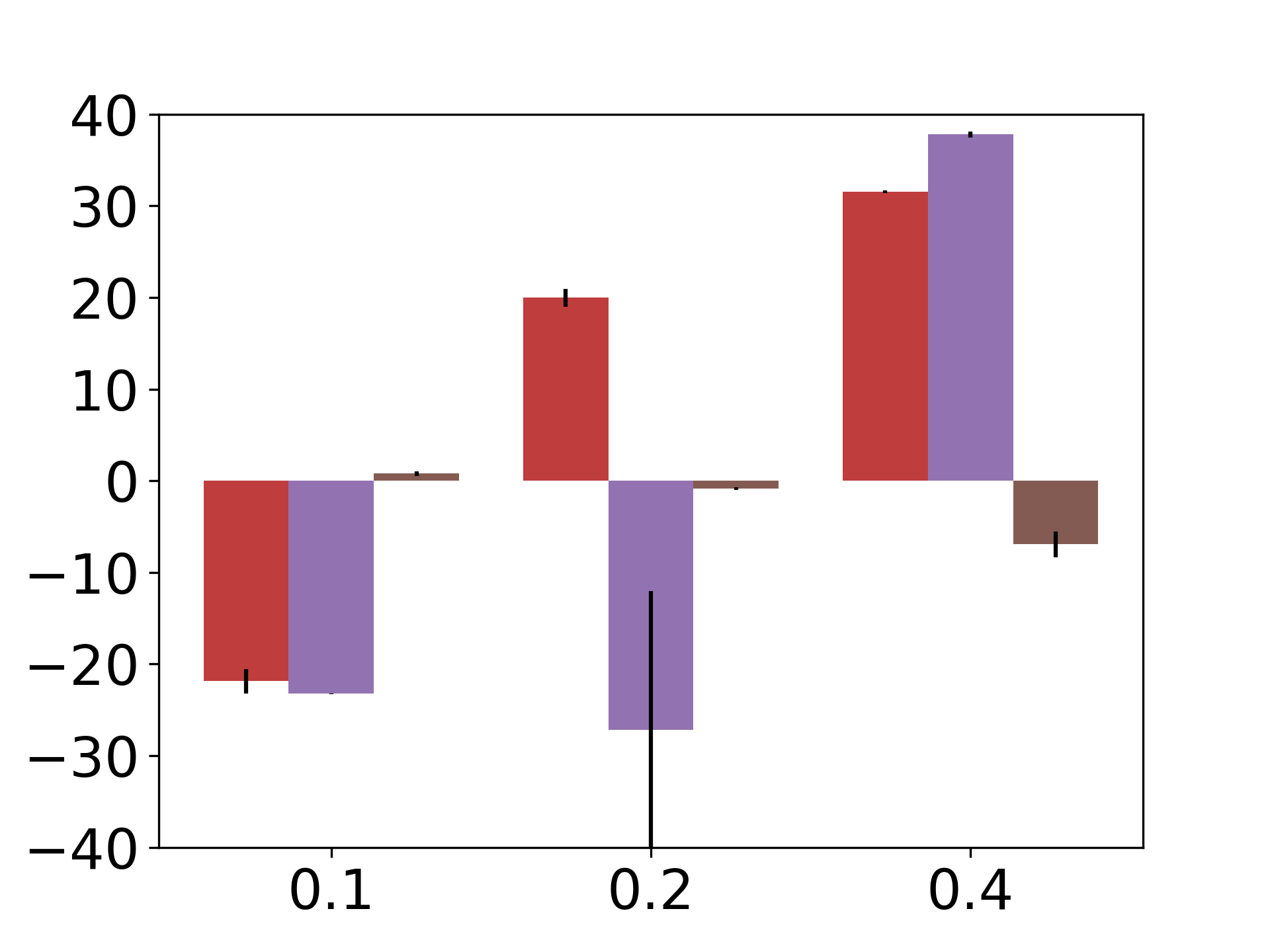}
    \vspace{-.5cm}
        \caption{Cifar100 asym}
        \label{cifar_asym_2:t}
    \end{subfigure}
     \caption[Estimation]{(a)-(b): F1 score ($Y$-axis) for the noisy label identification task, using different noise levels ($X$-axis), with asymmetric (a) and asymmetric (b) noise models. Results with 3 variants of \emph{DisagreeNet} are shown, based on 3 scores: MeanMargin, ELP and CumLoss. (c)-(d): Error in noise level estimation ($Y$-axis) using different noise levels ($X$-axis), with asymmetric (c) and asymmetric (d) noise models. As can be seen, ELP very significantly outperforms the other 2 scores when handling asymmetric noise. } 
     \label{fig:score-comparisons-no-ext}
     \vspace{-1.0em}
\end{figure}

\section{Summary and Discussion}

We presented a new empirical observation, that the variability in the predictions of an ensemble of deep networks is much larger when labels are noisy, than it is when labels are clean. This observation is used as a basis for a new method for classification with noisy labels, addressing along the way the tasks of noise level estimation, noisy labels identification, and classifier construction. Our method is easy to implement, and can be readily incorporated into existing methods for deep learning with label noise, including semi-supervised methods, to improve the outcome of the methods.

Importantly, our method achieves this improvement without making additional assumptions, which are commonly made by alternative methods: \begin{inparaenum}[(i)] \item Noise level is expected to be unknown. \item There is no need for a clean validation set, which many other methods require, but which is very difficult to acquire when the training set is corrupted. \item Almost no additional hyperparameters are introduced. \end{inparaenum}


\section*{Acknowledgments}
This work was supported by the Israeli Ministry of Science and Technology, and by the Gatsby Charitable Foundations.

\bibliographystyle{plainnat}
\bibliography{bib}

\clearpage
\appendix

\section*{Appendix}

\section{Overfit and inter-model correlation}
\label{sec:overfit-app}

In this section we formally analyze the relation between two type of scores, which  measure either overfit or inter-model agreement. \emph{Overfit} is a condition that can occur during the training of deep neural networks. It is characterized by the co-occurring decrease of train error or loss, which is continuously minimized during the training of a deep model, and the increase of test error or loss, which is the ideal measure one would have liked to minimize and which determines the network's generalization error. An \emph{agreement} score measures how similar the models are in their predictions. 

We start by introducing the model and some notations in Section~\ref{sec:notations}.  In Section~\ref{sec:overfit-agreement} we prove the main result (Prop.~\ref{lem:agreement}): the occurrence of overfit at time s in all the models of the ensemble implies that the agreement
between the models decreases.

\subsection{Model and notations}
\label{sec:notations}

\myparagraph{Model.}
We analyze the agreement between an ensemble of $Q$ models, computed by solving the linear regression problem with Gradient Descent (GM) and random initialization. In this problem, the learner estimates a linear function $f(\bx): \R^d\to \R$, where $\bx\in\R^d$ denotes an input vector and $y\in\R$ the desired output. Given a training set of $M$ pairs $\{\bx_m,y_m\}_{m=1}^M$, let $X\in\R^{d\times M}$ denote the training input - a matrix whose $m^\mathrm{th}$ column is $\bx_m\in\R^d$, and let row vector $\by\in\R^M$ denote the output vector whose $m^\mathrm{th}$ element is $y_m$. When solving a linear regression problem, we seek a row vector $\hat\hW\in\R^M$ that satisfies
\begin{equation}
\label{eq:lin-problem}
\hat\hW = \argmin\limits_\hW L(\hW), \qquad\qquad L(\hW) =  \frac{1}{2} \Vert \hW X-\by  \Vert_F^2
\end{equation}
To solve (\ref{eq:lin-problem}) with GD, we perform at each iterative step $s\geq 1$ the following computation:
\begin{equation}
\label{eq:gradient-step}
\begin{split}
\hW^{s+1} &= \hW^s -\mu \dW^s \\
\dW^s &= \frac{\partial L(\iX)}{\partial\hW}\bigg\vert_{\hW=\hW^s} = \hW^s\SXX-\SYX \qquad\qquad \SXX=X X^\top, ~\SYX=\by  X^\top
\end{split}
\end{equation}
for some random initialization vector $\hW_0\in\R^M$ where usually $\E[\hW_0]=0$, and learning rate $\mu$. Henceforth we omit the index $s$ when self evident from context.

\myparagraph{Additional notations}
Below, index $i$ denotes a network instance and $t$ denotes the test data. 
\begin{itemize}
    \item Let $X(i)$ denotes the training matrix used to train network $i\in[Q]$, while $X(t)$ denotes the test matrix. Accordingly, 
\begin{equation*}
\SXX(i)=X(i) X(i)^\top, \quad\SYX (i)=\by (i) X(i)^\top
\end{equation*}
    \item Let $\dW (i)$ denote the gradient step of $\hW(i)$, the $i^\mathrm{th}$ mdoel learned from training set $\{X(i),\by (i)\}$, and let $\bE(i,j)$ denote the cross error of $\hW(i)$ on $\{X(j),\by (j)\}$. Then
\begin{equation*}
\begin{split}
\dW (i) &= \hW (i) \SXX (i) -\SYX (i)  \\
\bE(i,j) &= \hW(i)X(j)-\by (j) \qquad\implies\qquad  \dW(i) = \bE(i,i) X(i)^\top
\end{split}
\end{equation*}
    \item Let $\DW(i,j)$ denote the cross gradient:
\begin{equation}
\label{eq:cross}
\begin{split}
\DW(i,j) = \bE(i,j)X(j)^\top = \hW(i)\SXX(j)-\SYX(j) \quad
\implies\quad  \dW(i) = \DW(i,i)
\end{split}
\end{equation}
 \end{itemize}
After each GD step, the model and the error are updated as follows:
\begin{equation*}
\begin{split}
\tilde\hW(i) &= \hW (i) - \mu\dW (i) \\ 
\tilde\bE(i,j) &= \tilde\hW(i)X(j)-\by (j) = \bE(i,j)- \mu\DW (i,i)X(j)
\end{split}
\end{equation*}
We note that at step $s$ and $\forall i,j$, $\tilde\hW(i)$ is a random vector in  $\R^d$, and $\tilde\bE(i,j)$ is a random vector in $\R^M$. 

\myparagraph{Test error random variable.}
Let $N$ denote the number of test examples. Note that $\{\bE(i,t)\}_{i=1}^Q$ is a set of $Q$ test errors vectors in $\R^N$, where the $n^\mathrm{th}$ component of the $i^\mathrm{th}$ vector $\bE(i,t)_n$ captures the test error of model $i$ on test example $n$. In effect, it is a sample of size $Q$ from the random variable $\bE(*,t)_n$. This random variable captures the error over test point $n$ of a model computed from a random sample of $M$ training examples. The empirical variance of this random variable will be used to estimate the agreement between the models.

\myparagraph{Overfit.}
Overfit occurs at step $s$ if 
\begin{equation}
\label{eq:overfit-def}
\Vert \tilde\bE(i,t) \Vert_F^2 >  \Vert \bE(i,t)  \Vert_F^2
\end{equation}

\myparagraph{Measuring inter-model agreement.}
In classification problems, bi-modality of the ELP score captures the agreement between a set of classifiers, all trained on the same training matrix $X(i)=X$. Since here we are analyzing a regression problem, we need a comparable score to measure agreement between the predictions of $Q$ linear functions. This measure is chosen to be the variance of the test error among models. Accordingly, we will measure \emph{disagreement} by the empirical variance of the test error random variable $\tilde\bE(*,t)_n$, average over all test examples $n\in[N]$.

More specifically, consider an ensemble of linear models $\{w(i)\}_{i=1}^Q$ trained on set $\iX$ to minimize (\ref{eq:lin-problem}) with $s$ gradient steps, where $i$ denotes the index of a network instance and $Q$ the number of network instances. Using the test error vectors of these models $e(i,t)$, we compute the empirical variance of each element $\var[\bE(*,t)_n]$, and sum over the test examples $n\in[N]$:
\begin{equation*}
\begin{split}
\sum_{n=1}^N\sigma^2[e(*,t)_n ] =\sum_{n =1}^N\frac{1}{2Q^2}\sum_{i=1}^Q\sum_{j=1}^Q \vert e(i,t)_n -e(j,t)_n \vert^2 =\frac{1}{2Q^2}\sum_{i=1}^Q\sum_{j=1}^Q \Vert e(i,t)-e(j,t)\Vert^2 
\end{split}
\end{equation*}

\begin{defn}[Inter-model DisAgreement.]
The disagreement among a set of $Q$ linear models $\{w(i)\}_{i=1}^Q$ at step $s$ is defined as follows
\begin{equation}
\label{eq:agreement-s}
DisAg(s) = \frac{1}{2Q^2}\sum_{i=1}^Q\sum_{j=1}^Q \Vert e(i,t)-e(j,t)\Vert^2 
\end{equation}
\end{defn}

\subsection{Overfit and Inter-Network Agreement}
\label{sec:overfit-agreement}

\begin{lemma}
\label{lem:overfit}
Assume that the learning rate $\mu$ is small enough so that we can neglect terms that are $O(\mu^2)$. Then in each gradient descent step $s$, overfit occurs iff the gradient step $\dW(i) $ of network $i$ is negatively correlated with the cross gradient $\DW(i,t)$.
\end{lemma}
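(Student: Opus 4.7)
My plan is to unfold the definition of overfit, expand the updated test error to first order in $\mu$, and identify the resulting inner product with $\langle \dW(i), \DW(i,t)\rangle$ using the cross-gradient identity from (\ref{eq:cross}).

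First I would write the updated test error using the notations already introduced in the excerpt:
\begin{equation*}
\tilde\bE(i,t) = \tilde\hW(i)X(t) - \by(t) = \bE(i,t) - \mu\,\dW(i)\,X(t),
\end{equation*}
since $\tilde\hW(i) = \hW(i) - \mu\dW(i)$ by the GD rule. Substituting this into the overfit condition (\ref{eq:overfit-def}) and expanding the squared norm gives
\begin{equation*}
\Vert \tilde\bE(i,t) \Vert_F^2 - \Vert \bE(i,t) \Vert_F^2 = -2\mu\,\langle \bE(i,t),\,\dW(i)X(t)\rangle + \mu^2 \Vert \dW(i)X(t)\Vert_F^2 .
\end{equation*}
Under the stated assumption, I drop the $O(\mu^2)$ term.

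The next step is to recognize the first-order term as an inner product between the two gradients. Using the cyclic property of the dot product together with the cross-gradient identity from (\ref{eq:cross}),
\begin{equation*}
\langle \bE(i,t),\,\dW(i)X(t)\rangle = \bigl(\bE(i,t) X(t)^\top\bigr)\dW(i)^\top = \DW(i,t)\,\dW(i)^\top = \langle \DW(i,t),\,\dW(i)\rangle.
\end{equation*}
Combining the two displays, to leading order in $\mu$,
\begin{equation*}
\Vert \tilde\bE(i,t) \Vert_F^2 - \Vert \bE(i,t) \Vert_F^2 \;=\; -2\mu\,\langle \DW(i,t),\,\dW(i)\rangle .
\end{equation*}
Since $\mu>0$, the left-hand side is positive (i.e., overfit occurs in the sense of (\ref{eq:overfit-def})) if and only if $\langle \DW(i,t),\,\dW(i)\rangle < 0$, which is precisely the stated negative-correlation condition.

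The proof is essentially a single first-order expansion plus a transposition identity; I do not expect any real obstacle beyond keeping the row/column conventions consistent (recall that $\hW(i)$ and the gradients are row vectors, while $X(\cdot)$ has examples as columns), and being explicit that the equivalence is modulo the $O(\mu^2)$ term that the hypothesis allows us to discard.
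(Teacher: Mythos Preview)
Your proof is correct and follows essentially the same approach as the paper: expand $\Vert\tilde\bE(i,t)\Vert_F^2-\Vert\bE(i,t)\Vert_F^2$ to first order in $\mu$, then rewrite the cross term $\bE(i,t)X(t)^\top$ as $\DW(i,t)$ to obtain the sign equivalence. The paper's version is a touch terser (it writes $\DW(i,i)$ in place of your $\dW(i)$ and skips the explicit transposition identity), but the argument is the same.
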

\begin{proof}
Starting from (\ref{eq:overfit-def})
\begin{equation}
\label{eq:overfit}
\begin{split}
\texp{overfit} \iff & \Vert \tilde\bE(i,t) \Vert_F^2 >  \Vert \bE(i,t)  \Vert_F^2  \\
\iff & \Vert \tilde\bE(i,t) \Vert_F^2 -  \Vert \bE(i,t)  \Vert_F^2 = \Vert \bE(i,t) - \mu\DW (i,i)X(t)\Vert_F^2 -  \Vert \bE(i,t)  \Vert_F^2 > 0\\
\iff & -2\mu\DW(i,i)X(t)\bE(i,t)^\top + O(\mu^2) > 0 \\
\iff &  \DW(i,i)\cdot\DW(i,t)  < 0  
\end{split}
\end{equation}
\end{proof}

\begin{lemma}
\label{lem:2}
Assume that $\Vert I-\mu\SXX\Vert < 1$ and $\SXX$ is invertible. If the number of gradient steps $s$ is large enough so that $\Vert I-\mu\SXX\Vert^{s}$ is negligible, then
\begin{equation}
\label{eq:lim}
\E[\hW^{s}]  = \SYX \SXX^{-1} 
\end{equation}
\end{lemma}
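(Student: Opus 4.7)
The plan is to solve the gradient descent recursion in closed form, take expectations using the zero-mean initialization, and then invoke the assumption $\Vert I-\mu\SXX\Vert<1$ to discard the transient term. First I would rewrite the update rule from (\ref{eq:gradient-step}) as the affine recurrence
\begin{equation*}
\hW^{s+1} = \hW^{s}(I-\mu\SXX) + \mu\SYX,
\end{equation*}
so that, setting $A:=I-\mu\SXX$, we have $\hW^{s+1}=\hW^{s}A+\mu\SYX$.

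Unrolling this recursion for $s$ steps gives
\begin{equation*}
\hW^{s} = \hW^{0}A^{s} + \mu\SYX\sum_{k=0}^{s-1}A^{k}.
\end{equation*}
Since $\SXX$ is assumed invertible, $I-A=\mu\SXX$ is invertible as well, and the standard matrix geometric series identity yields $\sum_{k=0}^{s-1}A^{k} = (I-A^{s})(I-A)^{-1} = (I-A^{s})(\mu\SXX)^{-1}$. Substituting,
\begin{equation*}
\hW^{s} = \hW^{0}A^{s} + \SYX(I-A^{s})\SXX^{-1} = \SYX\SXX^{-1} + \bigl(\hW^{0}-\SYX\SXX^{-1}\bigr)A^{s}.
\end{equation*}

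Taking expectations and using the standing assumption $\E[\hW^{0}]=0$,
\begin{equation*}
\E[\hW^{s}] = \SYX\SXX^{-1} - \SYX\SXX^{-1}A^{s} = \SYX\SXX^{-1}\bigl(I-A^{s}\bigr).
\end{equation*}
Finally, the hypothesis $\Vert A\Vert=\Vert I-\mu\SXX\Vert<1$ implies $\Vert A^{s}\Vert\le\Vert A\Vert^{s}\to 0$, so by choosing $s$ large enough that $\Vert A^{s}\Vert$ is negligible, the correction term vanishes and we obtain $\E[\hW^{s}]=\SYX\SXX^{-1}$, as claimed.

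There is no real obstacle here: the only subtlety is justifying the matrix geometric series, which is handled by the contraction assumption together with the invertibility of $\SXX$ that allows us to factor $(I-A)^{-1}=(\mu\SXX)^{-1}$ explicitly rather than leaving the sum unresolved. Everything else is a routine linear-algebra calculation and an application of linearity of expectation.
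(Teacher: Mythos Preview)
Your proof is correct and follows essentially the same approach as the paper: unroll the affine recursion $\hW^{s+1}=\hW^{s}(I-\mu\SXX)+\mu\SYX$, take expectations using $\E[\hW^{0}]=0$, sum the resulting matrix geometric series via invertibility of $\SXX$, and discard the transient $(I-\mu\SXX)^{s}$ term by the contraction hypothesis. The only cosmetic difference is that you keep the exact $\hW^{s}$ before taking expectation (and your indices are cleaner), whereas the paper passes to $\E[\hW^{s}]$ first; the argument is otherwise identical.
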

\begin{proof}
Starting from (\ref{eq:gradient-step}), we can show that 
\begin{equation*}
\hW^{s} = \hW^0(I-\mu\SXX)^{s-1} + \mu\SYX \sum_{k=1}^{s-1} (I-\mu\SXX)^{k-1} 
\end{equation*}
Since $\E(\hW^0)=0$
\begin{equation*}
\E(\hW^{s}) = \E(\hW^0)(I-\mu\SXX)^{s-1} + \mu\SYX \sum_{k=1}^{s-1} (I-\mu\SXX)^{k-1}  = \mu\SYX \sum_{k=1}^{s-1} (I-\mu\SXX)^{k-1} 
\end{equation*}
Given the lemma's assumptions, this expression can be evaluated and simplified:
\begin{equation}
\label{eq:shaky}
\begin{split}
\E(\hW^{s}) &=  \mu\SYX  [I-(I-\mu\SXX)]^{-1} [I-(I-\mu\SXX)^{s-1}]  \\
&= \SYX \SXX^{-1}-\SYX \SXX^{-1}(I-\mu\SXX)^{s-1}  \\ 
&\approx \SYX \SXX^{-1}  
\end{split}
\end{equation}
\end{proof}

From (\ref{eq:agreement-s}) it follows that a decrease in inter-model agreement at step $s$, which is implied by increased test variance among models, is indicated by the following inequality:
\begin{equation}
\label{eq:agreement}
\Cr = DisAg(s)-DisAg(s-1) = \frac{1}{2Q^2}\sum_{i,j=1}^Q  \Vert \tilde\bE(i,t)-\tilde\bE(j,t)\Vert^2  - \frac{1}{2Q^2}\sum_{i,j=1}^Q \Vert \bE(i,t)-\bE(j,t)\Vert^2  ~>~0
\end{equation}

\begin{result*}
\label{lem:agreement}
We make the following asymptotic assumptions:
\begin{enumerate}
\item
All models see the same training set, where $X(i)=X~\forall i\in[Q]$.
\item
The learning rate $\mu$ is small enough so that we can neglect terms that are $O(\mu^2)$. 
\item
$\Vert I-\mu\SXX\Vert < 1$, $\SXX$ is invertible, and the number of gradient steps $s$ is large enough so that $\Vert I-\mu\SXX\Vert^{s}$ can be neglected. 
\item
The number of models $Q$ is large enough so that $\frac{1}{Q}\sum_{i=1}^Q  \hW(i) \approx \E[\hW] $.
\end{enumerate}
When these assumptions hold, the occurrence of overfit at time $s$ in all the models of the ensemble implies that the agreement between the models decreases.
\end{result*}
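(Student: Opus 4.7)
The plan is to expand $\Cr = \mathrm{DisAg}(s)-\mathrm{DisAg}(s-1)$ to first order in $\mu$ and show it is strictly positive under the overfit hypothesis. Using Assumption~1 ($X(i)=X$), the GD update rule gives $\tilde\bE(i,t)-\tilde\bE(j,t) = \bE(i,t)-\bE(j,t) - \mu[\dW(i)-\dW(j)]X(t)$. Squaring, dropping $O(\mu^2)$ terms by Assumption~2, and invoking the identity $\bE(i,t)X(t)^\top = \DW(i,t)$ from~(\ref{eq:cross}), the difference of the two squared norms collapses to
\begin{equation*}
\Vert\tilde\bE(i,t)-\tilde\bE(j,t)\Vert^2 - \Vert\bE(i,t)-\bE(j,t)\Vert^2 \;=\; -2\mu\,(\DW(i,t)-\DW(j,t))(\dW(i)-\dW(j))^\top + O(\mu^2).
\end{equation*}
Summing over $i,j\in[Q]$ and dividing by $2Q^2$ as in~(\ref{eq:agreement-s}) yields the compact form
\begin{equation*}
\Cr \;\approx\; -2\mu\left[\frac{1}{Q}\sum_{i=1}^Q \DW(i,t)\dW(i)^\top - \Bigl(\frac{1}{Q}\sum_{i=1}^Q \DW(i,t)\Bigr)\Bigl(\frac{1}{Q}\sum_{i=1}^Q \dW(i)\Bigr)^\top\right],
\end{equation*}
namely $-2\mu$ times the empirical cross-covariance between the training gradients $\{\dW(i)\}$ and the test gradients $\{\DW(i,t)\}$.

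Next I would invoke Lemma~\ref{lem:2} together with Assumption~4 to kill the subtracted product term: since $(1/Q)\sum_i \hW(i)\approx \E[\hW]=\SYX\SXX^{-1}$, linearity gives $(1/Q)\sum_i \dW(i) = \bigl[(1/Q)\sum_i \hW(i)\bigr]\SXX - \SYX \approx 0$. Hence $\Cr \approx -(2\mu/Q)\sum_{i=1}^Q \DW(i,t)\dW(i)^\top$. At this point Lemma~\ref{lem:overfit} closes the argument: the hypothesis that overfit occurs in every model at step $s$ means $\dW(i)\,\DW(i,t)^\top<0$ for every $i\in[Q]$, so the sum is strictly negative and $\Cr>0$, i.e., the ensemble variance of test errors strictly increases and inter-model agreement decreases.

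The step I expect to require the most care is control of the three small quantities that are being neglected simultaneously: the $O(\mu^2)$ higher-order term from the gradient-step expansion (Assumption~2), the transient factor $\Vert I-\mu\SXX\Vert^{s-1}$ dropped in passing from~(\ref{eq:shaky}) to~(\ref{eq:lim}) (Assumption~3), and the sampling error in replacing the empirical mean $(1/Q)\sum_i \hW(i)$ by $\E[\hW]$ (Assumption~4). One must verify that none of these is of the same order as the leading term $-(2\mu/Q)\sum_i \DW(i,t)\dW(i)^\top$ under the combined asymptotic regime. Once the four assumptions are granted at face value, the result is essentially immediate: overfit is first-order information about the scalar $\dW(i)\cdot\DW(i,t)$, and the change in disagreement is the empirical mean of exactly these same scalars, up to a centering correction that vanishes because the ensemble mean has converged to the least-squares minimizer.
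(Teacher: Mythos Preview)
Your proposal is correct and follows essentially the same route as the paper's proof: expand the change in $\mathrm{DisAg}$ to first order in $\mu$, use $\bE(i,t)X(t)^\top=\DW(i,t)$ to rewrite the cross term, apply Lemma~\ref{lem:2} together with Assumption~4 to kill the mean-gradient correction $\frac{1}{Q}\sum_i\dW(i)\approx 0$, and conclude via Lemma~\ref{lem:overfit}. Your only cosmetic difference is that you package the first-order term directly as an empirical cross-covariance between $\{\dW(i)\}$ and $\{\DW(i,t)\}$, whereas the paper splits it into two sums $\Cr'$ and $\Cr''$ and shows $\Cr'\approx 0$ separately; the algebra and the use of the assumptions are identical.
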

\begin{proof}

(\ref{eq:agreement}) can be rearranged as follows
\begin{equation*}
\begin{split}
\Cr &= \frac{1}{2Q^2}\sum_{i,j=1}^Q  \Vert [\bE(i,t)-\mu\DW (i,i)X(t)]-[\bE(j,t)-\mu\DW (j,j)X(t)]\Vert^2  - \frac{1}{Q^2}\sum_{i,j=1}^Q \Vert \bE(i,t)-\bE(j,t)\Vert^2  \\
& =\frac{1}{Q^2} \sum_{i,j=1}^Q  -\mu [\bE(i,t)-\bE(j,t)]\cdot[\DW (i,i)X(t)]-\DW (j,j)X(t)] +O(\mu^2) \\
&=  \frac{\mu}{Q^2}\sum_{i,j=1}^Q \left [ \DW(i,i)\cdot \DW(j,t)+ \DW(j,j)\cdot \DW(i,t)\right ] - \left [ \DW(i,i)\cdot \DW(i,t)+ \DW(j,j)\cdot \DW(j,t)\right ] +O(\mu^2)\\
\end{split}
\end{equation*}
where the last transition follows from $\bE(i,t)X(t)^\top\!=\DW(i,t)$. Using assumption 2
\begin{equation}
\Cr =\mu (\Cr' - \Cr'')+O(\mu^2) \approx\mu (\Cr' - \Cr'')
\label{eq:C}
\end{equation}
where
\begin{equation}
\label{eq:cr''}
\Cr'' =\frac{1}{Q^2}\sum_{i,j=1}^Q\left [ \DW(i,i)\cdot \DW(i,t)+ \DW(j,j)\cdot \DW(j,t)\right ] = \frac{2}{Q}\sum_{i=1}^Q\DW(i,i)\cdot \DW(i,t)
\end{equation}
and
\begin{equation}
\label{eq:cr'}
\begin{split}
\Cr' &=\frac{1}{Q^2}\sum_{i,j=1}^Q \left [ \DW(i,i)\cdot \DW(j,t)+ \DW(j,j)\cdot \DW(i,t)\right ] \\
&=\frac{1}{Q}\sum_{i=1}^Q  \DW(i,i)\cdot \frac{1}{Q}\sum_{j=1}^Q\DW(j,t)+ \frac{1}{Q}\sum_{j=1}^Q\DW(j,j)\cdot \frac{1}{Q}\sum_{i=1}^Q\DW(i,t) \\
&=\frac{1}{Q}\sum_{i=1}^Q  \DW(i,i)\cdot \frac{2}{Q}\sum_{j=1}^Q\DW(j,t)
\end{split}
\end{equation}

Next, we prove that $\Cr'$ is approximately 0. We first deduce from assumptions 1 and 4 that
\begin{equation*}
\frac{1}{Q}\sum_{i=1}^Q  \DW(i,i) = \frac{1}{Q}\sum_{i=1}^Q  \hW (i) \SXX (i) -\SYX (i) = \left (\frac{1}{Q}\sum_{i=1}^Q  \hW (i) \right ) \SXX -\SYX \approx \E[\hW] \SXX -\SYX
\end{equation*}
From assumption 3 and Lemma~\ref{lem:2}, we have that $\E[\hW] \approx \SYX \SXX^{-1}$. Thus
\begin{equation*}
\frac{1}{Q}\sum_{i=1}^Q  \DW(i,i) \approx \E[\hW] \SXX -\SYX \approx \SYX \SXX^{-1}\SXX -\SYX = 0
\end{equation*}
From this derivation and (\ref{eq:cr'}) we may conclude that $\Cr' \approx 0$. Thus
\begin{equation}
\label{eq:c-agree}
\Cr \approx -\mu \Cr''= -\mu\frac{2}{Q}\sum_{i=1}^Q\DW(i,i)\cdot \DW(i,t)
\end{equation}

If overfit occurs at time $s$ in all the models of the ensemble, then $\Cr>0$ from Lemma~\ref{lem:overfit} and (\ref{eq:c-agree}). From (\ref{eq:agreement}) we may conclude that the inter-model agreement decreases, which concludes the proof.

\end{proof}

\section{Noisy Labels and Inter-Model Agreement}
\label{app:noisy-IB}

\begin{figure}[htb]
\centering
\begin{subfigure}[tb]{0.49\textwidth}
    \includegraphics[width=\linewidth]{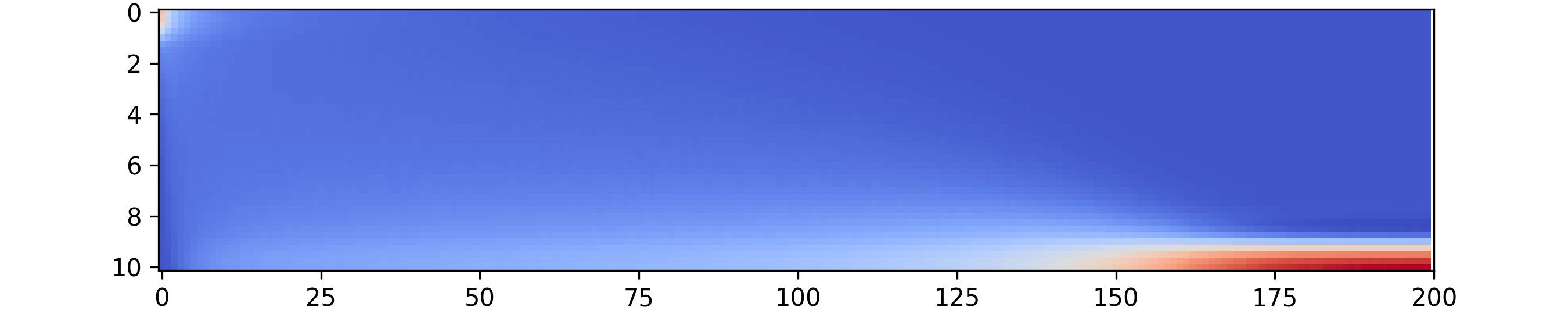}
  \centering
  \caption{Empirical distribution - Clean examples}
\end{subfigure}
\begin{subfigure}[tb]{0.49\textwidth}
    \includegraphics[width=\linewidth]{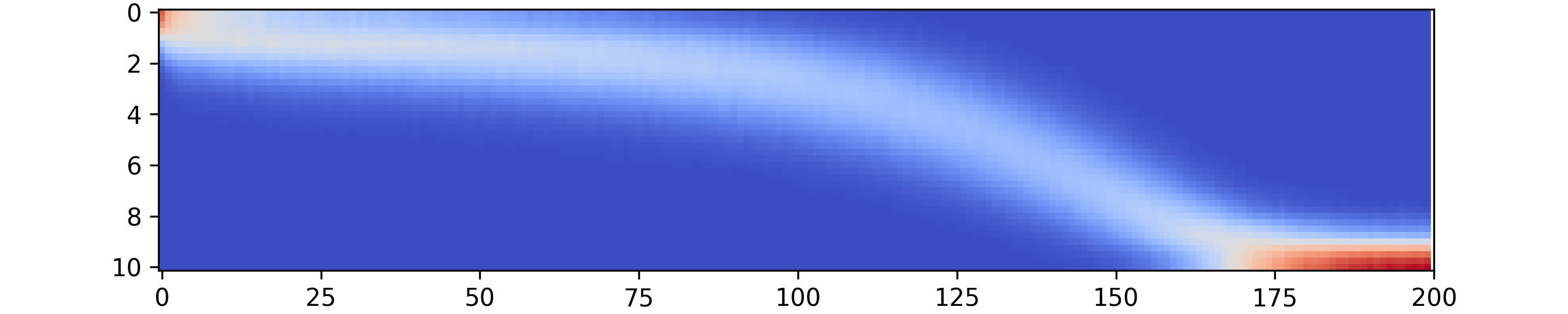}
  \centering
  \caption{Empirical distribution - Noisy examples}
\end{subfigure}
\begin{subfigure}[tb]{0.49\textwidth}
    \includegraphics[width=\linewidth]{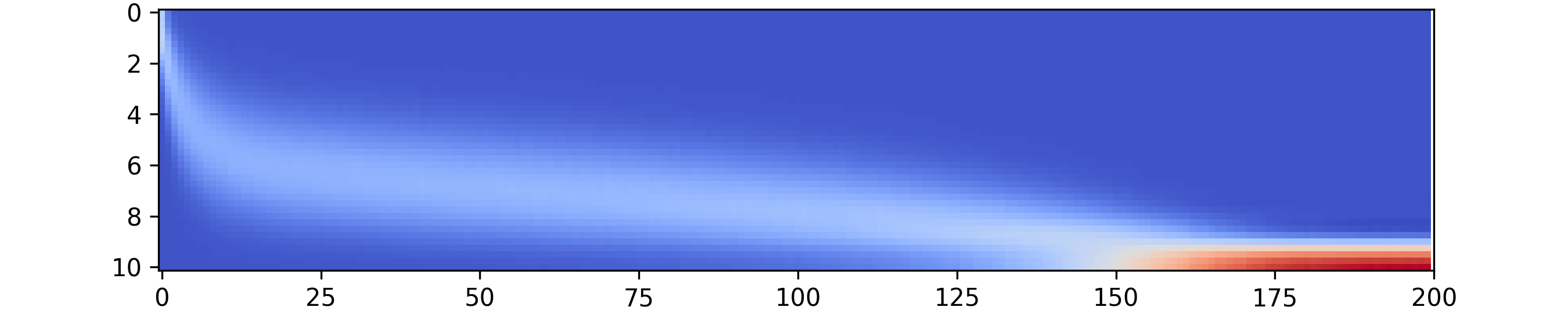}
  \centering
  \caption{Binomial distribution - Clean examples}
\end{subfigure}
\begin{subfigure}[tb]{0.49\textwidth}
    \includegraphics[width=\linewidth]{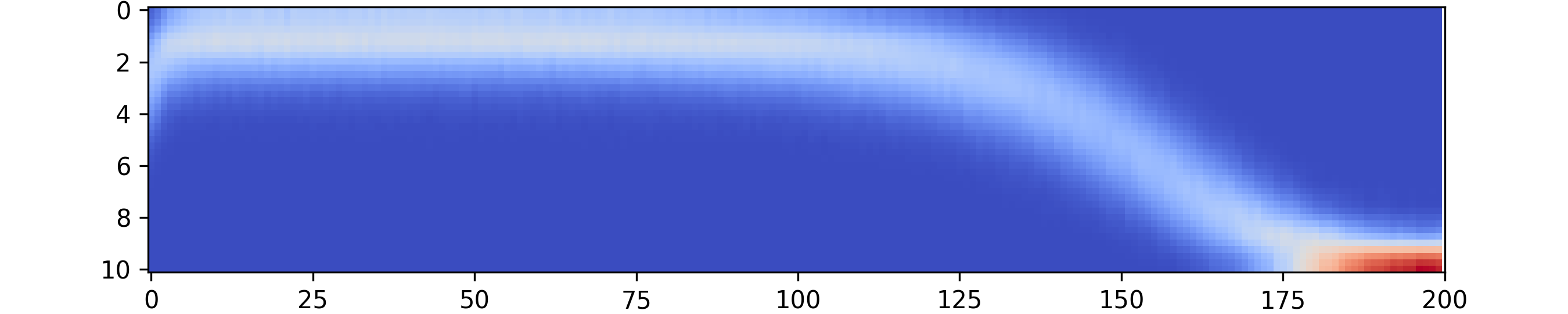}
  \centering
  \caption{Binomial distribution - Noisy examples}
\end{subfigure}
\begin{subfigure}[tb]{.5\textwidth}
  \includegraphics[width=\linewidth]{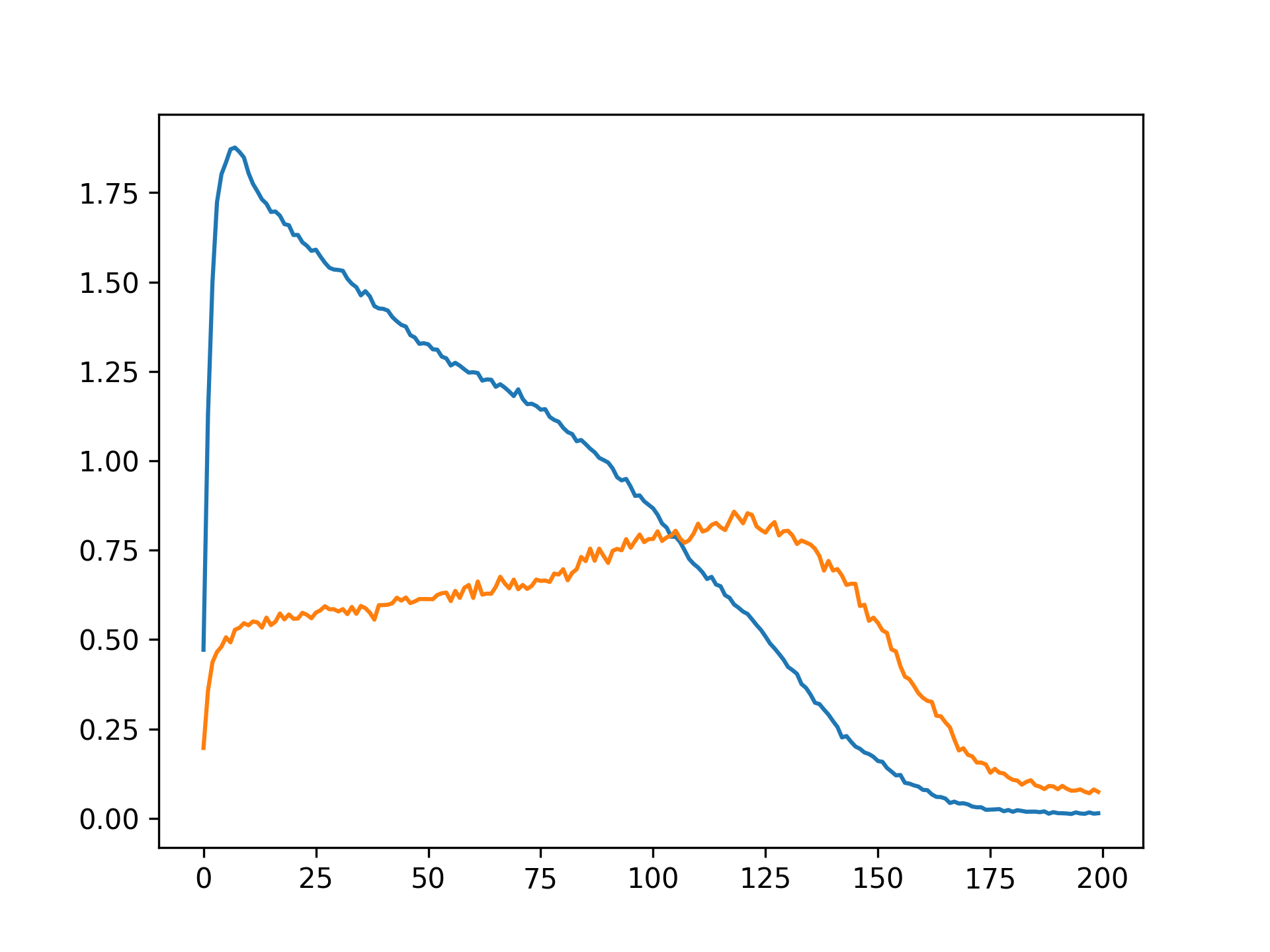}
  \centering
\vspace{-1em}
\caption{}%
 \end{subfigure}
\vspace{-1em}
      \caption{(a)-(d): The empirical distribution of the agreement values over epochs ($X$-axis: epochs, $Y$-axis: agreement, color code: blue for low and red for high). Clearly, the distribution of noisy examples resembles the binomial distribution with matched expected value, while the clean examples distribution is far from binomial. (e) Wasserstein distance between the binomial distribution and the empirical agreement distribution over epochs. }
      \label{fig:binomical}
\end{figure}

Here we show empirical evidence, that noisy labels are \textbf{not} being learned in the same order by an ensemble of networks. To see this, we measure the distance between the TPA distribution, computed separately for clean examples and for noisy examples, and the binomial distribution, which is the expected distribution of iid classifiers with the same overall accuracy. Specifically, we compute the Wasserstein distance between the agreement distribution at each epoch and the binomials BIN (k,$p_{clean}$) and BIN(k,$p_{noisy}$), where $p_{clean}$ is the average accuracy on the clean examples, and $p_{noisy}$ is the average accuracy on the noisy examples, see Fig.~\ref{fig:binomical}. We see that while the distribution of model agreement on clean examples is very far from the binomial distribution, the distribution of model agreement  on noisy examples is much closer, which means that each DNN memorizes noisy examples at an approximately independent rate.

\section{Computing the ELP Score: Pseudo-Code}
\label{app:ELP-pseudo}

\begin{algorithm}[ht]
\SetKwComment{Comment}{/* }{ */}
\SetKw{Init}{initialization}
\caption{Computing the ELP score} \label{alg:compELP}
\KwIn{Training dataset $\iX$ with $N$ examples, potentially noisy, network architecture $\mathcal{A}$, batch size $b$, learning rate $\eta$, number of networks K, number of epochs E}
\KwOut{Array, containing the ELP score for each data point}

compute agreement during training\;
\Init{$\theta_{1}^{0} ... \theta_{K}^{0}$} different initialization of $\mathcal{A}$\;
Initialize \textit{agreement\_arr}[E,N]$\gets 0$\;
\For{$e=0; E$}
{
    \For{$k=0; K$}
    {
        sample $indices_b$ with size $b$ from [1...N] uniformly\;
        $x_b,y_b\gets \mathcal{X}[indices_b],\mathcal{Y}[indices_b]$\Comment*[r]{Gets a mini batch}
        
        compute $p_b$ on $x_b$ using $\theta_{k}^{e}$\;
        
        compute loss $l_b$ with respect to $p_b$ and $y_b$\;
        
        $\theta_k^{e+1} \gets SGD(\theta_k^e;l_b)$\;
        $agreement\_arr[e,indice_b]$ += $(\textbf{argmax}(p_b) ==y_b)$\Comment*[r]{Store whether the network k predicted correctly on the examples at epoch e}
    }
}
$agreement\_arr \gets agreement\_arr/(E\cdot K)$\Comment*[r]{normaliztion}
$ELP\_arr \gets mean\_over\_epochs(agreement\_arr)$\Comment*[r]{mean over $X$-axis}

\Return $ELP\_arr$
\label{alg:computeELP}
\end{algorithm}

\section{Additional results}
\label{sec: additinal}

\subsection{Noise level estimation on additional datasets}
\label{app:noise-filt}

\label{fdi:cifar10NE}
\begin{figure}[h]
\label{fig:rescifar10}
    \centering
    \begin{subfigure}[b]{.88\textwidth}
        \includegraphics[width=\textwidth]{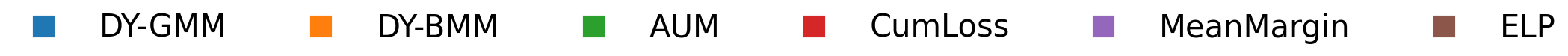}
        \vspace{-.5cm}
    \end{subfigure}
    \begin{subfigure}[b]{0.22\textwidth}
        \includegraphics[width=\textwidth]{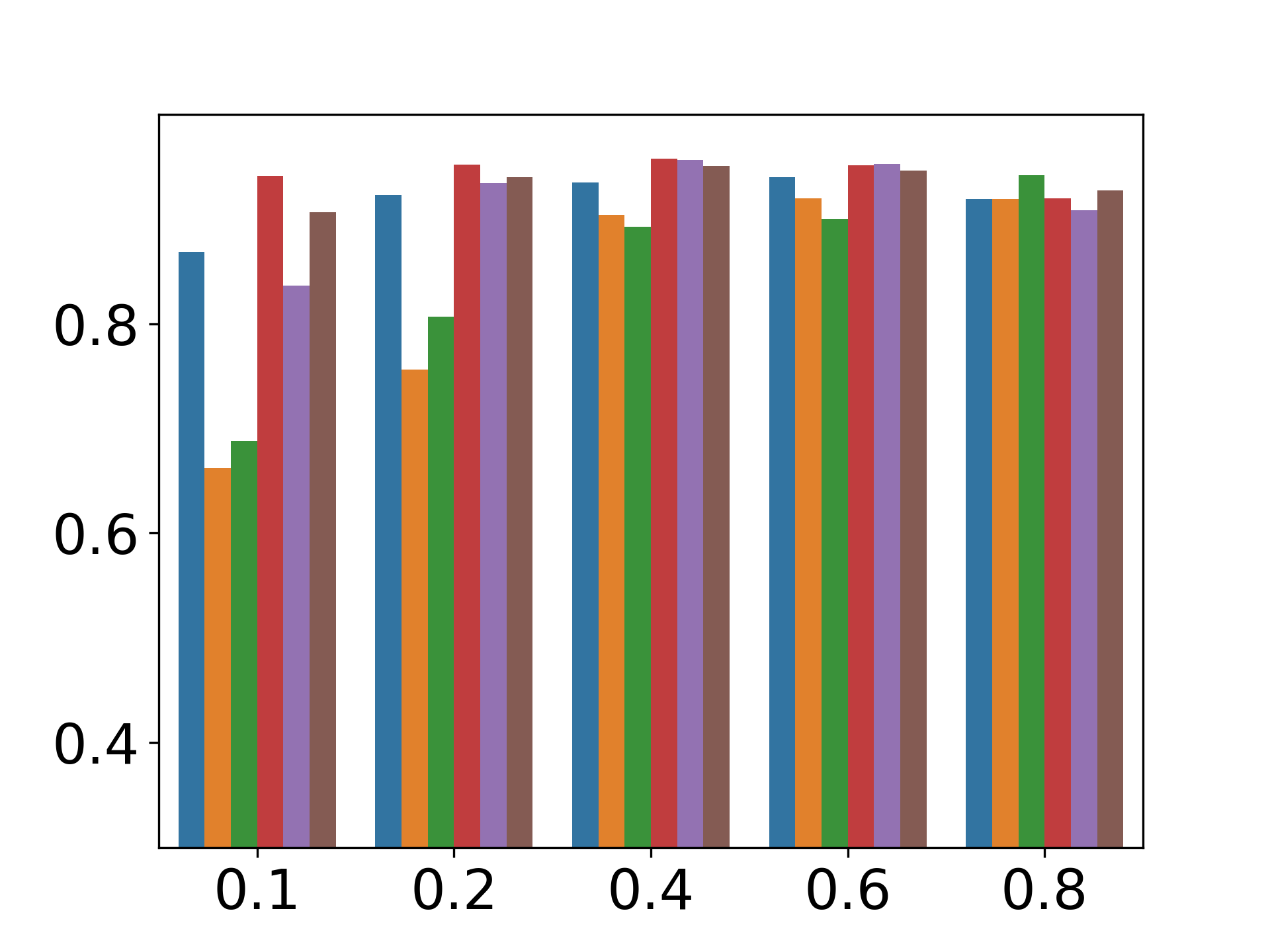}
        \vspace{-.5cm}
        \caption{CIFAR10 F1}
    \end{subfigure}
    \begin{subfigure}[b]{0.22\textwidth}
        \includegraphics[width=\textwidth]{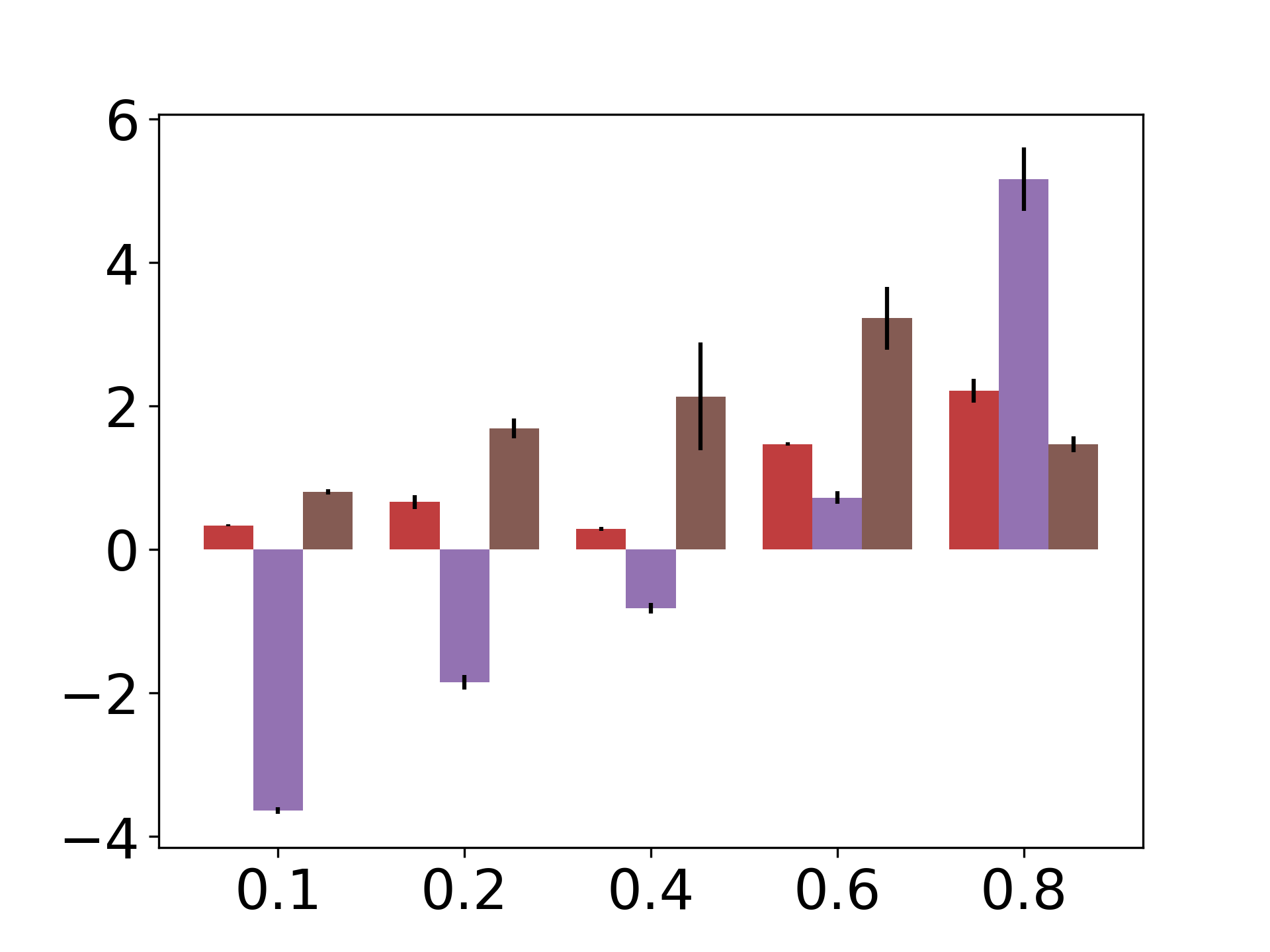}
        \vspace{-.5cm}
        \caption{CIFAR10 estimation error}
    \end{subfigure}
    \begin{subfigure}[b]{0.22\textwidth}
        \includegraphics[width=\textwidth]{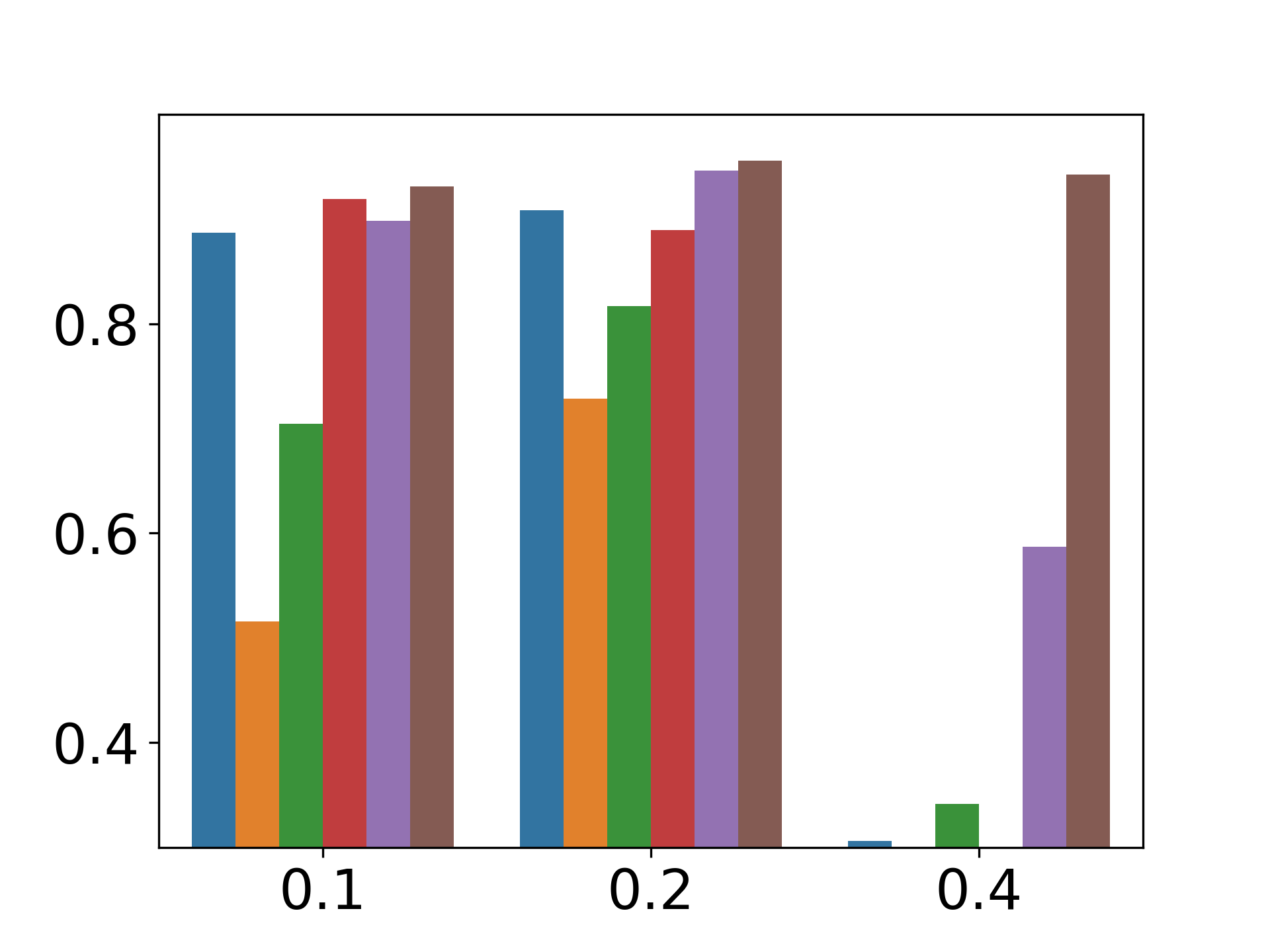}
        \vspace{-.5cm}
        \caption{CIFAR10 asym F1}
    \end{subfigure}
    \begin{subfigure}[b]{0.22\textwidth}
        \includegraphics[width=\textwidth]{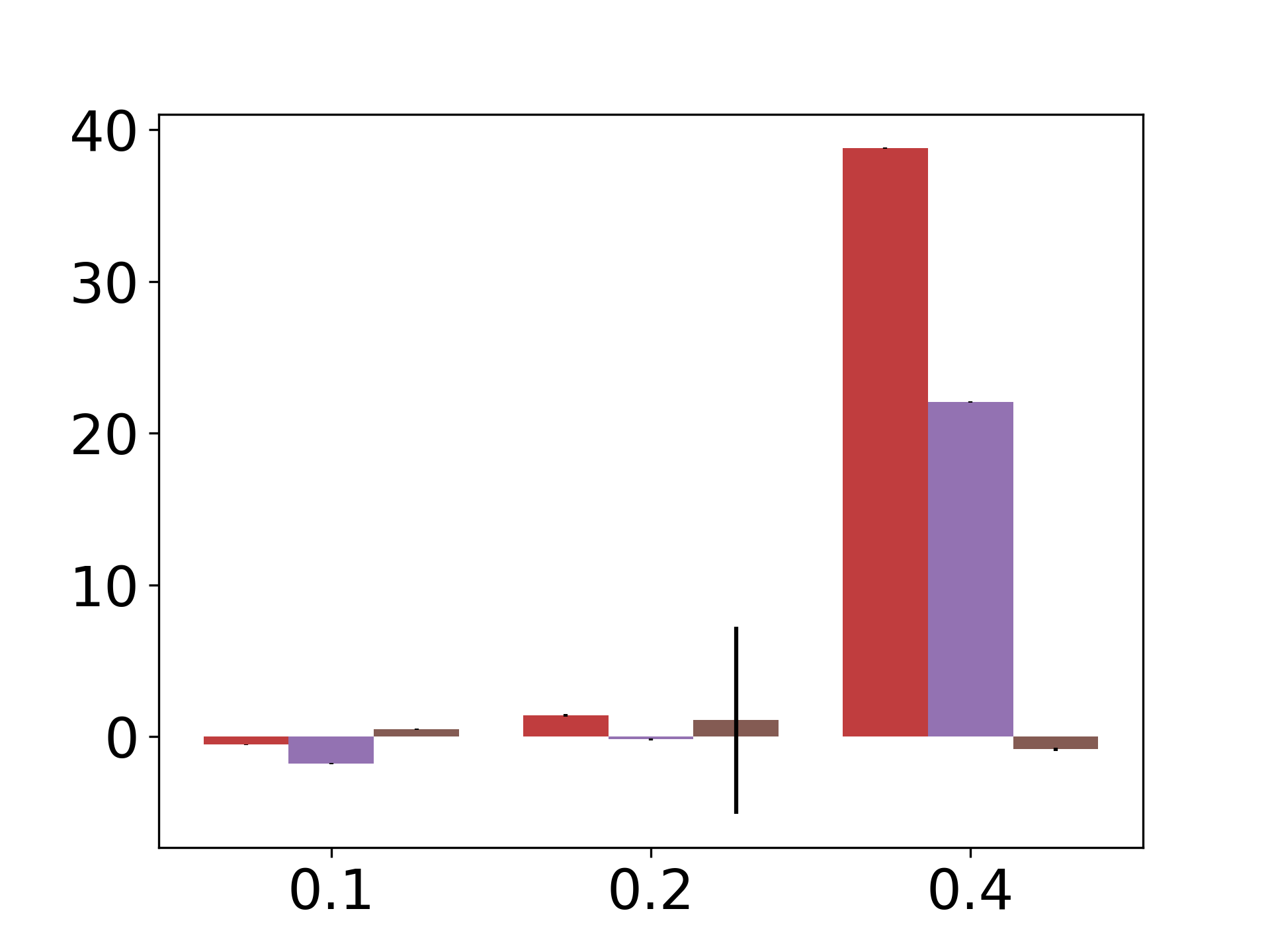}
        \vspace{-.5cm}
        \caption{CIFAR10 asym estimation error}
    \end{subfigure}
    
    \begin{subfigure}[b]{0.22\textwidth}
        \includegraphics[width=\textwidth]{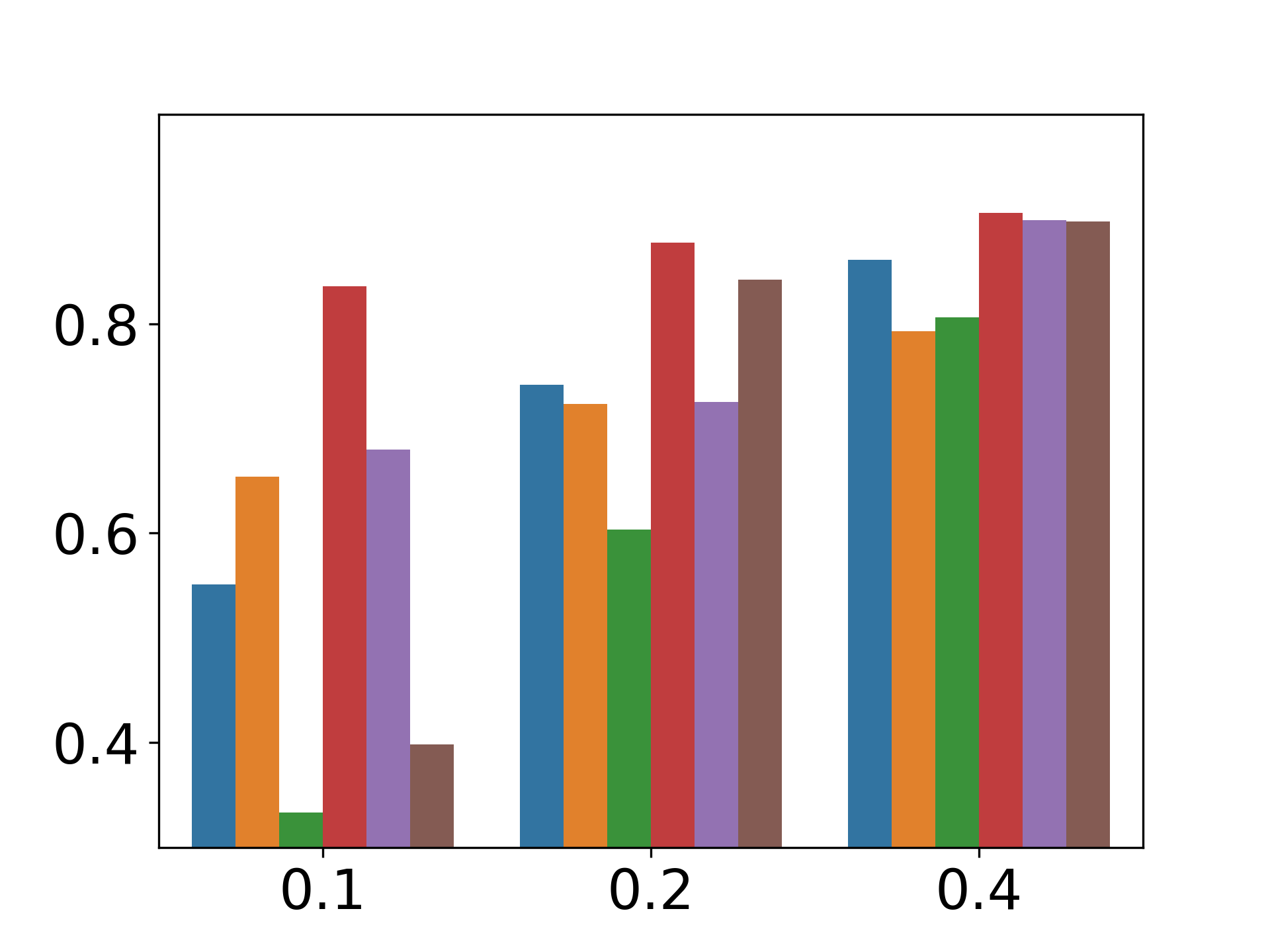}
        \vspace{-.5cm}
        \caption{TinyImagenet F1}
    \end{subfigure}
    \begin{subfigure}[b]{0.22\textwidth}
        \includegraphics[width=\textwidth]{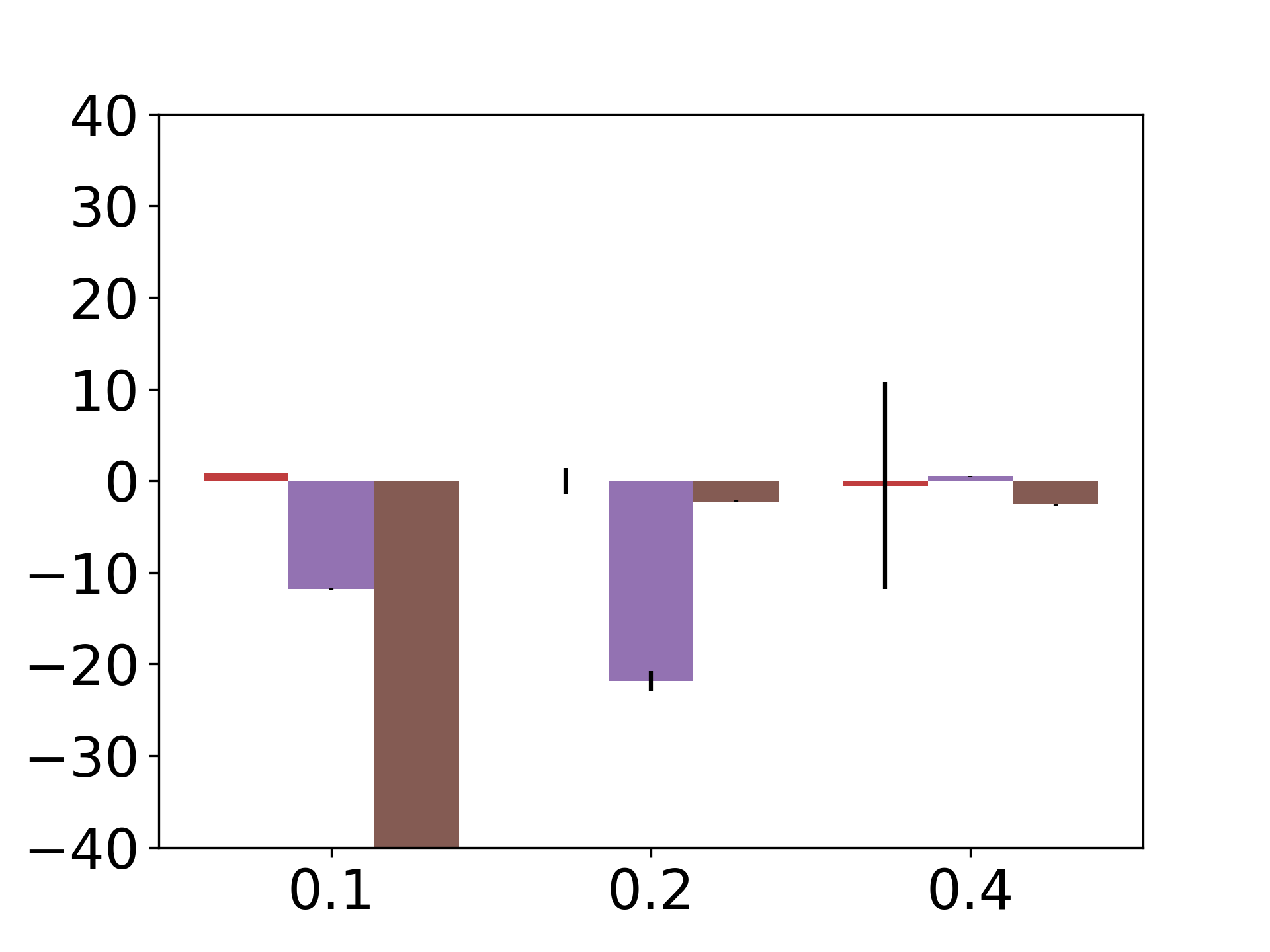}
        \vspace{-.5cm}
        \caption{TinyImagenet estimation error}
    \end{subfigure}
    \caption[Estimation]{Additional results on CIFAR10 and Tiny Imagenet.}
    \label{fig:NEcifar10}
\end{figure}

\newpage

\subsection{Precision and Recall results}
\label{app:precisionandrecall}

\begin{figure}[h]
    \centering
    \begin{subfigure}[b]{.88\textwidth}
        \includegraphics[width=\textwidth]{figs/F1legend.png}
    \vspace{-.5cm}
    \end{subfigure}
    \begin{subfigure}[b]{0.22\textwidth}
        \includegraphics[width=\textwidth]{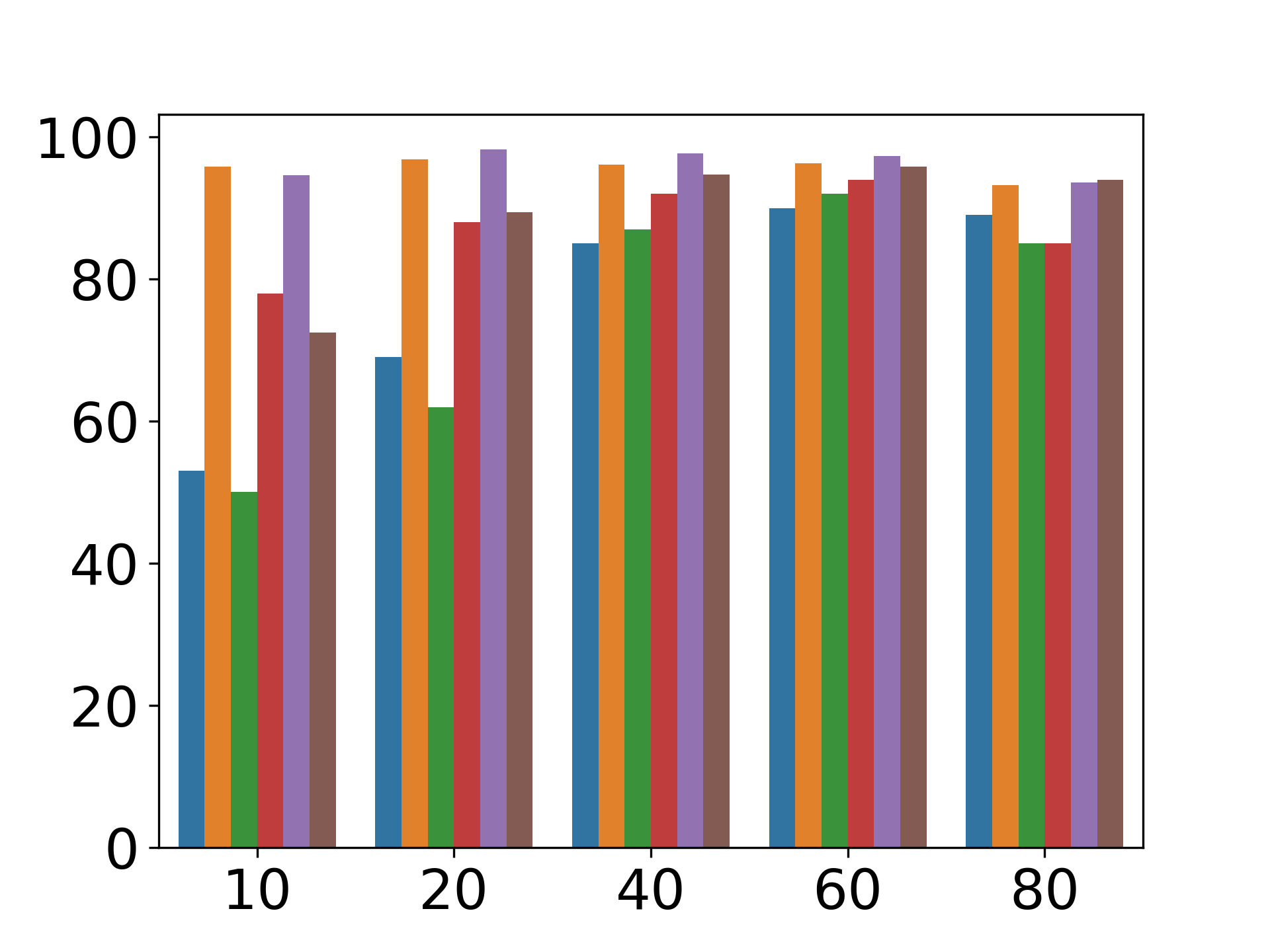}
        \vspace{-.5cm}
        \caption{ CIFAR10}
    \end{subfigure}
    \begin{subfigure}[b]{0.22\textwidth}
        \includegraphics[width=\textwidth]{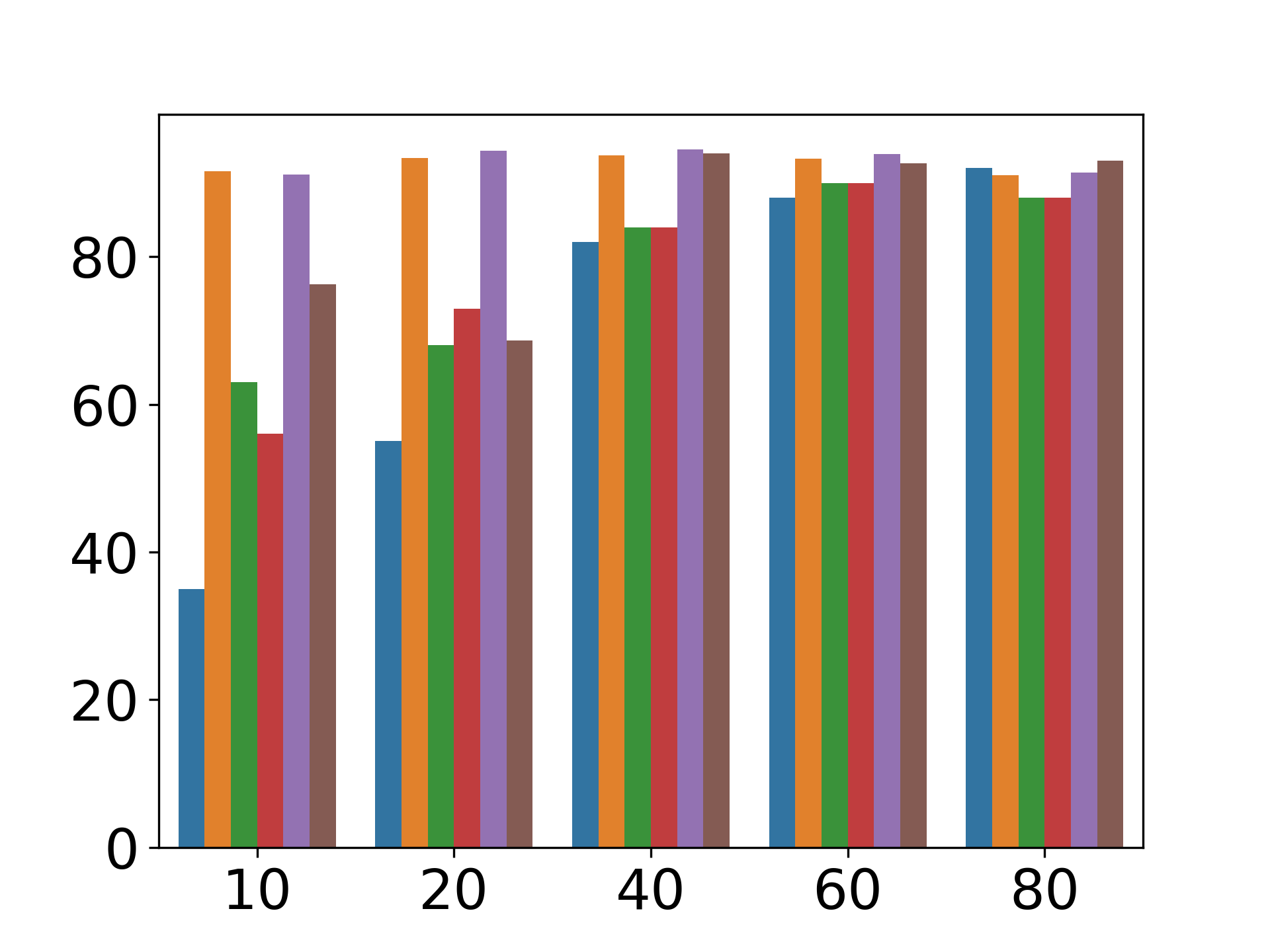}
        \vspace{-.5cm}
        \caption{CIFAR100}
    \end{subfigure}
    \begin{subfigure}[b]{0.22\textwidth}
        \includegraphics[width=\textwidth]{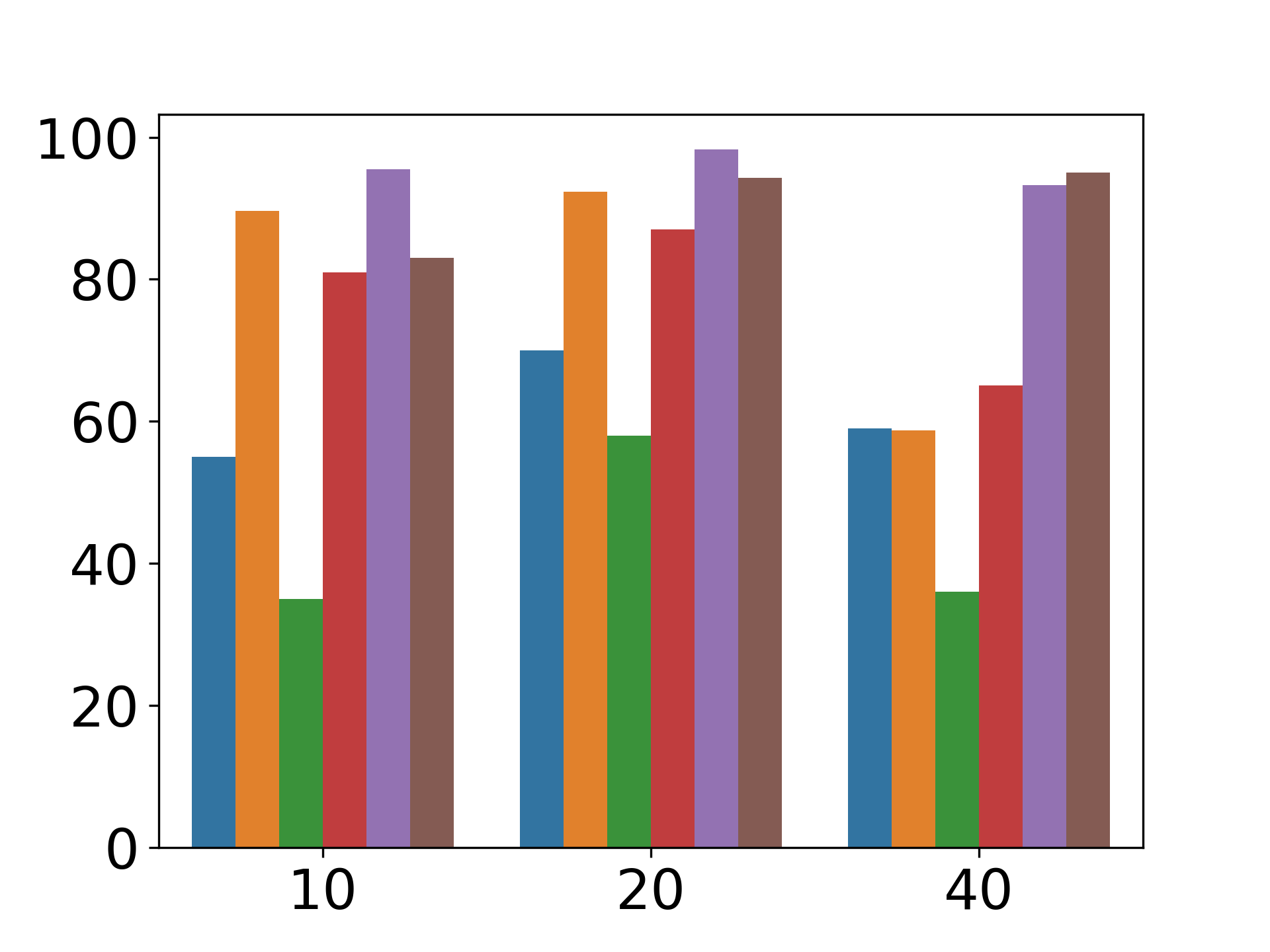}
        \vspace{-.5cm}
        \caption{CIFAR10 asym}
    \end{subfigure}
    \begin{subfigure}[b]{0.22\textwidth}
        \includegraphics[width=\textwidth]{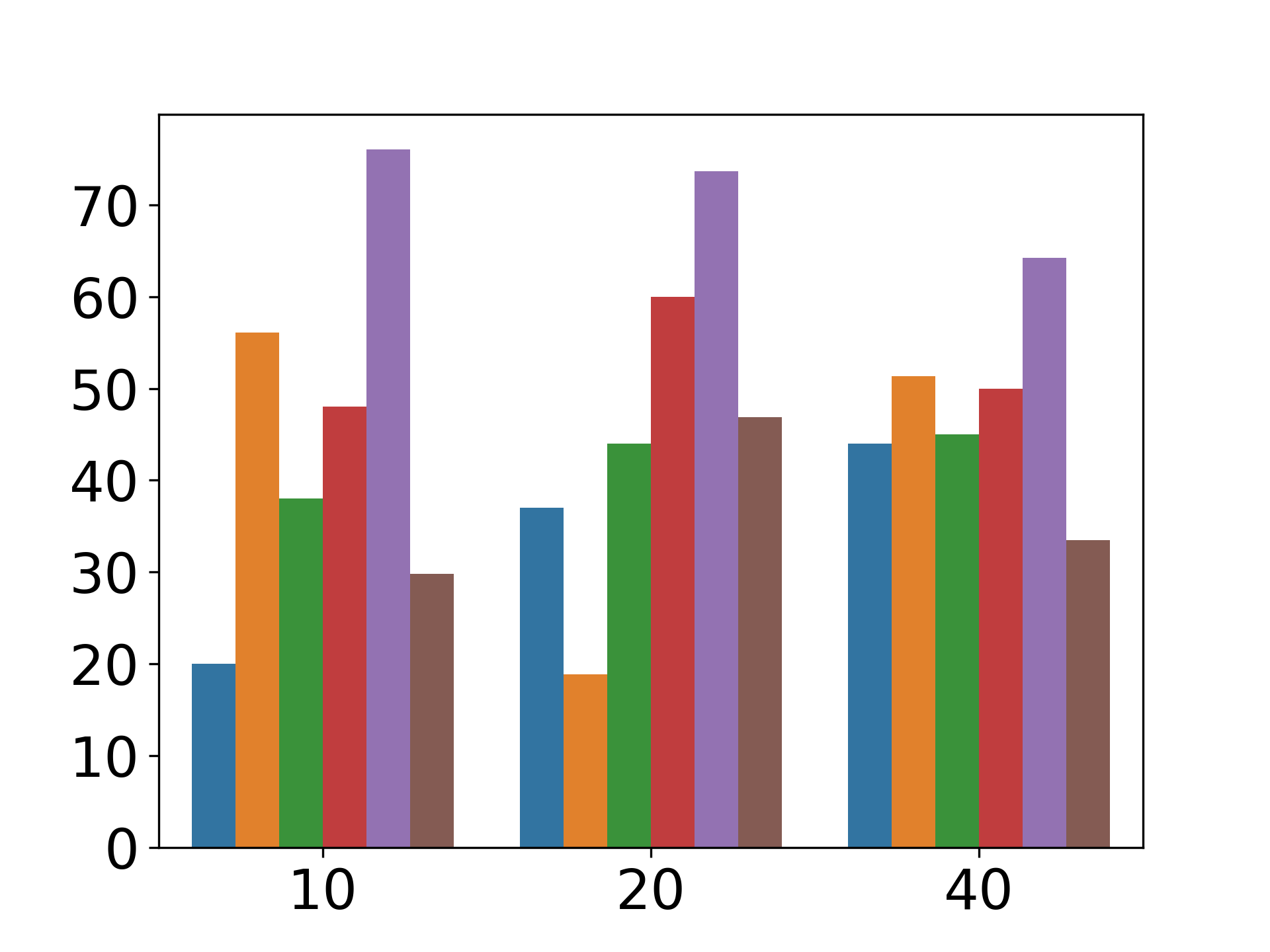}
        \vspace{-.5cm}
        \caption{CIFAR100 asym}
    \end{subfigure}
        \begin{subfigure}[b]{0.22\textwidth}
        \includegraphics[width=\textwidth]{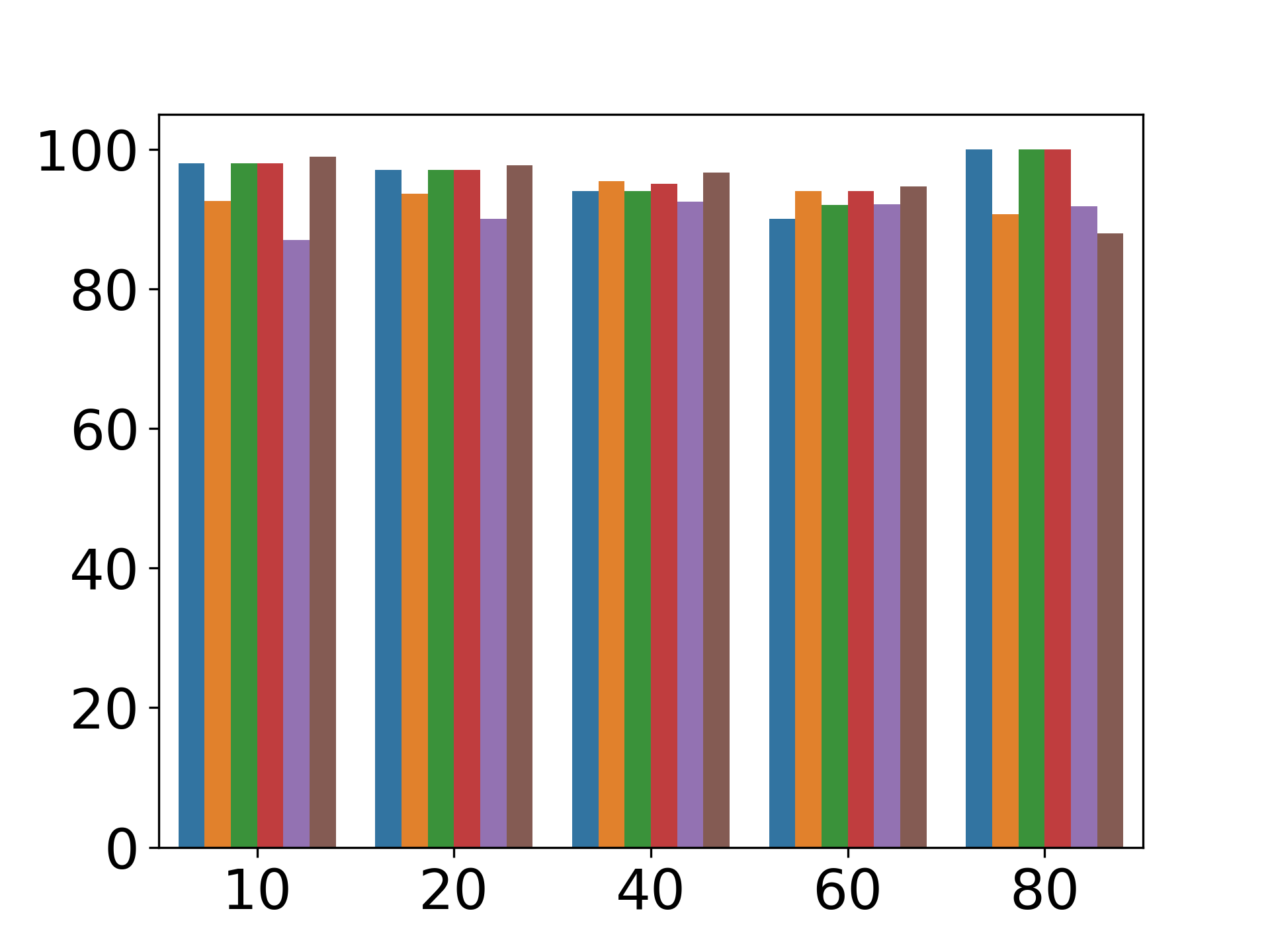}
        \vspace{-.5cm}
        \caption{ CIFAR10}
    \end{subfigure}
    \begin{subfigure}[b]{0.22\textwidth}
        \includegraphics[width=\textwidth]{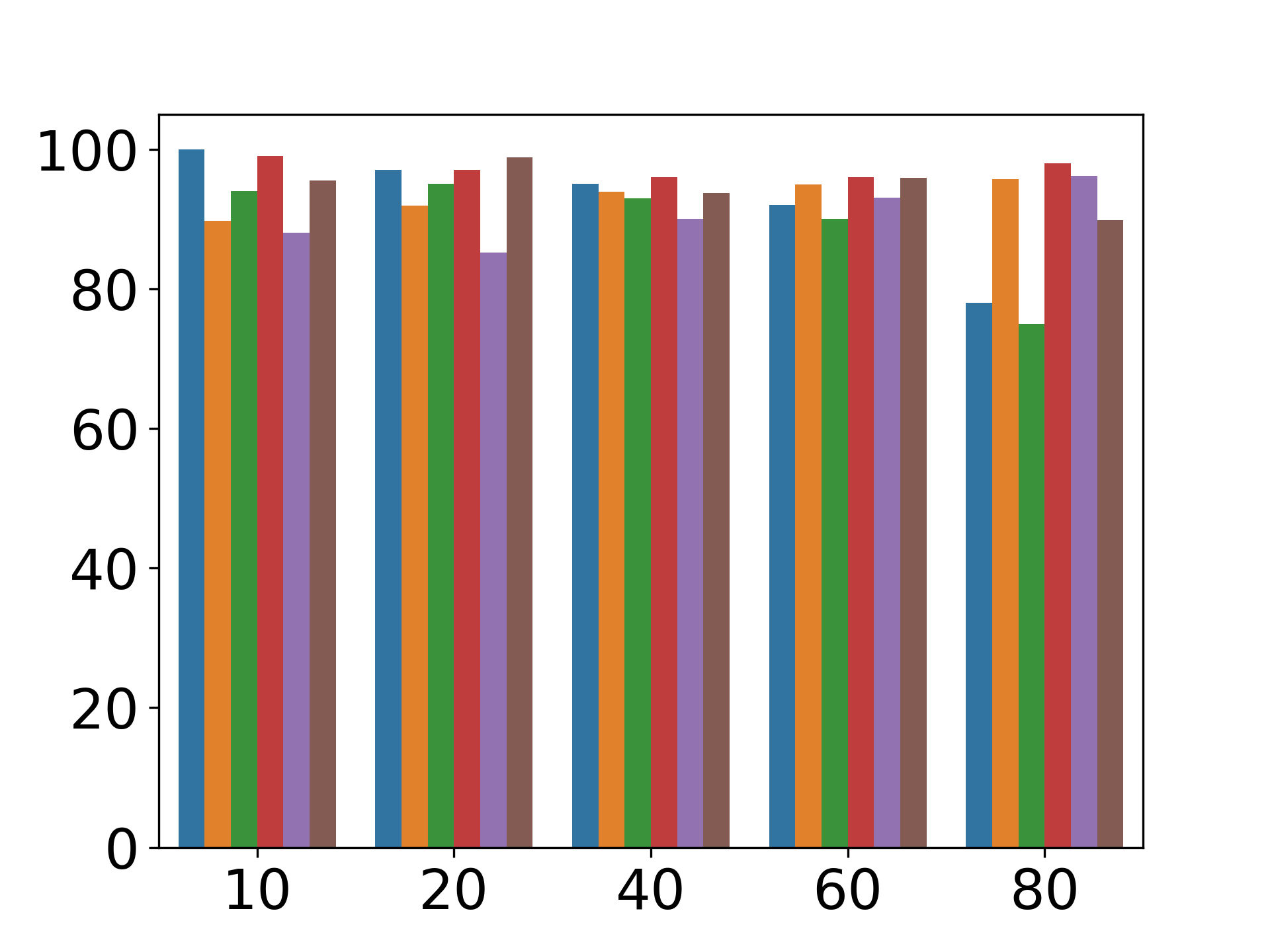}
        \vspace{-.5cm}
        \caption{CIFAR100}
    \end{subfigure}
    \begin{subfigure}[b]{0.22\textwidth}
        \includegraphics[width=\textwidth]{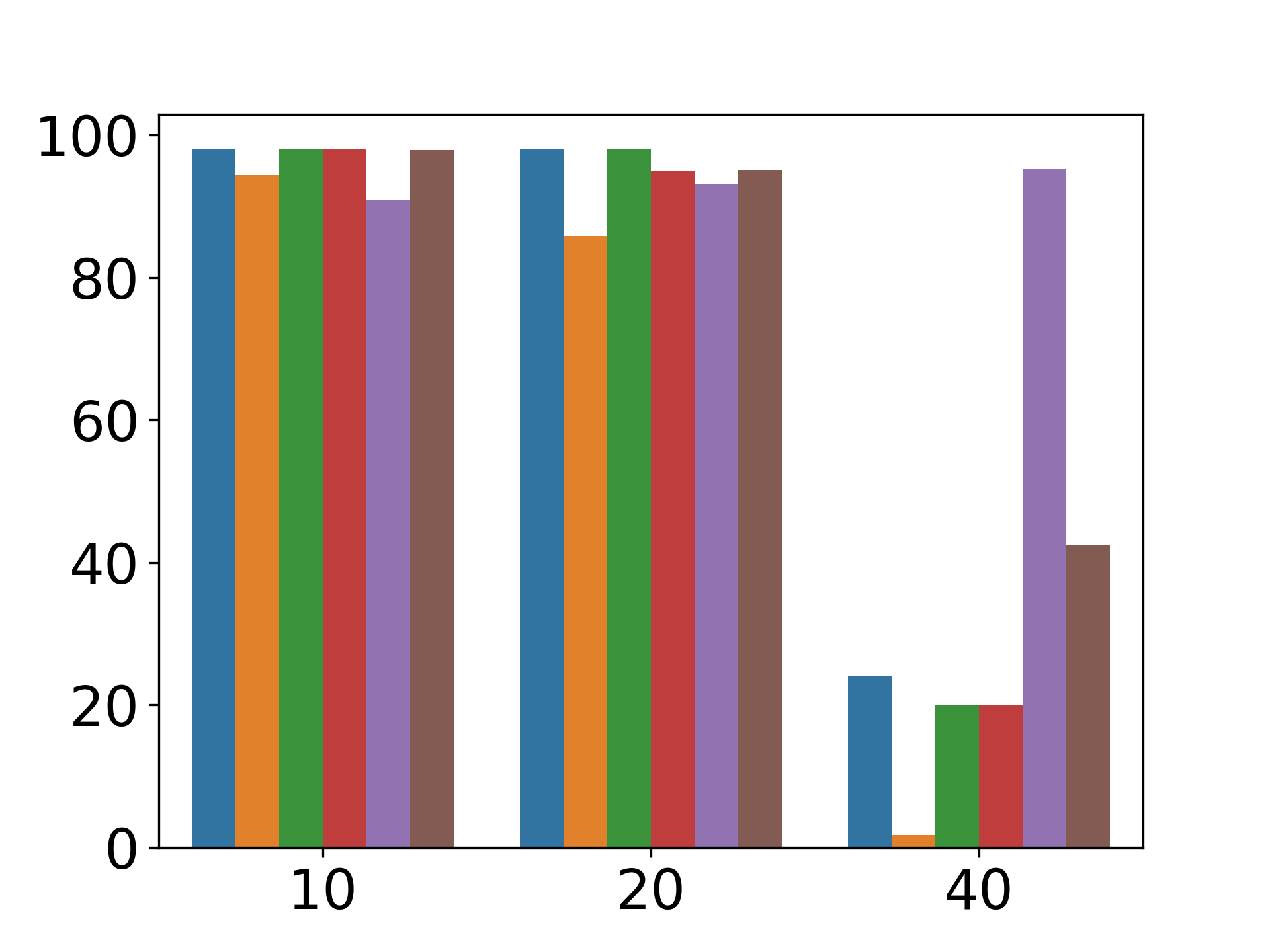}
        \vspace{-.5cm}
        \caption{CIFAR10 asym}
    \end{subfigure}
    \begin{subfigure}[b]{0.22\textwidth}
        \includegraphics[width=\textwidth]{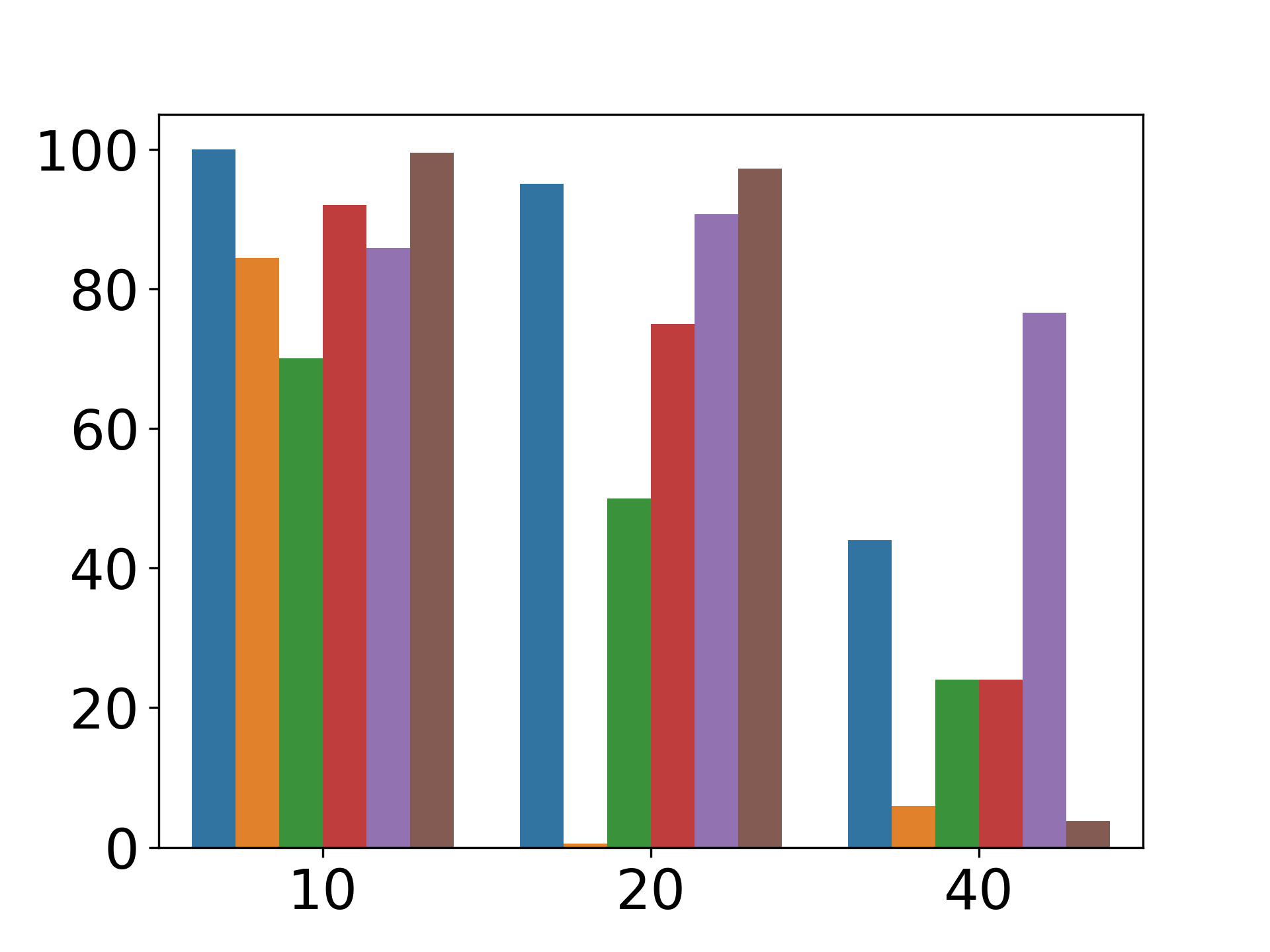}
        \vspace{-.5cm}
        \caption{CIFAR100 asym}
    \end{subfigure}
    \caption[Estimation]{Noisy label identification. Top: precision; bottom: recall.}
    \label{fig:precisionrecallfig}
\end{figure}

\section{Ablation study}
\label{app:ablationstudy}

Table ~\ref{Table:ablation_secondry} summarize experiments relating to architecture, scheduler,
and augmentation usage.
\begin{table}[h]
  \caption{F1 score for Cifar100 with 2 levels of symmetric noise. Different ablation conditions are marked in columns: \emph{ResNet34} indicates a change of architecture, \emph{no Aug} indicates that image augmentations are not used, and \emph{lr 0.01} indicates that no scheduler or learning rate drop are used during training.}
  \label{Table:ablation_secondry}
\vspace{-1em}
\small
  \centering
  \begin{tabular}{c|  c|c|c|c|c}
    & & & & &\\ 
    \toprule
    \textbf{Noise level} & \textbf{No change} & \textbf{ResNet34} &  \textbf{No Aug} &  \textbf{Constant lr 0.01} &  \textbf{Lr 0.01 + no Aug} \\ 
    
    \midrule
    $20\%$ &$0.903 \pm 0.01$ &$0.839 \pm 0.0$ &$0.832 \pm 0.0$ &$0.859 \pm 0.01$ & $0.869 \pm 0.01$\\
    $40\%$ &$0.918 \pm 0.01$ &$0.887 \pm 0.0$ &$0.887 \pm 0.01$ &$0.877 \pm 0.0$ &$0.888 \pm 0.01$\\
    \hline
    \bottomrule
  \end{tabular}
\end{table}

\paragraph{Alternative scores}

We evaluate the two alternative scores defined in Section~\ref{sec:cum-scores}: CumLoss and MeanMargin, in which case Step 2 of \emph{DisagreeNet} is executed using one of them instead of the ELP score.  Fig.~\ref{fig:score-comparisons} shows the Probability Distribution Function (PDF) of the three scores, revealing that ELP is more consistency bimodal (especially in the difficult asymmetric case), with modes (peaks) that appear more separable. This  benefit translates to superior performance in the noise filtration task (Figs.~\ref{cifar_asym:t},\ref{cifar_asym_2:t}). 

\begin{figure}[htbp]
\centering
\begin{subfigure}[tb]{0.485\textwidth}
  \includegraphics[width=\linewidth]{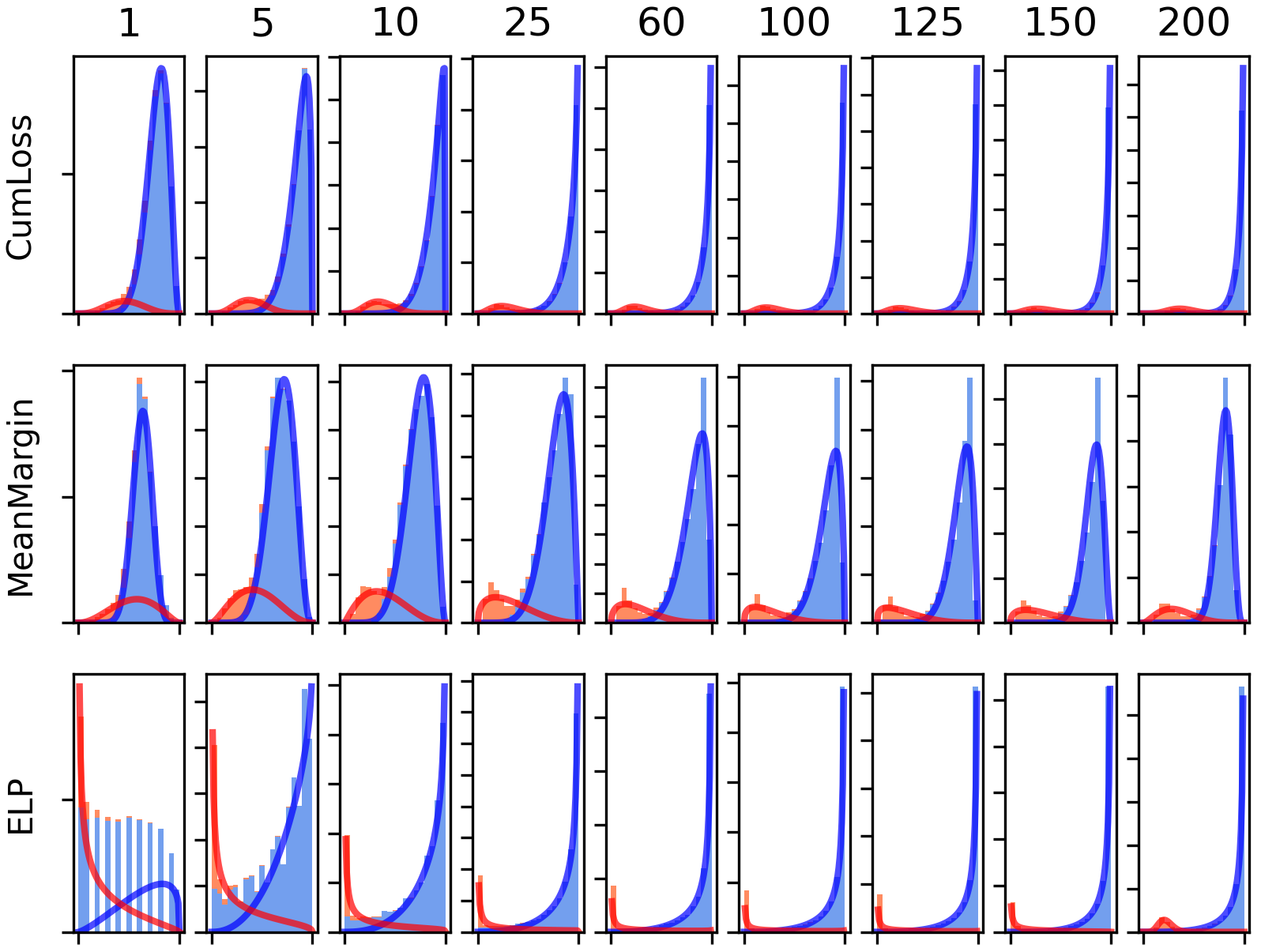}
  \centering
  \caption{Cifar10 with 40\% Symmetric noise}
      \label{fig:score-comparisons-easy}
  \end{subfigure}
  \hfill
  \begin{subfigure}[tb]{0.485\textwidth}
  \includegraphics[width=\linewidth]{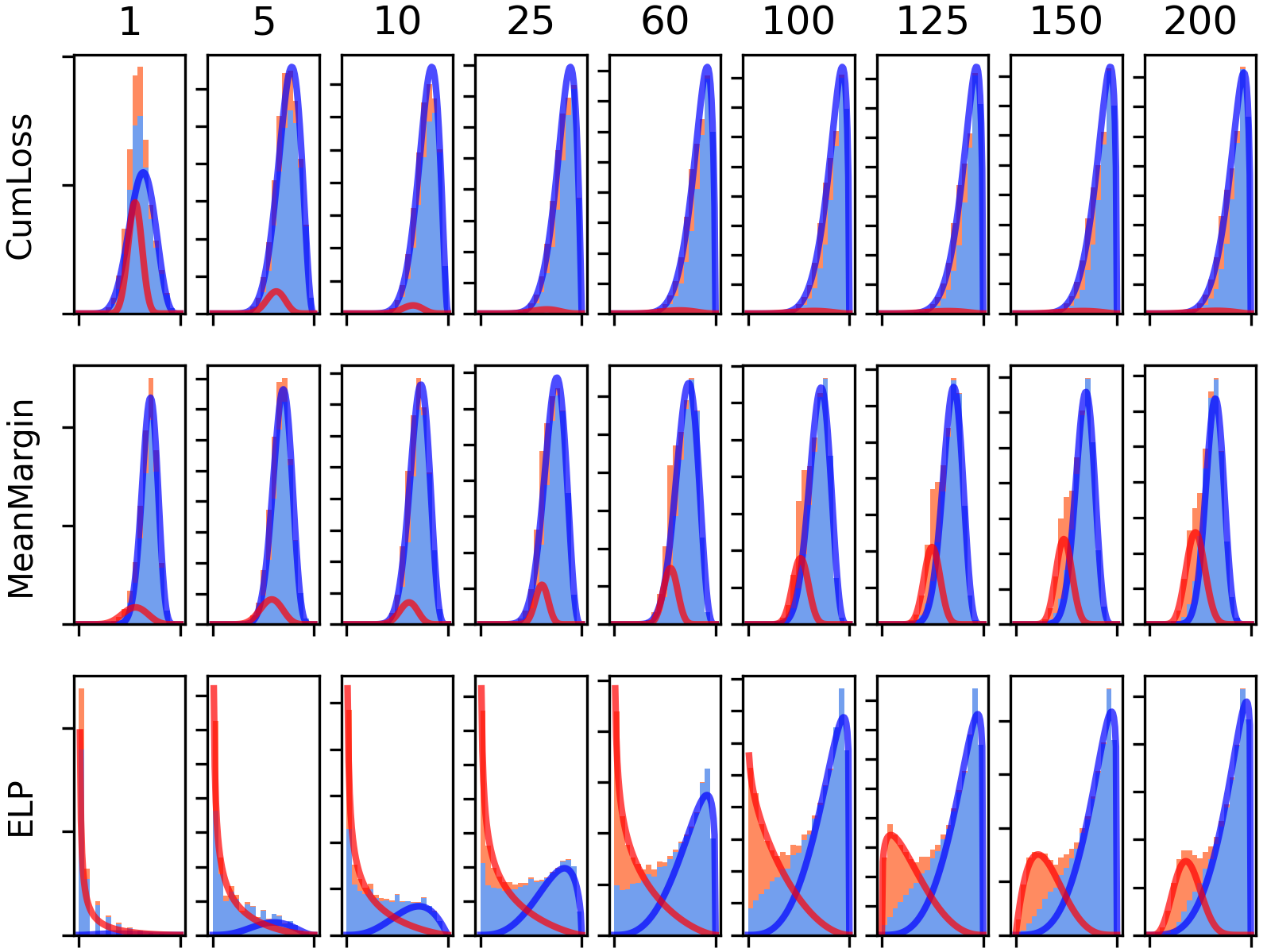}
  \centering
  \caption{Cifar100 with 20\% Asymmetric noise}
      \label{fig:score-comparisons-hard}
  \end{subfigure}
      \caption[Estimation]{Distribution of the CumLoss, MeanMargin and ELP scores during training. ELP remains bimodal even for hard noise models, where the other scores become unimodal. }
      \label{fig:score-comparisons}
\end{figure}

We believe that this empirical observation, of increased mode separation, is due to significant difference in the pace of change in agreement values during training between clean and noisy data, in contrast with the pace of change in smoother measures of confidence like \emph{Margin} and \emph{Loss} (see App.~\ref{app:uri}). Note that with the easier symmetric noise, we do not see this difference, and indeed the other scores exhibit two nicely separated modes, sometimes achieving even better results in noise filtration than ELP (Fig.~\ref{fig:NEcifar10} in App.~\ref{app:noise-filt}). However, when comparing the test accuracy after retraining (see App.~\ref{sec: additinal}), we observe that ELP still achieves superior results.

\section{Methodology and Technical details}
\label{app:tech}

\myparagraph{Datasets} We evaluated our method on a few standard image classification datasets, including Cifar10 and Cifar100  \citep{krizhevsky2009learning} and Tiny imagenet \citep{le2015tiny}. Cifar10/100 consist of 60k $32\times 32$ color images of 10 and 100 classes respectivaly. Tiny ImageNet consists of 100,000 images from 200 classes of ImageNet \citep{deng2009imagenet}, downsampled to size $64\times 64$. Animal10N  dataset contains 5 pairs of confusing animals with a total of 55,000 64x64 images. 
Clothing1M \citep{clothing} contains 1M clothing images in 14 classes. These datasets were used in earlier work to evaluate the success of noise estimation \citep{pleiss2020identifying,arazo2019unsupervised,li2020dividemix,liu2020early}.

\myparagraph{Baseline methods for comparison} We evaluate our method in the context of two approaches designed to deal with label noise: methods that focus on improving the supervised learning by identifying noisy labels and removing/reducing their influence on the training, and methods that use iterative methods and utilize semi-supervised algorithms in order to learn with noisy labels.
\myparagraph{First approach:} 
\begin{inparadesc} \item[$\diamond$] \textbf{\emph{DY-BMM}} and \textbf{\emph{DY-GMM}} \citep{arazo2019unsupervised} estimate mixture models on the loss to separate noisy and clean examples. 
    \item[$\diamond$]  \textbf{\emph{INCV}} \citep{chen2019understanding} iteratively filter out noisy examples by using cross-validation.
    \item[$\diamond$]  \textbf{\emph{AUM}} \citep{pleiss2020identifying} inserts corrupted examples to determine a filtration threshold, using the mean margin  as a score.
    \item[$\diamond$] \textbf {\emph{Bootstrap}} \citep{reed2014training} interpolates between the net predictions and the given label.
    \item[$\diamond$]  \textbf{\emph{D2L}} \citep{ma2018dimensionality} follows \emph{Bootstrap}, and uses the examples dimensional attributes for the interpolation.
    \item[$\diamond$]  \textbf{\emph{Co-teaching}} \citep{han2018co} use two networks to filter clean data for the other net training.
    \item[$\diamond$]  \textbf{\emph{O2U}} \citep{o2u} varies the learning rate to identify the noisy samples, based on a loss-based metric.
    \item[$\diamond$]  \textbf{\emph{MentorNet}} \citep{jiang2018mentornet} trains a mentor network, whose outputs are used as a curriculum to the student network. \item[$\diamond$]  \textbf{\emph{LEC}} \citep{lec} trains multiple networks, and uses the intersection of their small loss examples (using a given noise rate as a threshold) to construct a filtered dataset for the next epoch.
    \end{inparadesc}

\myparagraph{Second approach:} 
\begin{inparadesc} \item[$\diamond$]  \textbf{\emph{SELF}} \citep{nguyen2019self} iteratively uses an exponential moving average of a net prediction over the epochs, compared to the ground truth labels, to filter noisy labels and retrain . \item[$\diamond$]  \textbf{\emph{Meta learning}} \citep{li2019learning} uses a gradient based technique to update the networks weights with noise tolerance.
    \item[$\diamond$] \textbf{\emph{DivideMix}} \citep{li2020dividemix} uses 2 networks to flag examples as noisy and clean with two component mixture, after which the SSL technique MixMatch \citep{berthelot2019mixmatch} is used.
    \item[$\diamond$] \textbf{\emph{ELR}} \citep{liu2020early} identifies early learned example, and uses them to regulate the learning process. \item[$\diamond$] \textbf{\emph{C2D}} \citep{zheltonozhskii2022contrast} uses the same algorithm as ELR and Dividemix, and uses a pretrain net with unsupervised loss. \end{inparadesc}

\myparagraph{Technical Details}
\label{Technical}
Unless stated otherwise, we used an SGD optimizer with 0.9 momentum and a learning rate of 0.01, weight decay of 5e-4, and batch size of 32. We used a Cosine-annealing scheduler in all of our experiments and used standard augmentation (horizontal flips, random crops) during training. We inspected the effect of different hyperparameters in the ablation study. All of our experiments were conducted on the internal cluster of the Hebrew University, on GPU type AmpereA10.
 
 
\section{Comparing agreement to confidence in noise filtration}
\label{app:uri}

While the learning time of an example has been shown to be effective for noise filtration, it fails to separate noisy and clean data that are learned more or less at the same time. To tackle this problem, one needs additional information, beyond the learning time of a single network. When using an ensemble, we can use the TPA score, or else the average probability assigned to the ground truth label (denoted the "correct" logit) by the networks. The latter score conveys the model's confidence in the ground truth label, and is used by our two alternative scores - CumLoss and MeanMargin. 

Going beyond learning time, we propose to look at "how quickly" the agreement value rises from 0 to 1, denoted as the "slope" of the agreement. Since our empirical results indicate that the learning time of noisy data is much more varied, we expect a slower rise in agreement over noisy data as compared to clean data. In our experiments, ELP achieved superior results in noise filtration. We hypothesize that the difference in slope between clean and noisy data may underlie the superiority of ELP in noise filtration.

To check this hypothesis, we compare between two scores computed at each data example: ELP and Logits Mean (denoted LM for simplicity). LM is defined as follows:
\[LM(x) = \frac{\sum_{i=1}^k \sum_{j=1}^T{[p_{i,j}(x)]_y}}{kT}\]
where $k$ is the number of networks, $T$ is the number of epochs during training, $(x,y)$ is a data example and its assigned label, and $[p_{i,j}(x)]_y$ is the probability assigned by network $i$ in epoch $j$ to $y$ (the ground truth label). 

In order to compare between the pace of increase (slope) of ELP and LM, we conduct the following analysis: We select the two groups of clean and noisy data that are learned (roughly) at the same time by some net in the ensemble, and then compute the average agreement and "correct" logit functions as a function of epoch, separately for clean and noisy data. We then compute the difference per epoch between the noisy and clean average agreement, which we denote as $\Delta Agreement$ and $\Delta logit$. Note that $\Delta Agreement$ and $\Delta logit$ encode the difference in the slope between noisy and clean data, since they begin to rise at (roughly) the same time. Finally, we plot in Fig.~\ref{fig:uri} the difference between $\Delta Agreement$ and $\Delta logit$, recalling that larger $\Delta$ indicates stronger separation between the clean and noisy data. 

\begin{figure}[h]
    \centering
    \begin{subfigure}[b]{0.22\textwidth}
        \includegraphics[width=\textwidth]{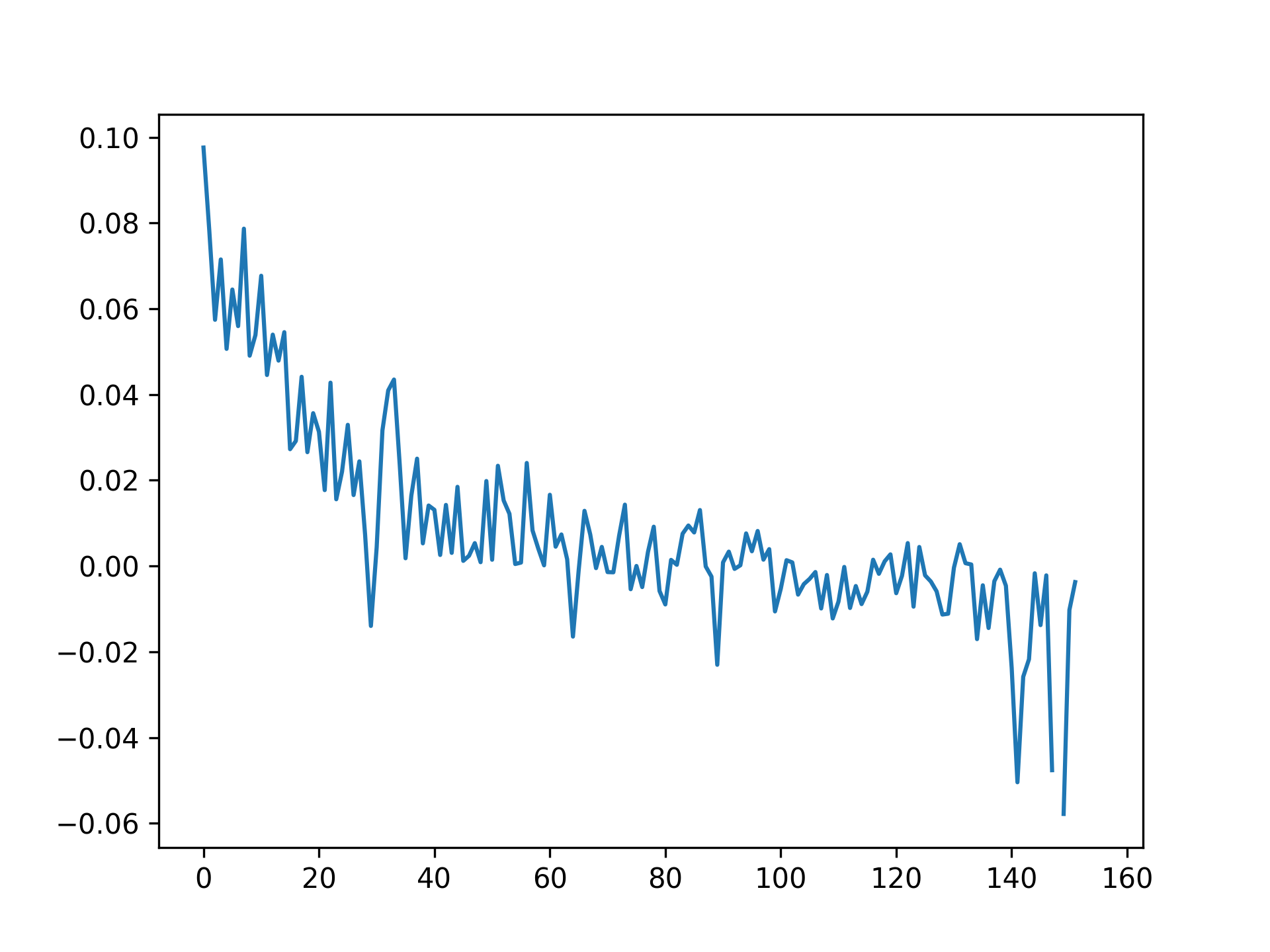}
        \vspace{-.5cm}
        \caption{20\% symmetric noise}
    \end{subfigure}
    \begin{subfigure}[b]{0.22\textwidth}
        \includegraphics[width=\textwidth]{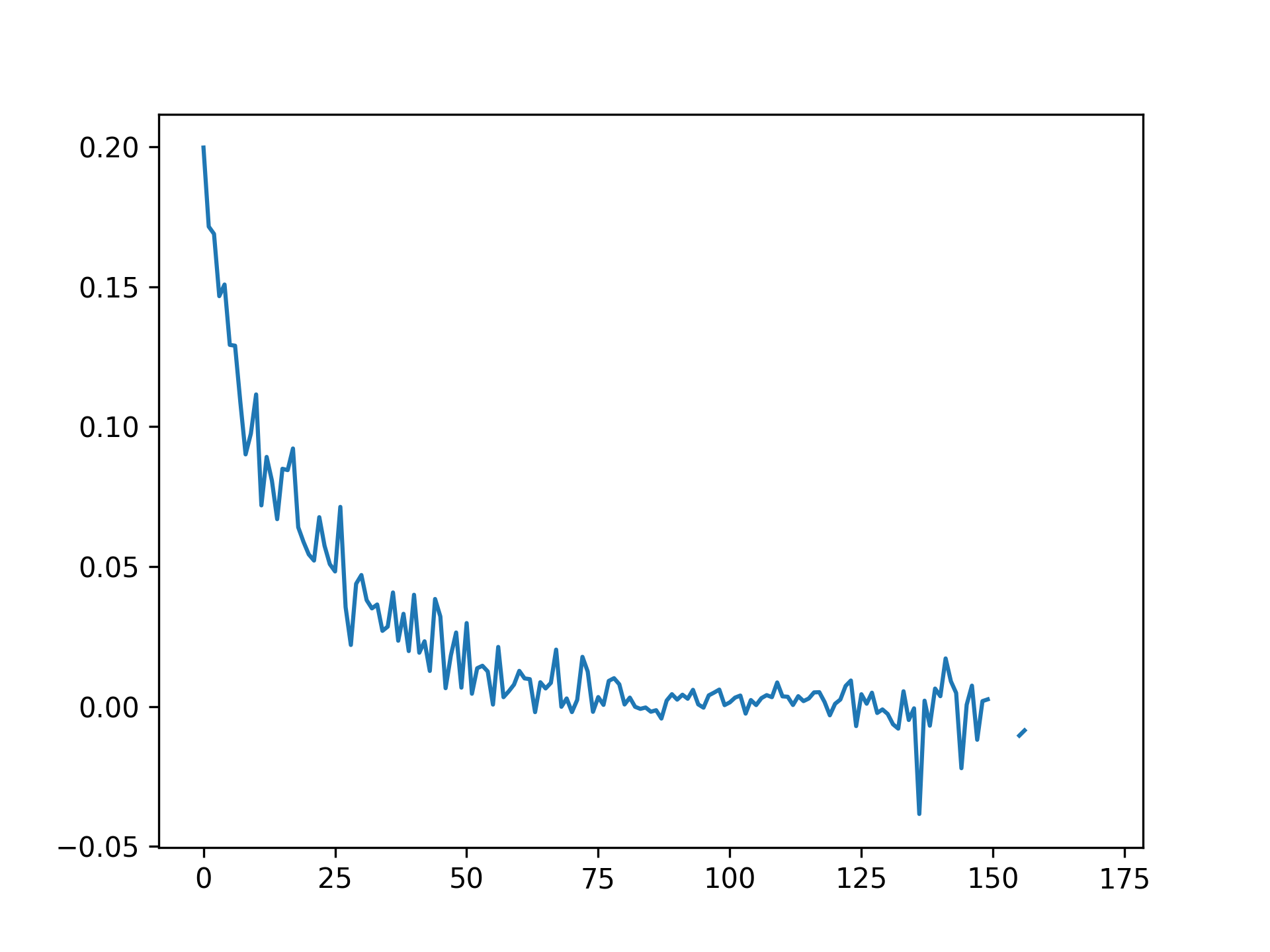}
        \vspace{-.5cm}
        \caption{40\% symmetric noise}
    \end{subfigure}
    \begin{subfigure}[b]{0.22\textwidth}
        \includegraphics[width=\textwidth]{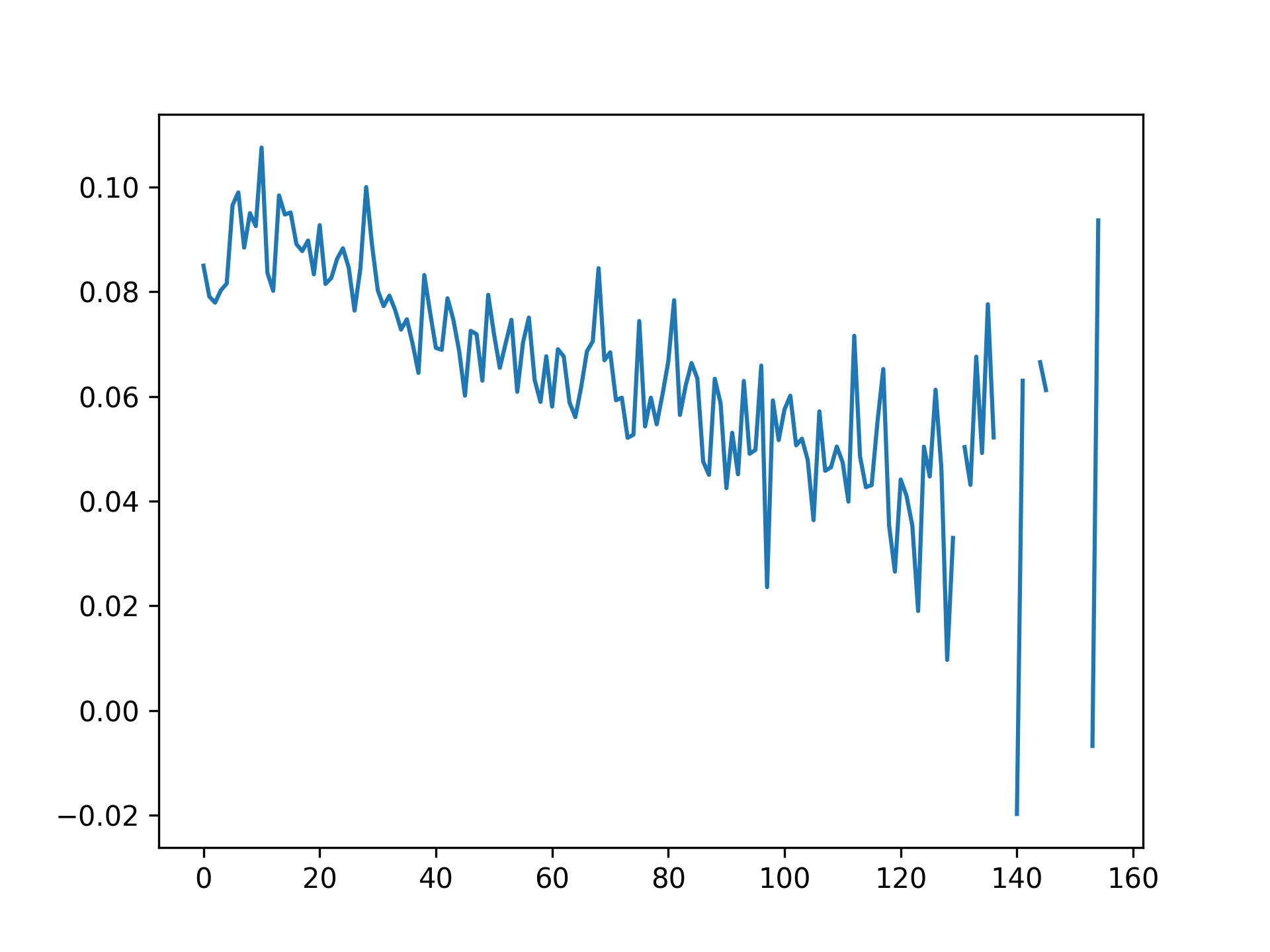}
        \vspace{-.5cm}
        \caption{20\% Asymmetric noise}
    \end{subfigure}
    \begin{subfigure}[b]{0.22\textwidth}
        \includegraphics[width=\textwidth]{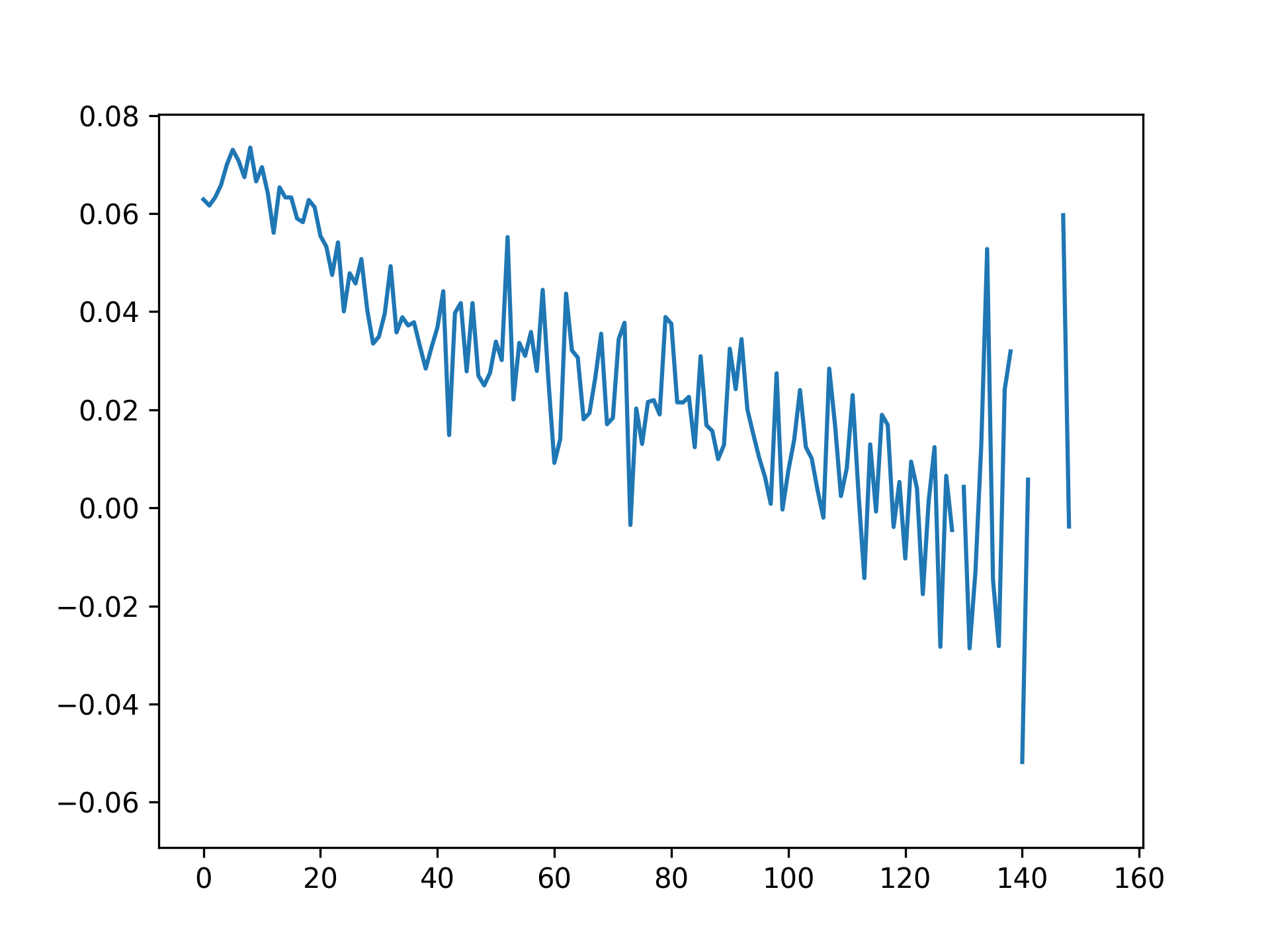}
        \vspace{-.5cm}
        \caption{40\% Asymmetric noise}
    \end{subfigure}
        \begin{subfigure}[b]{0.22\textwidth}
        \includegraphics[width=\textwidth]{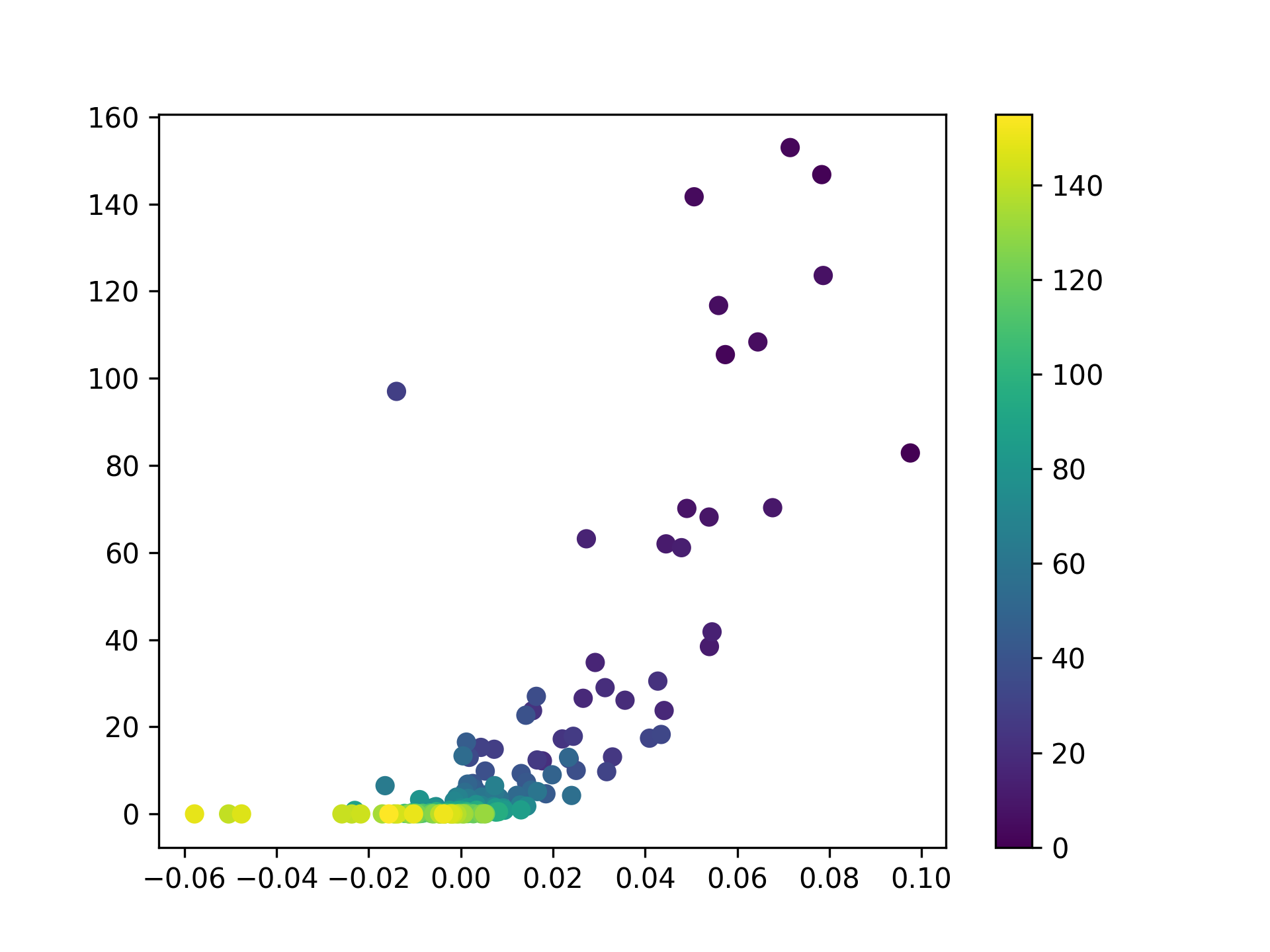}
        \vspace{-.5cm}
        \caption{20\% symmetric noise}
    \end{subfigure}
    \begin{subfigure}[b]{0.22\textwidth}
        \includegraphics[width=\textwidth]{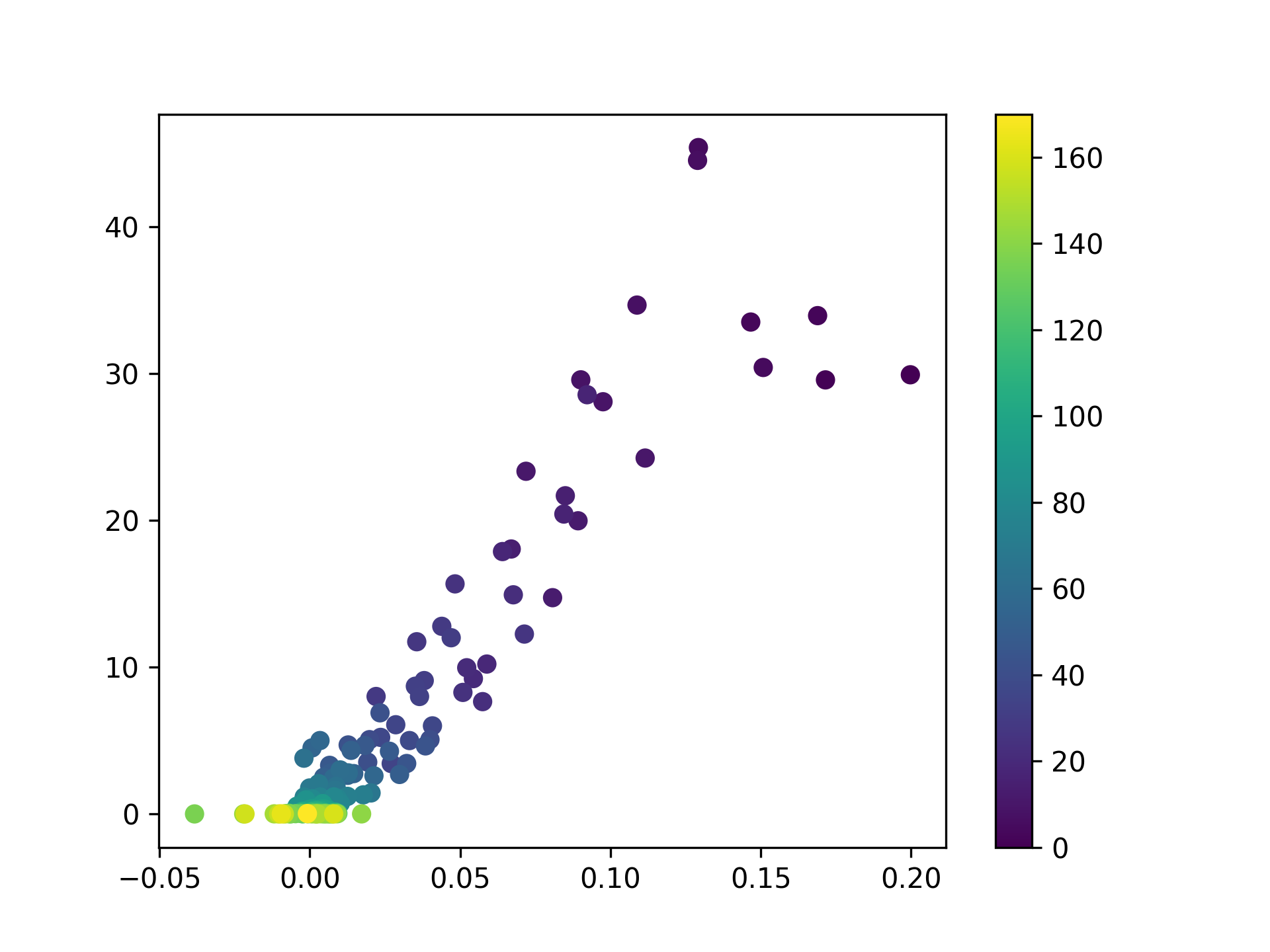}
        \vspace{-.5cm}
        \caption{40\% symmetric noise}
    \end{subfigure}
    \begin{subfigure}[b]{0.22\textwidth}
        \includegraphics[width=\textwidth]{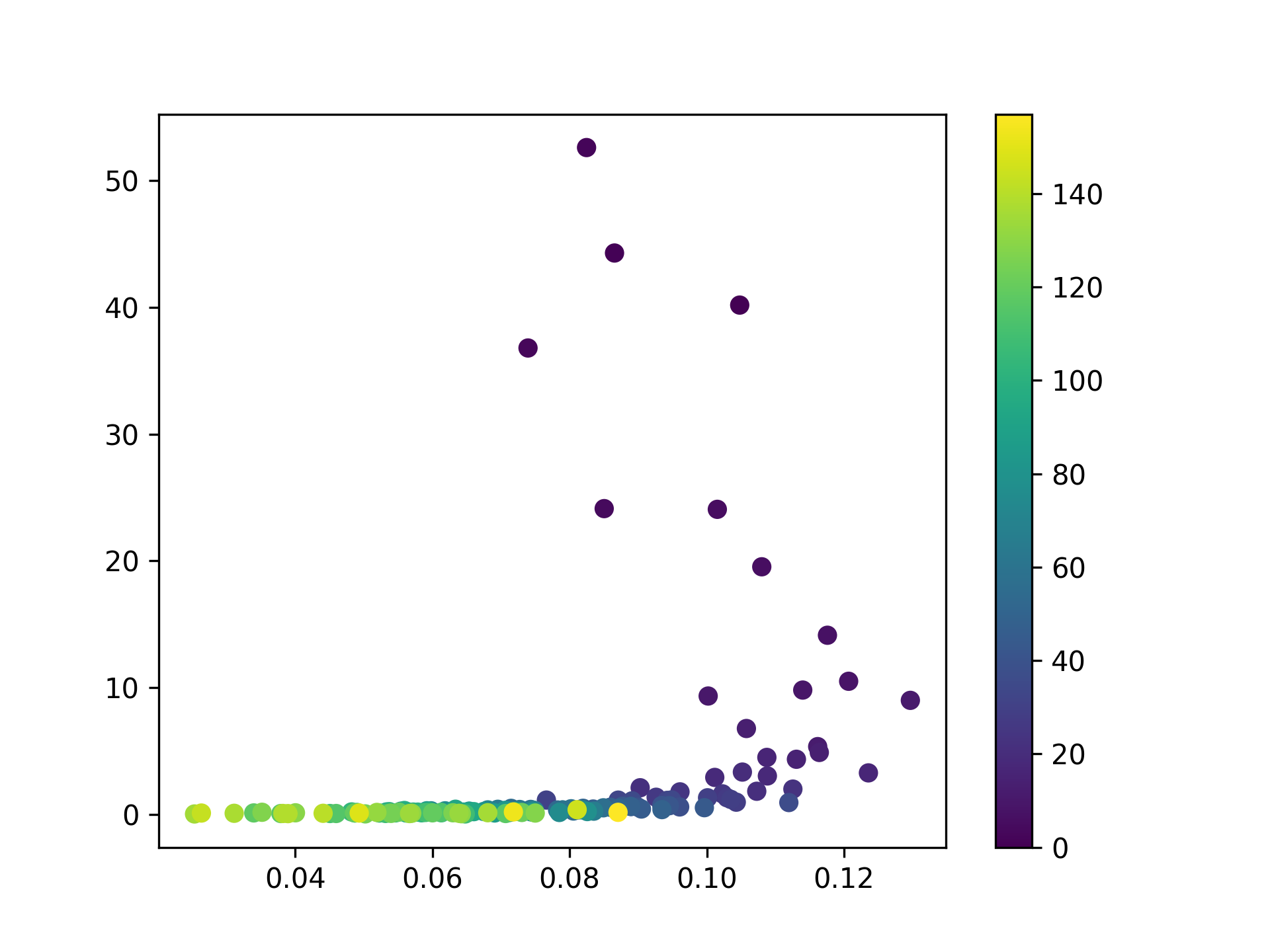}
        \vspace{-.5cm}
        \caption{20\% Asymmetric noise}
    \end{subfigure}
    \begin{subfigure}[b]{0.22\textwidth}
        \includegraphics[width=\textwidth]{figs/Dense100_40_scatter.png}
        \vspace{-.5cm}
        \caption{40\% Asymmetric noise}
    \end{subfigure}
    \caption[Estimation]{Top: $X$-axis is the learning time of the chosen clean and noisy data; $Y$-axis is the difference between $\Delta Agreement$ and $\Delta logit$. We see that for most of the training, the difference is positive, implying that ELP provides stronger separation between these groups. Bottom: $X$-axis is the difference between $\Delta Agreement$ and $\Delta logit$; $Y$-axis is the ratio between the amount of clean and noisy data. The color represents the learning time of the groups. These graphs show that while at the end of the training the difference between $\Delta Agreement$ and $\Delta logit$ is negative, implying that LM would be better at separating these groups, these are in fact very small sets of data, as most of the data is learned by some network at an earlier stage of the training}
    \label{fig:uri}
\end{figure}

Indeed, our analysis shows that with asymmetric noise, the difference between the agreement slope on clean and noisy data of the ELP score is consistently larger than the agreement slope difference between the average logits on clean and noisy data. This, we believe, is the key reason as to why ELP outperforms LM in noise filtration. Note that this effect is much less pronounced when using the easier symmetric noise, and indeed, our empirical results show that ELP does not outperform LM significantly in this case.

To conclude, we believe that the signal encoded by the agreement values is stronger than the signal encoded in measures of confidence in the networks' prediction when true labels are concerned, which explains its capability to classify correctly even some hard-clean examples and easy-noisy examples as clean and noise (respectively). This, we believe, is a result of the polarization effect caused by the binary indicators inside TPA, which disregard misleading positive probabilities assigned to noisy labels even before they are learned by the networks.

\section{Comparing to methods with different assumptions}
\label{app:diff_assump}

Here we compare DisagreeNet to methods that assume known noise level - O2U \citep{o2u} and LEC \citep{lec}, using a 9-layered CNN for the training (with standard hyper parameters as detailed in App. ~\ref{app:tech}). Since the noise level is assumed known, we replace the estimation provided by DisagreeNet with the actual noise level. The results are summarized in Table.~\ref{Table:diff_assump}. We also compare DisagreeNet to other methods that use prior knowledge, where DisagreeNet does not use prior knowledge. The results are summarized in Table.~\ref{table:noise-supervised-prior}   

\begin{table}[h]
\scriptsize
  \centering
  \begin{tabular}{c| c|  c|c|c|}
    \toprule
    \multicolumn{1}{ c |}{\textbf{Dataset}} & \multicolumn{1}{ c |}{\textbf{Noise level}} & \multicolumn{3}{ c }{\textbf{Method}}  \\ 
    \hline
    \midrule
    Dataset - noise type& Noise level & O2U & LEC & DisagreeNet  \\
    \hline
    
    \multirow{4}{*}{Cifar10 sym} 
    &$10\%$  &$87.64\%$ &-         & $\mathbf{92.57\%}$  \\
    &$20\%$  &$85.24\%$ &$88.31\%$ &$\mathbf{91.44\%}$  \\
    &$40\%$  &$79.64\%$ &-         &$\mathbf{88.48\%}$  \\
    &$60\%$  &  -    &$80.52\%$.   &$\mathbf{81.81\%}$  \\

    \hline
    \multirow{4}{*}{Cifar100 sym} 
    &$10\%$ &$62.32\%$ & -          & $\mathbf{69.86\%}$  \\
    &$20\%$  &$60.53\%$ & $59.98\%$ &$\mathbf{67.99\%}$  \\
    &$40\%$  &$52.47\%$ &-          &$\mathbf{62.89\%}$  \\
    &$60\%$  &  -  &$46.63\%$       &$\mathbf{53.76\%}$  \\
    \hline
    \multirow{3}{*}{Cifar10 asym} 
    &$10\%$ &$88.22\%$ &-          &$\mathbf{91.96\%}$  \\
    &$20\%$  &  -   &$89.41\%$.    &$\mathbf{90.94\%}$  \\
    &$40\%$  &  -   &$86.50\%$     &$\mathbf{86.93\%}$  \\
    \hline
    \multirow{3}{*}{Cifar100 asym} 
    &$10\%$ &$64.50\%$ & -       &$\mathbf{69.83\%}$  \\
    &$20\%$  &  -   &$58.86\%$.  &$\mathbf{67.99\%}$  \\
    &$40\%$  &  -   &$47.82\%$.  &$\mathbf{62.89\%}$  \\
    \hline
    \bottomrule
  \end{tabular}
  \caption{Test accuracy (\%) comparison with methods that utilize prior knowledge with 9-layered CNN}
\label{Table:diff_assump}
\end{table}

\begin{table}[thb]
\footnotesize
  \centering
  \begin{tabular}{l| c|c|c||c | c|c}
    \multicolumn{1}{ c |}{} & \multicolumn{3}{ c ||}{} & \multicolumn{3}{ c }{}   \\ 
    \toprule
    \multicolumn{1}{ c |}{Method/\textbf{Dataset}} & \multicolumn{3}{ c ||}{\textbf{CIFAR-10 sym}} & \multicolumn{3}{ c }{\textbf{CIFAR-100 sym}}   \\ 
    \multicolumn{1}{ l |}{Noise level}    & 20\% & 40\% & 60\%  &  20\% & 40\% & 60\% \\
    \hline
 
\rule{0pt}{2ex}      Co-teaching   & $88.8 \pm 0.1$& $86.5 \pm 0.1$& $80.7 \pm 0.1$& $64.1 \pm 0.1$& $60.2 \pm 0.2$& $48.0 \pm 0.3$\\
    LEC  & $88.3$& -& 80.5&  60& -& 46.63 \\
    O2U   & $92.5$& 90.3& -&  74.1& 69.2& - \\[0.5ex]
    \hdashline[1pt/2pt]\noalign{\vskip 0.5ex}
    \begin{tabular}{@{}l@{}}\emph{DisagreeNet}+SL \\ {\scriptsize{{\color{airforceblue} (no prior knowledge)}}}\end{tabular} & $\mathbf{93.1 \pm 0.2}$  &  $\mathbf{91.1 \pm 0.1}$ & $\mathbf{83.9 \pm 0.08}$  &  $\mathbf{77.3 \pm 0.2}$  & $\mathbf{71.8 \pm 0.3}$   & $\mathbf{64.7 \pm 0.3}$ \\
    
    \bottomrule
  \end{tabular}

  \begin{tabular}{l| c|c || c|c || c|c}
    \multicolumn{1}{ c |}{} & \multicolumn{2}{ c ||}{} & \multicolumn{2}{ c ||}{} & \multicolumn{2}{ c}{}  \\ 
    \toprule
    \multicolumn{1}{ c |}{Method/\textbf{Dataset}} & \multicolumn{2}{ c ||}{\textbf{CIFAR-10 asym}} & \multicolumn{2}{ c ||}{\textbf{CIFAR-100 asym}}  & \multicolumn{2}{ c }{\textbf{Tiny Imagenet sym}}  \\ 
    \multicolumn{1}{ l |}{Noise level}    & 20\% & 40\% &  20\% & 40\%  &  20\% & 40\% \\
    \hline
\rule{0pt}{2ex}      \emph{LEC}  & 89.4& 86.5& 58.9&  47.8& -& - \\ [0.5ex]
   \hdashline[1pt/2pt]\noalign{\vskip 0.5ex}
      \begin{tabular}{@{}l@{}}\emph{DisagreeNet}+SL \\ {\color{airforceblue} (no prior knowledge)}\end{tabular} & $\textbf{94.4} \pm \textbf{0.1}$  & $\textbf{91.9} \pm \textbf{0.0}$  & $\textbf{73.9} \pm \textbf{0.5}$ & $\textbf{61.3} \pm \textbf{0.2}$  & $\textbf{62.5} \pm \textbf{0.2}$& $\textbf{55.7} \pm \textbf{0.4}$\\
    
    \bottomrule
  \end{tabular}

  \begin{tabular}{l| c|c }
    \multicolumn{1}{ c |}{} & \multicolumn{2}{ c }{}\\ 
    \toprule
    \multicolumn{1}{ c |}{Method/\textbf{Dataset}} & \multicolumn{2}{ c }{\textbf{animal10N, 8\% noise}}  \\ 
    \multicolumn{1}{ l |}{Noise level}    & noise est & test accuracy\\
    \hline
    \emph{Co-teaching}  & -  & $82.5 \pm 0.1$   \\
    \emph{SELFIE} & -  & $83 \pm 0.1$   \\
   \hdashline[1pt/2pt]\noalign{\vskip 0.5ex}
    \emph{DisagreeNet}+SL {\color{airforceblue} (no prior knowledge)}  & 7.8  & $\mathbf{85.1 \pm 0.1}$ \\

    \bottomrule
  \end{tabular}
  \caption{Test accuracy (\%) comparison with methods that utilize prior knowledge of the real noise level. }

  \label{table:noise-supervised-prior}
\end{table}

\end{document}